\documentclass{article}

\PassOptionsToPackage{numbers}{natbib}

\usepackage[final]{neurips_2025}

\usepackage[utf8]{inputenc} 
\usepackage[T1]{fontenc}    
\usepackage{hyperref}       
\usepackage{url}            
\usepackage{booktabs}       
\usepackage{amsfonts}       
\usepackage{nicefrac}       
\usepackage{microtype}      
\usepackage{xcolor}         
\usepackage{mathtools}
\usepackage{amsthm,amssymb}
\usepackage{amsmath}
\theoremstyle{plain}       
\newtheorem{theorem}{Theorem}
\newtheorem{lemma}{Lemma}

\usepackage[ruled, boxed, noend]{algorithm2e}
\usepackage{cleveref}
\usepackage{dsfont}
\usepackage{mathrsfs}

\SetKwInOut{Input}{Input}
\SetKwInOut{Init}{Initialization}
\DontPrintSemicolon


\let\hat\widehat
\let\tilde\widetilde

\setcitestyle{apa}
\setcitestyle{square}

\usepackage{xcolor}
\usepackage{nicefrac}
\usepackage{microtype}
\usepackage{colortbl}
\definecolor{Lightgray}{RGB}{235,235,235}
\usepackage{xcolor,colortbl}

\definecolor{Gray}{gray}{0.85}
\definecolor{LightCyan}{rgb}{0.88,1,1}
\makeatletter
\let\iftwo\if@twocolumn
\makeatother

\newcommand{\norm}[1]{\lVert#1\rVert}
\newcommand{\abs}[1]{\lvert#1\rvert}
\newcommand{\ABS}[1]{\left\lvert#1\right\rvert}

\newcommand{\bos}{\boldsymbol}
\newcommand{\ind}[1]{\mathds{1}_{\{#1\}}}

\newcommand{\veps}{\varepsilon}

\newcommand{\argmax}{\operatorname*{arg\,max}}
\newcommand{\argmin}{\operatorname*{arg\,min}}
\newcommand{\defeq}{\coloneqq}

\newcommand{\ip}[2]{\left\langle#1,#2\right\rangle}

\newcommand{\EE}{\mathbb{E}}

\newcommand{\RR}{\mathbb{R}}

\newcommand{\VV}{\mathbb{V}}

\newcommand{\cA}{\mathcal{A}}

\newcommand{\cC}{\mathcal{C}}

\newcommand{\cI}{\mathcal{I}}

\newcommand{\cP}{\mathcal{P}}
\newcommand{\cR}{\mathcal{R}}
\newcommand{\cS}{\mathcal{S}}

\newcommand{\intpol}{\pi^{\Delta, *}}
\newcommand{\dpol}{\mathring{\pi}^\Delta}
\newcommand{\pdcoeff}{\hat{\lambda}_t^k}

\title{Near-Optimal Sample Complexity for Online Constrained MDPs}

\author{%
  Chang Liu \\
  University of California, Los Angeles\\
  \texttt{changliu11@ucla.edu} \\
  \And
  Yunfan Li \\
  University of California, Los Angeles\\
  \texttt{yunfanli@ucla.edu} \\
  \AND
  Lin F. Yang \\
  University of California, Los Angeles\\
  \texttt{linyang@ee.ucla.edu} \\
}

\begin{document}

\maketitle

\begin{abstract}
  Safety is a fundamental challenge in reinforcement learning (RL), particularly in real-world applications such as autonomous driving, robotics, and healthcare. To address this, Constrained Markov Decision Processes (CMDPs) are commonly used to enforce safety constraints while optimizing performance. However, existing methods often suffer from significant safety violations or require a high sample complexity to generate near-optimal policies. We address two settings: relaxed feasibility, where small violations are allowed, and strict feasibility, where no violation is allowed. We propose a model-based primal-dual algorithm that balances regret and bounded constraint violations, drawing on techniques from online RL and constrained optimization. For relaxed feasibility, we prove that our algorithm returns an $\varepsilon$-optimal policy with $\varepsilon$-bounded violation with arbitrarily high probability, requiring $\tilde{O}\left(\frac{SAH^3}{\varepsilon^2}\right)$ learning episodes, matching the lower bound for unconstrained MDPs. For strict feasibility, we prove that our algorithm returns an $\varepsilon$-optimal policy with zero violation with arbitrarily high probability, requiring $\tilde{O}\left(\frac{SAH^5}{\varepsilon^2\zeta^2}\right)$ learning episodes, where $\zeta$ is the problem-dependent Slater constant characterizing the size of the feasible region. This result matches the lower bound for learning CMDPs with access to a generative model. 
  Our results demonstrate that learning CMDPs in an online setting is as easy as learning with a generative model and is no more challenging than learning unconstrained MDPs when small violations are allowed.
\end{abstract}

\section{Introduction}






Safety is a fundamental concern in reinforcement learning (RL), especially in real-world applications such as autonomous driving \citep{wang2020safety}, healthcare \citep{vincent2014safety}, and industrial automation \citep{machado2011safe}. Constrained Markov Decision Processes (CMDPs) \citep{CMDP} are widely used to ensure safety by incorporating safe constraints and safe policies into the decision-making process. These frameworks allow for optimizing performance while limiting risky actions in safety-critical environments, such as preventing collisions in autonomous vehicles or ensuring correct treatment in healthcare. However, during the course of training, many RL methods can suffer from significant safety violations \citep{ding2021provably, efroni2020explorationexploitationconstrainedmdps}, particularly when exploring new states or actions. This is highly problematic in real-world systems where even temporary unsafe actions can result in accidents, equipment damage, or hazardous conditions. For instance, in autonomous driving \citep{calo2020generating}, safety violations during training might lead to collisions or dangerous maneuvers, while in robotics \citep{muller2021robot}, such violations could cause physical harm to equipment or workers. As a result, it is crucial to design methods that guarantee bounded safety constraint violations, ensuring that any violations during training remain within acceptable limits, thereby preventing catastrophic outcomes while maintaining safety throughout the learning process.

Tabular episodic MDPs, one of the most fundamental settings in RL, have been extensively studied, with various methods proposed to solve them \citep{azar2017minimax,NEURIPS2018_d3b1fb02,pmlr-v132-domingues21a,pmlr-v134-zhang21b,menard:hal-03289033,10.5555/3540261.3541620,zhang2024settlingsamplecomplexityonline}. Many algorithms have been shown to achieve minimax optimal regret $\tilde{O}(\sqrt{SAH^3K})$ and \cite{zhang2024settlingsamplecomplexityonline} provided a method to achieve the minimax optimal sample complexity $\tilde{O}\left(\frac{SAH^3}{\veps^2}\right)$. Extending to CMDPs, prior works \citep{efroni2020explorationexploitationconstrainedmdps,Liu2021optpess,Bura2022dope,vaswani2022nearoptimal,pmlr-v139-yu21b,10.5555/3600270.3601034,DBLP:conf/aaai/HasanzadeZonuzy21,mondal2024sampleefficient,jiang2024achieving,ghosh2024sample} have proposed methods in various settings that either achieve sublinear regret or return near-optimal algorithms. However, lower bounds and algorithms with provable minimax optimal upper bounds still remain unexplored in most of these works. Two primary feasibility settings in CMDPs are \textbf{relaxed feasibility} and \textbf{strict feasibility} \citep{vaswani2022nearoptimal,10.5555/3600270.3601034,DBLP:conf/aaai/HasanzadeZonuzy21,jiang2024achieving}. In the relaxed feasibility setting, algorithms may return approximately optimal policies that allow small constraint violations. In contrast, in the strict feasibility setting, the learned policies must be near-optimal while ensuring zero constraint violations. The formal formulations for both settings are defined in \cref{sec:setup}. \cite{vaswani2022nearoptimal} proposed a model-based algorithm that achieves minimax sample complexity in both regimes, but relied on the assumption of a generative model that grants the learner random access to any state-action queries. However, in many real-world scenarios such as robotics and autonomous driving, the learning agent has to collect data online through sequential interactions with the environment. How or whether a minimax optimal sample complexity can be achieved in the online setting, which generalizes the random access setting and is inherently more challenging \cite{pmlr-v167-yin22a}, remains unknown. This raises the following question:

\emph{Can we design safe \textbf{online} reinforcement learning (RL) algorithms for both \textbf{relaxed and strict feasibility} settings that achieve \textbf{near-optimal sample efficiency} in large-scale state spaces and long horizons while guaranteeing \textbf{approximate reward optimality} with arbitrarily high probability?}


In this paper, we affirmatively answer this question by proposing a model-based primal-dual algorithm. Our contributions are summarized as follows:
\begin{itemize}
    \item We propose a model-based primal-dual algorithm with doubling batch updates. In each episode, a policy is derived by mixing policies from multiple iterations of primal-dual updates in a constrained model represented in its Lagrangian form. This policy is then used to gather data to update the empirical transition model. The doubling batch update technique ensures a lazy update of the empirical transition model, allowing us to derive tighter theoretical bounds.
    
    \item For the relaxed feasibility setting, we prove that our algorithm returns an $\veps$-optimal policy with at most $\veps$ constraint violation after \( \tilde{O}\left(\frac{SAH^3}{\veps^2}\right) \) of online learning episodes, where $S$ and $A$ is the number of states and actions, $H$ is the horizon. This sample complexity matches the lower bound for unconstrained MDPs: $\Omega\left(\frac{SAH^3}{\veps^2}\right)$ \citep{pmlr-v132-domingues21a}, showing that learning CMDPs in the relaxed feasibility setting is as easy as learning unconstrained MDPs.

    \item For the strict feasibility setting, we prove that our algorithm returns an $\veps$-optimal zero-violation policy after \( \tilde{O}\left(\frac{SAH^5}{\veps^2\zeta^2}\right) \) of online learning episodes, where $\zeta$ is the problem-dependent Slater constant characterizing the size of the feasible region. This sample complexity matches the lower bound for infinite-horizon discounted CMDPs with access to a generative model: $\Omega\left(\frac{SA}{(1-\gamma)^5\veps^2\zeta^2}\right)$ \citep{vaswani2022nearoptimal}, showing that learning CMDPs with online access is as easy as learning CMDPs with random access.

\end{itemize}

\paragraph{Notation.}
We introduce a set of notation to be used throughout. For any two vectors $x, y \in\RR^d$ with the same dimension $d$, we use $x y$ to abbreviate the inner product $x^{\top} y$, e.g. $P_{s,a,h}V_{h+1,r}^* = \sum_{s'}P_{s,a,h}(s')V_{h+1,r}^*(s')$. For any integer $S>0$, any probability vector $p \in \Delta_{[S]}$ and another vector $v=\left[v_i\right]_{1 \leq i \leq S}$, we denote by $\mathbb{V}(p, v):=\left\langle p, v^2\right\rangle-(\langle p, v\rangle)^2$
the associated variance, where $v^2=\left[v_i^2\right]_{1 \leq i \leq S}$ represents element-wise square of $v$.

\section{Related Work}
\paragraph{Constrained Markov Decision Process (CMDP)}

The Constrained Markov Decision Process (CMDP) \citep{CMDP} is a key model for addressing safety concerns in reinforcement learning (RL). Many existing works on CMDPs employ a primal-dual approach to achieve sublinear regret while maintaining bounded constraint violations \citep{vaswani2022nearoptimal,jain2022towards,Paternain,Ding2020ProvablyES,10.1145/3392157,NEURIPS2020_5f7695de}. Another widely-used method is adapting policy gradient algorithms \citep{10.5555/3305381.3305384,tessler2018reward,10.5555/3524938.3525785, tian2024confident}. Furthermore, \cite{efroni2020explorationexploitationconstrainedmdps} introduces a more stringent metric for hard constraint violation, where only positive constraint violations are accumulated. Their approach achieves sublinear regret, constraint violations and hard constraint violation. Recently, \cite{pmlr-v238-ghosh24a} extended this idea to a linear setting, obtaining similar results. In practical applications, ensuring strict adherence to safety constraints without violations often requires system-specific assumptions. For instance, \cite{10.5555/3524938.3525846} assumes regularity in the safety functions, while \cite{amani2021safe} presumes knowledge of a safe action for each state. Additionally, \cite{Liu2021optpess, Bura2022dope} assume the existence of a known safe policy and its true constraint value, achieving improved regret bounds and constraint violations compared to \cite{efroni2020explorationexploitationconstrainedmdps}. Building on the assumption of a known safe policy and its true constraint value, our work proposes a primal-dual low-switching algorithm, leveraging advanced techniques from standard MDPs. This approach not only improves the regret bound but also maintains a constant constraint violation. \cite{vaswani2022nearoptimal} solved the infinite-horizon discounted tabular CMDPs in the random access setting. Our work extends their results to tabular CMDPs in the online setting, which serve as a crucial benchmark for safe RL that provides a structured environment for theoretical analysis and algorithmic validation. Further extension to function approximation or multiple constraints is out of the scope of this paper and an interesting topic for future research.

\paragraph{Episodic unconstrained MDPs}

\cite{auer2008near} first provided an upper bound of $O(\sqrt{S^2AKHD^2})$. \cite{dann2015sample,pmlr-v132-domingues21a} established lower bounds of $\Omega(\sqrt{SAH^3K})$ for regret and $\Omega(SAH^3/\veps^2)$ for sample complexity. \cite{osband2017posterior} introduced a posterior sampling approach for RL, achieving minimax-optimal regret bounds of $O(\sqrt{SAHK})$ under certain conditions. \cite{azar2017minimax} further achieved a minimax optimal regret for a model-based algorithm, and \cite{jin2018q} developed similar results for a model-free Q-learning method. Many methods have achieved near-optimal upper bounds \citep{pmlr-v134-zhang21b,menard:hal-03289033,10.5555/3540261.3541620}. Recently, \cite{zhang2024settlingsamplecomplexityonline} achieved minimax optimal regret, sample complexity, and burn-in cost, yet such methods are rarely explored in CMDPs. To our knowledge, this work is the first to incorporate state-of-the-art techniques for unconstrained MDPs into CMDPs, achieving minimax optimal performance in online constrained RL.



\section{Problem Setup}\label{sec:setup}
We consider a finite-horizon non-stationary constrained Markov Decision Process (MDP) defined by the tuple $M = (\mathcal{S}, \mathcal{A}, H, P, r, c, b)$, where $\mathcal{S}$ is the state space, $\mathcal{A}$ is the action space, and $H$ is the horizon length. The unknown transition probability at each time step is denoted by $P_{s,a,h}$, where $P_{s,a,h}(s')$ represents the probability of transitioning to state $s'$ from state $s$ after taking action $a$ at time step $h$. The reward function $r_h: \mathcal{S} \times \mathcal{A} \to [0, 1]$ quantifies the immediate reward the agent receives for taking action $a$ in state $s$ at time step $h$. Similarly, the cost function $c_h: \mathcal{S} \times \mathcal{A} \to [0, 1]$ represents safety violations incurred for the same action. We assume that both the reward and cost functions are known to the agent, though the same results can be easily extended to the case where neither function is known. Finally, $b \in (0, H]$ is a predefined safety constraint that limits the cumulative cost over the episode. 

The agent interacts with the environment over $K$ episodes, each consisting of $H$ steps. At the start of each episode $k$, the agent selects a randomized policy $\pi^k = \{\pi^k_h\}_h$, where at time step $h$ the policy $\pi_h^k:\cS \to \Delta_\cA$ prescribes a distribution over actions conditioned on the current state. The policy is executed with the goal of maximizing the cumulative reward while ensuring that the cumulative cost remains within the safety limit. The cumulative value at state $s$ and time step $h$, with respect to any function $g: \mathcal{S} \times \mathcal{A} \to \mathbb{R}$, under policy $\pi$, is defined as: $V_{h,g}^\pi(s) = \mathbb{E}_{P, \pi} \left[ \sum_{t=h}^{H} g(S_t, A_t) \middle| S_h = s \right]$, 
representing the expected cumulative sum of $g(S_t, A_t)$ from time step $h$ to the end of the episode, given that the process starts in state $s$ at time $h$. The objective of CMDP is to solve the following constrained optimization problem:
\begin{equation}\label{eq:CMDP}
    \max_{\pi} V_{1,r}^\pi(s_1) \quad \text{s.t.} \quad V_{1,c}^\pi(s_1) \leq b,
\end{equation}
where $V_{1,r}^\pi(s_1)$ is the expected cumulative reward value, and $V_{1,c}^\pi(s_1)$ is the expected cumulative cost value, constrained by the safety threshold $b$. The optimal policy $\pi^*$ solving \cref{eq:CMDP} is defined as
\begin{equation*}
    \pi^* = \argmax_\pi V_{1,r}^\pi(s_1) \quad \text{s.t.} \quad V_{1,c}^\pi(s_1) \leq b.
\end{equation*}
The corresponding expected reward and cost value are denoted by $V_{1,r}^{*}(s_1)$ and $V_{1,c}^{*}(s_1)$. For a CMDP instance, we define the Slater constant $\zeta \coloneqq \max_\pi b - V_{1,c}^\pi(s_1)$. The Slater constant $\zeta$ measures the size of the feasible region. We also define the policy that with the minimal cost value: $\pi_c^*\in\argmax_\pi b - V_{1,c}^\pi(s_1)$.

We now define the regret and constraint violation over $K$ episodes:
\begin{equation}\label{eq:regret}
    \textrm{Regret}(K) \coloneq \sum_{k=1}^K\left(V_{1,r}^{*}(s_1^k) - V_{1,r}^{\pi^k}(s_1^k)\right) \quad \textrm{and} \quad
    \textrm{CV}(K) \coloneq \left(\sum_{k=1}^K\left(V_{1,c}^{\pi^k}(s_1^k) - b\right)\right)_+.
\end{equation}
We will provide upper bounds on both regret and constraint violation of our algorithm, but more importantly, convert such bounds to our final sample complexity bounds. For a small suboptimality error $\veps$, we define the two settings as follows:
\paragraph{Relaxed feasibility:} The agent's objective is to return a policy that with high probability achieves an approximately optimal reward value, while violates the constraint by a small margin. Formally the agent returns a policy $\pi$ such that with high probability,
\begin{equation}\label{eq:pac-relax}
    V_{1,r}^\pi(s_1) \ge V_{1,r}^*(s_1) - \veps, \quad \text{and } \quad V_{1,c}^\pi(s_1) \le b + \veps.
\end{equation}
\paragraph{Strict feasibility:} The agent's objective is to return a policy that with high probability achieves an approximately optimal reward value, and simultaneously achieves zero constraint violation. Formally the agent returns a policy $\pi$ such that with high probability,
\begin{equation}\label{eq:pac-strict}
    V_{1,r}^\pi(s_1) \ge V_{1,r}^*(s_1) - \veps, \quad \text{and } \quad V_{1,c}^\pi(s_1) \le b.
\end{equation}
In the next section, we present our technical methodology to tackle both settings in online CMDPs.

\section{Methodology}
Before introducing our method, we examine the state-of-the-art algorithms in tabular online CMDPs, and discuss why their methods fail to achieve near-optimal sample complexity.


One of the biggest challenges in analyzing model-based algorithms in unconstrained MDPs and CMDPs is to decouple the correlation between the empirical model $\hat{P}$ and the estimate values $\hat{V}^{\pi^k}$. Indeed, both \cite{Liu2021optpess} and \cite{Bura2022dope} get the extra regret terms due to loose decoupling steps. In particular, \cite{Bura2022dope} achieve a regret of $\tilde{O}(\sqrt{S^2AH^6K}/(b - c^0))$, where $c^0$ is the value of a known safe policy. In both their regret analysis, the authors bound the term  $\hat{V}_{1,r}^{\pi^k}(s_1) - V_{1,r}^{\pi^k}(s_1)$ by
\begin{equation*}
    \begin{aligned}
        \hat{V}_{1,r}^{\pi^k}(s_1) - V_{1,r}^{\pi^k}(s_1) & = \sum_{h=1}^H\EE_{\pi^k,P}\left[\sum_{s'}\left(\hat{P}_{s_h^k,a_h^k,h}(s') - P_{s_h^k,a_h^k,h}(s')\right)\hat{V}_{h+1,r}^{\pi^k}(s')\right] \\
        & \le \sum_{h=1}^H\EE_{\pi^k,P}\left[\sum_{s'}\left\lvert \hat{P}_{s_h^k,a_h^k,h}(s') - P_{s_h^k,a_h^k,h}(s')\right\rvert H\right],
    \end{aligned}
\end{equation*}
where the equality follows from value difference lemma and the inequality follows from H\"older's inequality and $\hat{V}\le H$. Then they invoke Bernstein's inequality on the concentration bound of $\hat{P}$. This technique is also known to be used in unconstrained MDPs \cite{jaksch2010near} and gives a regret of $\tilde{O}(\sqrt{S^2AH^4K})$, which is suboptimal compared to the lower bound $\Omega(\sqrt{SAH^3K})$. Following \cite{jaksch2010near}, subsequent works \cite{azar2017minimax,zhang2024settlingsamplecomplexityonline} introduced different techniques to achieve near-optimal regret, but neither method extends directly to CMDPs.

In the regret analysis of \cite{azar2017minimax}, the authors bound $\hat{V}_{h,\tilde{r}}^{\pi^k}(s_h^k) - V_{h,r}^{\pi^k}(s_h^k)$ by writing it in a recursive form. One of the key steps is to bound the following term as:
\begin{equation*}
\begin{aligned}
    \left(\hat{P}_{s,a,h} - P_{s,a,h}\right)\left(\hat{V}_{h+1,\tilde{r}}^{\pi^k} - V_{h+1,r}^*\right) \le \sum_{s'}\left\lvert \hat{P}_{s,a,h}(s') - P_{s,a,h}(s') \right\rvert\cdot \left(\hat{V}_{h+1,\tilde{r}}^{\pi^k}(s') - V_{h+1,r}^{\pi^k}(s')\right),
\end{aligned}
\end{equation*}
where the inequality follows from the optimism of $\hat{V}_{h,\tilde{r}}^{\pi^k}$ and optimality of $V_{h,r}^*$:
    $\hat{V}_{h,\tilde{r}}^{\pi^k}(s) \ge V_{h,r}^*(s) \ge V_{h,r}^{\pi^k}(s), \forall h\in [H], k\in [K], s\in\cS.$
This argument fails in CMDPs, because the optimal policy $\pi^*$ in CMDPs is not necessarily optimal for all $h\in[H]$.

The high-level idea of \cite{zhang2024settlingsamplecomplexityonline} to decouple the correlation between $\hat{P}$ and $\hat{V}$ is that, when fixing a ``profile'' that keeps track of visitation counts for all $(s,a,h)$ tuples, $\hat{V}_{h+1}$ is independent of $\hat{P}_h$. A union bound over all possible profiles then gives the desired regret bound. To avoid the number of such profiles being exponential in $K$, the authors adopt the double batch updates explained later. Although decoupling arises naturally in unconstrained MDPs through backward updates, it is not the case in constrained MDPs. If the policy $\pi$ is determined by enforcing the constraint $\hat{V}_{1,c}^\pi\le b$, then $\pi$ and thus $\hat{V}^\pi_{h+1}$ will be inherently correlated to all $\hat{P}_h$'s.

To tackle these challenges, we use a model-based approach to address the CMDP problem defined in \cref{eq:CMDP}. Adopting the doubling batch updates, we update our empirical transition matrix only when the visitation count of any state-action pair doubles. To be specific, we denote $\bar{N}_h(s,a)$ as the total visitation count of state-action pair $(s,a)$ in time step $h$, $N_h(s,a,s')$ as the count of transitions from $(s,a)$ to $s'$ since the last update, and $N_h(s,a) = \sum_{s'}N_h(s,a,s')$ as the visitation count of $(s,a)$ since the last update. We update an empirical transition matrix $\hat{P}$ whenever $\bar{N}_h(s,a)$ for any $(s,a)$ doubles, such that $\hat{P}_{s,a,h}(s') = \frac{N_h(s,a,s')}{N_h(s,a)}$. Note that we will only use the data collected after the last update to calculate $\hat{P}$. With $\hat{P}$, we are able to formulate an empirical CMDP. 

We adopt a UCB-style bonus for both reward and cost. For any reward function $g$ and policy $\pi$, we define the bonus for a $(s,a,h,k)$ tuple as
\begin{equation}\label{eq:bonus}
    b_{h,g}^{k,\pi}(s,a) = c_1\sqrt{\frac{\VV(\hat{P}_{s,a,h}, \hat{V}_{h+1,g}^\pi)\log(1/\delta')}{N_h(s,a)}} + c_2\frac{H\log(1/\delta')}{N_h(s,a)},
\end{equation}
where $c_1$ and $c_2$ are constant to be specified later and $\delta' = \delta/(200SAH^2K^2)$ is related to the confidence level $\delta$. For reward, we add this Bernstein-style bonus $b_{h,r}(s,a)$ to $r_h(s,a)$ for each $(s,a)$ to encourage exploration. We denote the optimistically biased reward estimate as $\tilde{r}$, i.e., $\tilde{r}_h(s,a) = r_h(s,a) + b_{h,r}(s,a)$. For safety cost, we subtract a Bernstein-style bonus $b_{h,c}(s,a)$ from $c_h(s,a)$. We denote the optimistically biased cost estimate by $\underline{c}$, i.e., $\underline{c}_h(s,a) = c_h(s,a) -  b_{h,c}(s,a)$. By using the optimistically biased cost estimate we will underestimate the cumulative cost. To compensate for this and strive to satisfy the safety constraint in the strict feasibility setting, we define a pessimistic constraint constant $b'$ for each episode by subtracting a gap $\Delta$ from $b$, that is, $b' \coloneq b - \Delta$. Whereas for the relaxed feasibility setting, since small violations are allowed, we define an optimistic constraint constant $b'\coloneq b + \tau$. This non-negative optimistic shift $\tau$ is also required to derive $\zeta$-free results for the relaxed feasibility setting. We will specify the value of $\Delta$ and $\tau$ later.

We now introduce an empirical CMDP for each episode $k$, defined by $\hat{M}_k = (\cS, \cA, H, \hat{P}, \tilde{r}, \underline{c}, b')$, and the corresponding optimization problem $\hat{\cP}^k$:
\begin{equation}\label{eq:emp_CMDP}
    \max_\pi \hat{V}_{1,\tilde{r}}^\pi(s_1) \quad \textrm{s.t.} \quad \hat{V}_{1,\underline{c}}^\pi(s_1) \le b'.
\end{equation}
As discussed before, directly solving \cref{eq:emp_CMDP} results in complex correlation of $\hat{P}$ and $\hat{V}^{\pi}$, leading to suboptimal results. Therefore, we employ a primal-dual approach, which transforms the constrained optimization problem into a saddle-point problem. Let \( \lambda \geq 0 \) be the dual variable associated with the cost constraint. 
The equivalent saddle-point problem to \cref{eq:emp_CMDP} is:
\begin{equation}\label{eq:saddle}
    \min_{\lambda \geq 0} \max_\pi \hat{V}_{1, \tilde{r}}^\pi(s_1) - \lambda \left( \hat{V}_{1, \underline{c}}^\pi(s_1) - b' \right),
\end{equation}
and we denote $(\hat{\pi}^{k,*}, \hat{\lambda}^{k,*})$ as the optimal solutions to the saddle point problem \cref{eq:saddle}. 

We solve the saddle-point problem \cref{eq:saddle} iteratively, and for each iteration $t\in [T]$, we alternatively update iterates of the primal variable $\hat{\pi}_t^k$ and the dual variable $\hat{\lambda}_{t+1}^k$. The primal update is
\begin{equation}\label{eq:primal-update}
    \hat{\pi}_t^k = \argmax_\pi \hat{V}_{1,\tilde{r}}^\pi(s_1) - \hat{\lambda}_t^k\left(\hat{V}_{1,\underline{c}}^\pi(s_1) - b'\right) = \argmax_\pi \hat{V}_{1,\tilde{r} - \hat{\lambda}_t^k\underline{c}}^\pi(s_1),
\end{equation}
where the last equality follows from \cref{lem:primal-update}. By augmenting rewards with the Lagrange-weighted costs, the policy $\hat{\pi}_t^k$ can be solved with backward updates, allowing us to decouple $\hat{V}^\pi_{h+1}$ from $\hat{P}_h$. However, the value estimates $\hat{V}^{\pi}$ now depend on the dual variable $\hat{\lambda}^k$ that takes continuous values. To retain tractable union bounds in the analysis, we must restrict $\hat{\lambda}$ to a discrete set. Specifically, we update the dual variable by performing a single gradient descent step with step size $\eta$. We then discretize the values by rounding the gradient descent result to the nearest element in an $\veps$-net $\Lambda = \{0, \veps_1, 2\veps_1, \dots, U\}$. The resulting dual update is 
\begin{equation}\label{eq:dual-update}
    \hat{\lambda}_{t+1}^k = \cR_\Lambda\left[ \hat{\lambda}_t^k + \eta \left( \hat{V}_{1, \underline{c}}^{\hat{\pi}_t^k}(s_1) - b' \right) \right],
\end{equation}
where $\cR_{\Lambda}(\lambda) = \argmin_{p\in\Lambda}\abs{p - \lambda}$ is a rounding function. 
\begin{algorithm}[ht]
\caption{Model-based algorithm for online CMDP}\label{algo}
        \KwIn{$\cS, \cA, H, K, r, c, \zeta, b', c_1 = 460/9, c_2 = 544/9$, $\eta = \frac{U}{H\sqrt{T}}$, $T$, $\veps_1$, $U$.}
        \Init{for all $k\in [K]$, $\hat{\lambda}_1^k \leftarrow 0$, for all $(s, a, s', h)$, set $N_h(s, a, s')\leftarrow 0$, $\bar{N}_h(s, a, s')\leftarrow 0$, $N_h(s, a)\leftarrow 0$; for all $\pi$, set $\hat{V}_{h,\underline{c}}^{\pi}(s)\leftarrow 0$, $\hat{V}_{h,\tilde{r}}^{\pi}(s)\leftarrow H$.}
        \For{$k=1,\cdots,K$}{
            \For{$t=1,\cdots,T$}{\label{line:pd-start}
                $\hat{\pi}_t^k = \argmax_\pi \hat{V}^\pi_{1,\tilde{r}}(s_1^k) - \hat{\lambda}^k_t\hat{V}^\pi_{1,\underline{c}}(s_1^k)$\;
                $\hat{\lambda}^k_{t+1} = \cR_{\Lambda}[\hat{\lambda}^k_t - \eta(b' - \hat{V}^{\hat{\pi}^k_t}_{1,\underline{c}}(s_1^k))]$
            }\label{line:pd-end}
            $\pi^k = \frac{1}{T}\sum_{t=1}^T\hat{\pi}^k_t$\;
            \For{$h = 1,\cdots,H$}{
                Observe $s_h^k$, take action $a_h^k\sim\pi_h^k(\cdot|s_h^k)$, receive $r_h^k$, $c_h^k$, observe $s_{h+1}^k$\;
                $(s,a,s')\leftarrow s_h^k, a_h^k, s_{h+1}^k$\;
                $\bar{N_h}(s,a)\leftarrow \bar{N_h}(s,a)+1$, $N_h(s,a,s')\leftarrow N_h(s,a,s')+1$\;
                \If{$\bar{N}_h(s,a)\in\{1,2,4,\cdots,2^{\log_2 K}\}$}{
                    $N_h(s,a)\leftarrow \sum_{\tilde{s}}N_h(s,a,\tilde{s})$\;
                    $\hat{P}_{s,a,h}(\tilde{s})\leftarrow N_h(s,a,\tilde{s})/N_h(s,a)$\;
                    TRIGGERED $\leftarrow$ TRUE\; 
                    $N_h(s,a,\cdot)\leftarrow 0$\;
                }
            }
            \If{TRIGGERED}{
                TRIGGERED $\leftarrow$ FALSE\;
                $\hat{V}^{\pi}_{H+1, g}(s) \leftarrow 0, \forall x\in\cS$\;
                \For{$h = H,H-1,\cdots,1$}{
                    \For{$(s,a)\in\cS\times\cA$ and any $\pi$}{
                        $\hat{Q}^\pi_{h,\tilde{r}}(s,a) = \min\{r_h(s,a) + b_{h,r}^{k,t,\pi}(s,a) + \hat{P}_{s,a,h}\hat{V}^\pi_{h+1,\tilde{r}}, H\}$\;
                        $\hat{V}^\pi_{h,\tilde{r}}(s) = \sum_{a\in\cA}\pi(a|s)\hat{Q}^\pi_{h,\tilde{r}}(s,a)$\;
                        $\hat{Q}^\pi_{h,\underline{c}}(s,a) = \max\{c_h(s,a) - b_{h,c}^{k,t,\pi}(s,a) + \hat{P}_{s,a,h}\hat{V}^\pi_{h+1,\underline{c}}, 0\}$\;
                        $\hat{V}^\pi_{h,\underline{c}}(s) = \sum_{a\in\cA}\pi(a|s)\hat{Q}^\pi_{h,\underline{c}}(s,a)$\;
                    }
                }
            }
        }
        \KwRet $\bar{\pi} = \frac{1}{K}\sum_{k=1}^K\pi^k$\;
\end{algorithm}

Finally, we state our algorithm in \cref{algo}. For each episode, we first execute $T$ iterations of primal and dual updates. The agent executes the mixture policy $\pi^k$ and receives the reward, cost, and the next state. We update the empirical model using the doubling technique as described before. In the end, the algorithm outputs the mixture policy $\bar{\pi}$ that mixes all intermediate policies.




\section{Main Results and Analysis}
We first present the regret and constraint violation bounds of our algorithm and proofs, while we leave intermediate lemmas and proofs used to support the main results in the appendix.
\subsection{Regret and constraint violation results}
\begin{theorem}[Regret bound of \cref{algo} for relaxed feasibility]\label{thm:reg-bound-relaxed}
    Let $b' = b + \tau$ in \cref{algo} for some $\tau > 0$. With probability at least $1-\delta$, the regret of \cref{algo} is 
    \begin{equation*}
        \textrm{Regret}(K) = \tilde{O}\left(\sqrt{SAH^3K} + 2\veps_1 HK \sqrt{T} + \frac{H^2K}{\tau\sqrt{T}} \right).
    \end{equation*}
\end{theorem}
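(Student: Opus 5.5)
We split the per-episode regret through the empirical Lagrangian problem. For each episode $k$,
\begin{equation*}
V_{1,r}^{*}(s_1) - V_{1,r}^{\pi^k}(s_1) = \underbrace{V_{1,r}^{*}(s_1) - \frac{1}{T}\sum_{t=1}^T \hat{V}_{1,\tilde{r}}^{\hat{\pi}_t^k}(s_1)}_{\text{(I): optimism + primal-dual error}} + \underbrace{\frac{1}{T}\sum_{t=1}^T\left(\hat{V}_{1,\tilde{r}}^{\hat{\pi}_t^k}(s_1) - V_{1,r}^{\hat{\pi}_t^k}(s_1)\right)}_{\text{(II): estimation error}},
\end{equation*}
using that $V^{\pi^k}_{1,r}=\frac1T\sum_t V^{\hat\pi^k_t}_{1,r}$ for the mixture policy.

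\textbf{Term (I).} On the high-probability event where the Bernstein bonuses are valid, we have $\hat{V}_{1,\tilde{r}}^{\pi^*}\ge V_{1,r}^{*}$ and $\hat{V}_{1,\underline{c}}^{\pi^*}\le V_{1,c}^{\pi^*}\le b< b'$. Validity of the bonuses follows by fixing a doubling-batch profile and a $\lambda\in\Lambda$, so that $\hat V_{h+1}$ is independent of $\hat P_h$, and then taking a union bound over profiles and over $\Lambda$, as in the decoupling discussion. Hence $\pi^*$ is feasible for $\hat{\cP}^k$, so the empirical optimum satisfies $\hat{V}^{\hat\pi^{k,*}}_{1,\tilde r}\ge V^*_{1,r}$.

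We then run the standard projected-subgradient analysis for the dual. Since $\hat\pi_t^k$ maximizes the Lagrangian at $\hat\lambda^k_t$,
\[
\hat{V}_{1,\tilde{r}}^{\hat{\pi}_t^k} - \hat\lambda_t^k(\hat{V}_{1,\underline c}^{\hat\pi_t^k}-b') \ge \hat V_{1,\tilde r}^{\pi^*} - \hat\lambda^k_t(\hat V^{\pi^*}_{1,\underline c}-b') \ge V^*_{1,r}.
\]
Therefore $(\mathrm{I}) \le \frac1T\sum_t \hat\lambda_t^k(b'-\hat V^{\hat\pi^k_t}_{1,\underline c})$, and comparing against the fixed multiplier $0$ bounds this. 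Expanding $(\hat\lambda_{t+1}-0)^2$ with step size $\eta=U/(H\sqrt T)$ and gradients bounded by $H$ gives
\[
\frac1T\sum_t\hat\lambda_t(b'-\hat V_{\underline c}) \lesssim \frac{U^2}{\eta T}+\eta H^2 + \text{(rounding)}.
\]
The rounding error of $\cR_\Lambda$ is at most $\veps_1$ per step. It enters the potential through a cross term of size $\veps_1 U/\eta = \veps_1 H\sqrt T$ per iteration-average, which yields the $2\veps_1HK\sqrt T$ term after summing over $k$.

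To obtain the $H^2/(\tau\sqrt T)$ form, we choose $U$ from $\tau$. Slater holds for $\hat{\cP}^k$ with slack $\tau$ because $\pi^*$ gives $\hat V_{\underline c}\le b=b'-\tau$. This bounds the optimal dual variable by $\hat\lambda^{k,*}\le H/\tau$, so we take $U\asymp H/\tau$. The $O(UH/\sqrt T)$ optimization error then becomes $H^2/(\tau\sqrt T)$ per episode, i.e. $H^2K/(\tau\sqrt T)$ in total.

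\textbf{Term (II).} This is the main obstacle. We must show $\sum_k\frac1T\sum_t(\hat V^{\hat\pi^k_t}_{1,\tilde r}-V^{\hat\pi^k_t}_{1,r}) = \tilde O(\sqrt{SAH^3K})$ without the lower-order $H$ losses of prior work. We use the value-difference lemma, writing the term as $\EE_{\hat\pi^k_t,P}\sum_h[b_{h,r}+(\hat P_h-P_h)\hat V^{\hat\pi_t^k}_{h+1,\tilde r}]$. For each $t$, the policy $\hat\pi_t^k$ is greedy for the reward $\tilde r-\hat\lambda^k_t\underline c$ via backward induction with $\hat\lambda^k_t\in\Lambda$. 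On a fixed profile and fixed $\lambda$, $\hat V_{h+1}$ is therefore independent of $\hat P_h$, so Bernstein gives $(\hat P-P)\hat V\lesssim\sqrt{\VV(P,\hat V)\iota/N}+H\iota/N$. The union-bound cost is only $\log(|\Lambda|\cdot\#\text{profiles})$, which is logarithmic thanks to the doubling batches.

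We then follow the argument of \cite{zhang2024settlingsamplecomplexityonline}. First we replace $\VV(\hat P,\hat V)$ by $\VV(P,V^{\pi})$ plus a recursively controlled correction. Next we apply the law of total variance, $\sum_h\EE\,\VV(P_h,V^{\pi}_{h+1})\le H^2$, and Cauchy–Schwarz over visitation counts. Doubling means each effective count is at least half the true count, and pigeonhole gives $\sum_{k,h}1/N\lesssim SAH\log K$. Together these yield $\sqrt{SAH^3K\iota}$ plus lower-order $SAH^2\iota$ terms.

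Because the played trajectory follows the mixture $\pi^k$ rather than each $\hat\pi^k_t$, we average over $t$ before converting expectations to realized sums. This gives a martingale difference bounded by Azuma/Freedman at cost $\tilde O(\sqrt{H^3K})$. Summing (I) and (II) over $k$ gives the claimed bound.
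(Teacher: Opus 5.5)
Your Term (I) is correct and is essentially the paper's argument, namely \cref{lem:opt-error} combined with \cref{lem:bound-dual-regret} at $\lambda=0$. Comparing directly with $\pi^*$ instead of the proxy $\pi^{\tau,*}$ is fine in the relaxed setting, because $\hat V^{\pi^*}_{1,\underline c}\le b=b'-\tau$ and $\hat\lambda^k_t\ge 0$, and $U\asymp H/\tau$ then gives the stated form.

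\textbf{The gap is in Term (II),} exactly at the step you flag as the main obstacle. Your claim that the union-bound cost $\log(|\Lambda|\cdot\#\text{profiles})$ is logarithmic is false. The doubling batches only stop the number of total profiles from being exponential in $K$. It remains exponential in $SAH$: roughly $(4SAHK)^{2SAH}$ up to $\log K$ factors, by Lemma 5 of \cite{zhang2024settlingsamplecomplexityonline}. Hence $\iota=\log(\#\text{profiles}/\delta)$ is $\tilde\Theta(SAH)$; this is the factor $6SAH\log_2^2K$ that appears in \cref{lem:keylemma}.

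Plug such an $\iota$ into a per-$(s,a,h)$ Bernstein bound. Then apply Cauchy--Schwarz with $\sum_{k,h}1/N\lesssim SAH\log K$ and $\sum_{k,h}\VV\lesssim H^2K$. You get $\sqrt{SAH\log K\cdot\iota\cdot H^2K}=\tilde O(SAH^2\sqrt K)$, plus $\tilde O(S^2A^2H^3)$. That is a factor $\sqrt{SAH}$ above the target.

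The missing idea is the aggregation in \cref{lem:keylemma} (Lemma 6 of \cite{zhang2024settlingsamplecomplexityonline}). For each batch index $l$, bound the entire sum $\sum_{s,a,h}\langle\hat P^{(l)}_{s,a,h}-P_{s,a,h},X_{h+1,s,a}\rangle$ with a single Freedman inequality, treating it as a martingale run backward in $h$. Each $X_{h+1,s,a}$ is drawn from a set of at most $K+1$ vectors determined by the profile, $\lambda$, and the models at steps after $h$. The $\tilde O(SAH)$ union-bound price is then paid once, and it multiplies only the aggregate variance $2^{-(l-2)}\sum_{s,a,h}\VV(P_{s,a,h},X_{h+1,s,a})$. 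Cauchy--Schwarz over the $SAH$ triples would cost that much anyway, so nothing is lost.

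This lemma is applied to realized-trajectory sums $\sum_{k,h}\langle\hat P_{s_h^k,a_h^k,h}-P_{s_h^k,a_h^k,h},\hat V_{h+1}\rangle$, regrouped by $(s,a,h)$ and batch. That is why the paper expands $\hat V^{\pi^k}_{1,\tilde r}(s_1^k)-V^{\pi^k}_{1,r}(s_1^k)$ along the observed trajectory (\cref{lem:old-regret,lem:PhatVhatbound}). You instead bound per-state terms inside $\EE_{\hat\pi^k_t,P}$, where this aggregation is not available.

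Two smaller corrections:
\begin{enumerate}
\item Optimism for the fixed policy $\pi^*$, for both reward and cost, needs no profile union bound. \cref{lem:optimism} uses that $f(p,v,n)$ is non-decreasing in $v$. This reduces the claim to concentration of $\langle\hat P_{s,a,h}-P_{s,a,h},V^{\pi^*}_{h+1}\rangle$ for the deterministic vector $V^{\pi^*}_{h+1}$. Justified your way, bonuses scaled by $\log(1/\delta')$ would not be large enough.
\item Replacing $\VV(\hat P,\hat V)$ by $\VV(P,V^\pi)$ plus a ``recursively controlled correction'' needs a sign on $\hat V^{\hat\pi^k_t}_{h+1,\tilde r}-V^{\hat\pi^k_t}_{h+1,r}$. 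In the usual self-bounding recursion that sign comes from optimism for the data-dependent $\hat\pi^k_t$, which \cref{lem:optimism} does not provide. The paper instead bounds $\sum_{k,h}\VV(\hat P^k,\hat V_{h+1})$ directly through the telescoping argument of \cref{lem:variancebound}, which needs no sign information.
\end{enumerate}
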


\begin{proof}
    Recall the definition of regret: $\textrm{Regret}(K) = \sum_{k=1}^K\left(V_{1,r}^*(s_1^k) - V_{1,r}^{\pi^k}(s_1^k)\right)$. Note that $\pi^*$ is the optimal solution to the original CMDP optimization problem \cref{eq:CMDP}, while for each episode $\pi^k$ is an approximation solution to the empirical CMDP optimization problem \cref{eq:emp_CMDP}. To cope with the gap between the two policies, we introduce a proxy policy $\pi^{\tau,*}$ defined as the optimal solution to the following optimization problem
    \begin{equation}\label{eq:defproxypol-relax}
        \pi^{\tau,*} \in \argmax_\pi V_{1,r}^\pi(s_1^k), \quad \text{s.t.} \quad V_{1,c}^\pi(s_1^k) \le b' = b + \tau.
    \end{equation}
    We decompose the regret as
    \begin{align*}
        \textrm{Regret}(K) = & \sum_{k=1}^K\left(V_{1,r}^*(s_1^k) - V_{1,r}^{\pi^{\tau,*}}(s_1^k)\right) + \sum_{k=1}^K\left(V_{1,r}^{\pi^{\tau,*}}(s_1^k) - \hat{V}_{1,\tilde{r}}^{\pi^{\tau,*}}(s_1^k)\right) \\
        & + \sum_{k=1}^K \left(\hat{V}_{1,\tilde{r}}^{\pi^{\tau,*}}(s_1^k) - \hat{V}_{1,\tilde{r}}^{\pi^k}(s_1^k)\right) + \sum_{k=1}^K \left(\hat{V}_{1,\tilde{r}}^{\pi^k}(s_1^k) - V_{1,r}^{\pi^k}(s_1^k)\right),
    \end{align*}
    and give the regret bound by bounding each term above. The first term is the error incurred by replacing the original constraint constant $b$ by a relaxed empirical constraint constant $b' = b + \tau$ for each episode $k$. Note that by definition of $\pi^*$, we have $V_{1,c}^*(s_1)\le b \le b'$. Thus by definition of $\pi^{\tau,*}$ in \cref{eq:defproxypol-relax}, we have $V_{1,r}^{\pi^{\tau,*}}(s_1) \ge V_{1,r}^*(s_1)$. Hence, we bound the first term by 0: $\sum_{k=1}^K(V_{1,r}^*(s_1^k) - V_{1,r}^{\pi^{\tau,*}}(s_1^k)) \le 0$.
    
    By definition of the proxy policy $\pi^{\tau, *}$ in \cref{eq:defproxypol-relax}, and noting that $\tau$ is a predetermined constant, we see that $\pi^{\tau,*}$ is a fixed policy that is independent of the online learning process. Thus we can apply \cref{lem:optimism} and bound the second term by 0: $\sum_{k=1}^K\left(V_{1,r}^{\pi^{\tau,*}}(s_1^k) - \hat{V}_{1,\tilde{r}}^{\pi^{\tau,*}}(s_1^k)\right) \le 0$.
    
    The third term is the optimization error, and it is incurred because $\pi^k$ is an approximation solution generated by iterative primal-dual updates. We bound this term by using the primal update rules in \cref{lem:opt-error} and also by \cref{lem:bounddualvar,lem:boundviolation}, we have
    \begin{equation*}
        \sum_{k=1}^K \left( \hat{V}_{1,\tilde{r}}^{\pi^{\tau,*}}(s_1^k) - \hat{V}_{1,\tilde{r}}^{\pi^k}(s_1^k) \right) \le 2\veps_1 HK \sqrt{T} + \frac{UHK}{\sqrt{T}} \le 2\veps_1 HK \sqrt{T} + \frac{H^2K}{\tau\sqrt{T}}.
    \end{equation*}
    Finally, the last term in the regret decomposition is the model prediction error, consisting of the errors caused by inaccurate empirical models and additional bonus terms. Worth mentioning, this term is essentially similar to the entire regret in \cite{zhang2024settlingsamplecomplexityonline} as the algorithms share the similar exploration bonus and update rules for transition models. We state in \cref{lem:old-regret} the bound with probability $1-\delta$,
    \begin{equation}
        \sum_{k=1}^K\left( \hat{V}_{1,\tilde{r}}^{\pi^k}(s_1^k) - V_{1,r}^{\pi^k}(s_1^k)\right) = \tilde{O}(\sqrt{SAH^3K}).
    \end{equation}
    Putting everything together, we conclude our desired regret bound.
\end{proof}

\begin{theorem}[Regret bound of \cref{algo} for strict feasibility]\label{thm:reg-bound-strict}
    Let $b' = b - \Delta$ in \cref{algo} for some $\Delta > 0$. With probability at least $1-\delta$, the regret of \cref{algo} is 
    \begin{equation*}
        \textrm{Regret}(K) = \tilde{O}(\sqrt{SAH^3K} + \frac{\Delta}{\zeta}HK + 2\veps_1 HK \sqrt{T} + \frac{H^2K}{(\zeta-\Delta)\sqrt{T}}).
    \end{equation*}
\end{theorem}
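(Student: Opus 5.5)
We follow the same four-term decomposition as in the proof of \cref{thm:reg-bound-relaxed}, now with a pessimistic proxy policy. Let $\pi^{\Delta,*}$ be an optimal solution of
\begin{equation*}
    \max_\pi V_{1,r}^\pi(s_1) \quad \text{s.t.} \quad V_{1,c}^\pi(s_1) \le b' = b - \Delta .
\end{equation*}
Since $\Delta$ is a predetermined constant, this policy is fixed and does not depend on the learning process. We then write $\textrm{Regret}(K)$ as a sum of four terms:
\begin{itemize}
    \item[(i)] $V_{1,r}^*-V_{1,r}^{\pi^{\Delta,*}}$, the tightening error;
    \item[(ii)] $V_{1,r}^{\pi^{\Delta,*}}-\hat V_{1,\tilde r}^{\pi^{\Delta,*}}$, the optimism term;
    \item[(iii)] $\hat V_{1,\tilde r}^{\pi^{\Delta,*}}-\hat V_{1,\tilde r}^{\pi^k}$, the optimization error;
    \item[(iv)] $\hat V_{1,\tilde r}^{\pi^k}-V_{1,r}^{\pi^k}$, the model prediction error.
\end{itemize}
Each is summed over $k$.

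Term (ii) is at most $0$ by \cref{lem:optimism}, because $\pi^{\Delta,*}$ is fixed. Term (iv) is $\tilde O(\sqrt{SAH^3K})$ by \cref{lem:old-regret}, which does not depend on how $b'$ is chosen.

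Term (i) is the genuinely new part, and we handle it with the standard mixing argument. Set $\alpha=\Delta/\zeta\in(0,1)$, assuming $\Delta<\zeta$; otherwise the last term of the bound is meaningless. Take the occupancy-measure mixture $\tilde\pi=(1-\alpha)\pi^*+\alpha\pi_c^*$, realized by choosing between the two policies at the start of each episode. Values are linear in this mixture, so
\begin{equation*}
    V_{1,c}^{\tilde\pi}\le (1-\alpha)b+\alpha(b-\zeta)=b-\Delta .
\end{equation*}
Hence $\tilde\pi$ is feasible for the tightened problem, and
\begin{equation*}
    V_{1,r}^{\pi^{\Delta,*}}\ge V_{1,r}^{\tilde\pi}\ge (1-\alpha)V_{1,r}^*\ge V_{1,r}^*-\tfrac{\Delta}{\zeta}H .
\end{equation*}
Summing over $k$ gives the $\frac{\Delta}{\zeta}HK$ term.

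Term (iii) follows as before from \cref{lem:opt-error}, which bounds the sum by $2\veps_1HK\sqrt T+\frac{UHK}{\sqrt T}$. The remaining work is to choose the dual bound $U$ so that the rounded dual iterates stay in $[0,U]$ without losing optimality. This requires $U$ to dominate the optimal dual variable of the empirical problem \cref{eq:saddle}, which \cref{lem:bounddualvar,lem:boundviolation} handle.

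The main obstacle is to show that the empirical problem is strictly feasible with slack about $\zeta-\Delta$. Under optimism, $\hat V_{1,\underline c}^{\pi_c^*}\le V_{1,c}^{\pi_c^*}=b-\zeta$, which holds by \cref{lem:optimism} since $\pi_c^*$ is a fixed policy. Therefore $\pi_c^*$ has slack at least $\zeta-\Delta$ against $b'$. The standard Slater bound then gives
\begin{equation*}
    \hat\lambda^{k,*}\le \frac{\hat V_{1,\tilde r}^{\hat\pi^{k,*}}-\hat V_{1,\tilde r}^{\pi_c^*}}{\zeta-\Delta}\le \frac{H}{\zeta-\Delta},
\end{equation*}
so we choose $U=\Theta\!\left(H/(\zeta-\Delta)\right)$. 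Substituting this into the optimization-error bound yields $\frac{H^2K}{(\zeta-\Delta)\sqrt T}$. Combining the four terms under a union bound gives the claim with probability $1-\delta$.
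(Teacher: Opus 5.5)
Your proposal is correct and follows the paper's proof step for step. It uses the same proxy $\pi^{\Delta,*}$ and four-term decomposition, and the same mixture $(1-\Delta/\zeta)\pi^*+(\Delta/\zeta)\pi_c^*$ for the tightening error as in \cref{lem:reg-t1}. It also bounds the remaining terms the same way: optimism for the fixed proxy, \cref{lem:opt-error} together with the Slater-type bound $\hat{\lambda}^{k,*}\le H/(\zeta-\Delta)$ from \cref{lem:bounddualvar} to fix $U$, and \cref{lem:old-regret} for the model error. Your remark that the mixture must be realized at the episode level, so that values are linear in the mixing weight, is a precision the paper leaves implicit. Strictly speaking, $U>\hat{\lambda}^{k,*}$ is only needed for the violation bound; for regret, $U=\Theta(H/(\zeta-\Delta))$ is simply the parameter choice that produces the last term.
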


\begin{proof}
    The proof to \cref{thm:reg-bound-strict} follows the similar idea as the proof to \cref{thm:reg-bound-relaxed}. Again, we introduce a proxy policy $\pi^{\Delta,*}$ defined as the optimal solution to the following optimization problem
    \begin{equation}\label{eq:defproxypol}
        \pi^{\Delta,*} \in \argmax_\pi V_{1,r}^\pi(s_1^k), \quad \text{s.t.} \quad V_{1,c}^\pi(s_1^k) \le b' = b - \Delta.
    \end{equation}
    We decompose the regret as
    \begin{align*}
        \textrm{Regret}(K) = & \sum_{k=1}^K\left(V_{1,r}^*(s_1^k) - V_{1,r}^{\intpol}(s_1^k)\right) + \sum_{k=1}^K\left(V_{1,r}^{\intpol}(s_1^k) - \hat{V}_{1,\tilde{r}}^{\intpol}(s_1^k)\right) \\
        & + \sum_{k=1}^K \left(\hat{V}_{1,\tilde{r}}^{\intpol}(s_1^k) - \hat{V}_{1,\tilde{r}}^{\pi^k}(s_1^k)\right) + \sum_{k=1}^K \left(\hat{V}_{1,\tilde{r}}^{\pi^k}(s_1^k) - V_{1,r}^{\pi^k}(s_1^k)\right),
    \end{align*}
    and give the regret bound by bounding each term above. The first term is the error incurred by replacing the original constraint constant $b$ by a more restrictive empirical constraint constant $b' = b - \Delta$ for each episode $k$. We bound the first term in \cref{lem:reg-t1}:
    \begin{equation}
        \sum_{k=1}^K\left(V_{1,r}^*(s_1^k) - V_{1,r}^{\intpol}(s_1^k)\right) \le \frac{\Delta}{\zeta}HK.
    \end{equation}
    
    For the second term, we use the similar argument as in the proof to \cref{thm:reg-bound-relaxed}. By definition of the proxy policy $\pi^{\Delta, *}$ in \cref{eq:defproxypol}, and noting that $\Delta$ is a predetermined constant, we see that $\pi^{\Delta,*}$ is a fixed policy that is independent of the online learning process. Thus we can apply \cref{lem:optimism} and bound the second term by 0: $\sum_{k=1}^K\left(V_{1,r}^{\intpol}(s_1^k) - \hat{V}_{1,\tilde{r}}^{\intpol}(s_1^k)\right) \le 0$.
    
    The last two terms can be bounded with the same argument as in the proof to \cref{thm:reg-bound-relaxed}. Putting everything together, we conclude the regret bound in the strict feasibility setting.
\end{proof}

\begin{theorem}[Constraint violation bound of \cref{algo} for relaxed feasibility]\label{thm:cv-bound-relax}
    Let $b' = b + \tau$ in \cref{algo} for some $\tau > 0$. With probability at least $1-\delta$, the constraint violation of \cref{algo} is 
    \begin{equation*}
        \textrm{CV}(K) = \tilde{O}(\sqrt{SAH^3K} + \frac{2\veps_1 H\sqrt{T} + UH/\sqrt{T}}{U - H/\tau}K + K\tau).
    \end{equation*}
\end{theorem}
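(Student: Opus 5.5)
We bound the violation through the standard primal-dual argument, with a model-error term added. Write
\[
\sum_{k=1}^K\bigl(V_{1,c}^{\pi^k}(s_1^k)-b\bigr)
= \sum_{k=1}^K\bigl(V_{1,c}^{\pi^k}(s_1^k)-\hat V_{1,\underline{c}}^{\pi^k}(s_1^k)\bigr)
+\sum_{k=1}^K\bigl(\hat V_{1,\underline{c}}^{\pi^k}(s_1^k)-b'\bigr)+K\tau .
\]
The three terms play different roles. The last term is the $K\tau$ in the statement. The first term is the model prediction error for the cost, with an underestimated (optimistic) cost bonus. It mirrors the last term in the regret decomposition, so we bound it by the cost analogue of \cref{lem:old-regret}, which gives $\tilde O(\sqrt{SAH^3K})$. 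The doubling-batch/profile decoupling argument used for the reward applies verbatim, because $\hat\pi_t^k$ is computed by backward induction on $\tilde r-\hat\lambda_t^k\underline c$ with $\hat\lambda_t^k$ in the finite net $\Lambda$.

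The main work is the middle term, the empirical violation of the mixture policy $\pi^k$. The value is linear in the occupancy measure, so we read $\hat V^{\pi^k}$ as $\frac1T\sum_t\hat V^{\hat\pi_t^k}$ and argue per episode in the Lagrangian. From the projected and rounded dual update \cref{eq:dual-update}, expand $(\hat\lambda_{t+1}^k-\lambda)^2$ for an arbitrary $\lambda\in[0,U]$. Each step costs a rounding error of at most $\veps_1$ times a gradient of size $O(H)$, which yields the $2\veps_1H\sqrt T$ contribution after normalising by $\eta=U/(H\sqrt T)$. Telescoping over $t$ gives
\[
\frac1T\sum_t(\lambda-\hat\lambda_t^k)\bigl(\hat V^{\hat\pi_t^k}_{1,\underline c}-b'\bigr)\le \frac{U^2}{2\eta T}+\frac{\eta H^2}{2}+(\text{rounding})\lesssim \frac{UH}{\sqrt T}+2\veps_1H\sqrt T .
\]
Next we combine this with the primal optimality of $\hat\pi_t^k$ in \cref{eq:primal-update}. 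Comparing against a policy feasible for the empirical problem (for instance the empirical saddle point $\hat\pi^{k,*}$) gives
\[
\hat V_{1,\tilde r}^{\hat\pi^{k,*}}-\hat V_{1,\tilde r}^{\pi^k}+\lambda\bigl(\hat V_{1,\underline c}^{\pi^k}-b'\bigr)\le \frac{UH}{\sqrt T}+2\veps_1H\sqrt T .
\]

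We now invoke \cref{lem:bounddualvar}: with the relaxation $b'=b+\tau$, the empirical problem has slack at least $\tau$, since $\underline c\le c$ and so $\pi^*$ is strictly feasible. Hence the optimal dual satisfies $\hat\lambda^{k,*}\le H/\tau$. Choose $\lambda=U$ when the empirical violation is positive. Standard duality then gives $\hat V_{1,\tilde r}^{\hat\pi^{k,*}}-\hat V_{1,\tilde r}^{\pi^k}\ge -\hat\lambda^{k,*}(\hat V_{1,\underline c}^{\pi^k}-b')$, and therefore
\[
(U-H/\tau)\bigl(\hat V^{\pi^k}_{1,\underline c}-b'\bigr)_+\le \frac{UH}{\sqrt T}+2\veps_1H\sqrt T .
\]
This is exactly the content of \cref{lem:boundviolation}. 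Summing over $k$ gives the middle term, $\frac{2\veps_1H\sqrt T+UH/\sqrt T}{U-H/\tau}K$, and adding the three pieces and taking the positive part finishes the proof.

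The step I expect to be the obstacle is the dual analysis with rounding. The rounding $\cR_\Lambda$ can push iterates slightly outside the ideal projected trajectory, and the comparator must be $\lambda=U$, the upper end of the net, so the telescoping has to be done carefully to keep the error linear in $\veps_1\sqrt T$. A secondary subtlety is that the bound $\hat\lambda^{k,*}\le H/\tau$ must hold on the same high-probability event as the optimism lemma.
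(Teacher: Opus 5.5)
Your proposal is correct and follows the paper's proof essentially step for step. It uses the same three-term decomposition, the cost analogue of \cref{lem:old-regret} for the model-error term, and $K\tau$ from the shift. For the empirical violation it combines \cref{lem:bound-dual-regret,lem:dual-regret} with comparator $\lambda=U$, the bound $\hat{\lambda}^{k,*}\le H/\tau$ from \cref{lem:bounddualvar}, and the saddle-point argument of \cref{lem:boundviolation}.

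There are two small slips that do not affect the conclusion. First, using $\pi^*$ instead of the paper's $\pi_c^*$ to get $\hat{\lambda}^{k,*}\le H/\tau$ is fine, but the inequality $\hat{V}_{1,\underline{c}}^{\pi^*}(s_1^k)\le V_{1,c}^{\pi^*}(s_1^k)\le b$ comes from \cref{lem:optimism}, since the estimate is computed under $\hat{P}$; it does not follow from $\underline{c}\le c$ alone. Second, rounding inflates $\abs{\hat{\lambda}_{t+1}-\lambda}^2$ by at most $2\veps_1U+\veps_1^2$, because $\veps_1$ multiplies the distance of the pre-rounding iterate to $\lambda$ (at most $U$) rather than the gradient; dividing by $2\eta$ is what gives the $O(\veps_1U/\eta)=O(\veps_1H\sqrt{T})$ term.
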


\begin{proof}
    Recall the definition of constraint violation and we decompose it as follows:
    \begin{align*}
        \textrm{CV}(K) & = \left(\sum_{k=1}^K V_{1,c}^{\pi^k}(s_1^k) - b\right)_+ \\
        & = \left(\sum_{k=1}^K \left(V_{1,c}^{\pi^k}(s_1^k) - \hat{V}_{1,\underline{c}}^{\pi^k}(s_1^k) \right) + \sum_{k=1}^K\left(\hat{V}_{1,\underline{c}}^{\pi^k}(s_1^k) - b'\right) + \sum_{k=1}^K\left(b' - b\right) \right)_+.
    \end{align*}
    For the first term, by definition of optimistically biased estimates of rewards and cost, we note that the analysis of bounding $\sum_k V_{1,c}^{\pi^k}(s_1^k) - \hat{V}_{1,\underline{c}}^{\pi^k}(s_1^k)$ and $\sum_k\hat{V}_{1,\tilde{r}}^{\pi^k}(s_1^k) - V_{1,r}^{\pi^k}(s_1^k)$ are analogous, and mostly identical. Hence, by \cref{lem:old-regret}, we have
    \begin{equation}
        \sum_{k=1}^K\left( V_{1,c}^{\pi^k}(s_1^k) - \hat{V}_{1,\underline{c}}^{\pi^k}(s_1^k)\right) \le \tilde{O}(\sqrt{SAH^3K}),
    \end{equation}
    with probability at least $1 - SAHK\delta'$.

    The second term is the optimization error in the primal-dual process. We calculate $\pi^k$ as an approximate solution to the empirical optimization problem defined in \cref{eq:emp_CMDP}. Thus, it is not necessarily satisfied that $\hat{V}_{1,\underline{c}}^{\pi^k}(s_1^k)\le b'$. We hence return to the analysis of the primal-dual framework, and adapt techniques used in \cite{jain2022towards,vaswani2022nearoptimal}. By \cref{lem:bound-dual-regret,lem:bounddualvar,lem:dual-regret,lem:boundviolation}, we have
    \begin{align*}
        \sum_{k=1}^K\left(\hat{V}_{1,\underline{c}}^{\pi^k}(s_1^k) - b'\right) \le \sum_{k=1}^K\left(\hat{V}_{1,\underline{c}}^{\pi^k}(s_1^k) - b'\right)_+ \le \frac{2\veps_1 H\sqrt{T} + UH/\sqrt{T}}{U - H/\tau}K,
    \end{align*}
    
    For the third term, we recall the definition of $b'$, and we have $\sum_{k=1}^K(b' - b) = K \tau.$ Finally, putting everything together, we have the desired upper bound on constraint violation.
\end{proof}

\begin{theorem}[Constraint violation bound of \cref{algo} for strict feasibility]\label{thm:cv-bound-strict}
    Let $b' = b - \Delta$ for some $\Delta > 0$. With probability at least $1-\delta$, the constraint violation of \cref{algo} is 
    \begin{equation*}
        \textrm{CV}(K) = \tilde{O}(\sqrt{SAH^3K} + \frac{2\veps_1 H\sqrt{T} + UH/\sqrt{T}}{U - H/(\zeta - \Delta)}K - K\Delta).
    \end{equation*}
\end{theorem}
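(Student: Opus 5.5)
The argument mirrors the proof of \cref{thm:cv-bound-relax}, with $b' = b - \Delta$ in place of $b' = b + \tau$. We decompose the (unclipped) cumulative violation as
\begin{align*}
    \sum_{k=1}^K\left(V_{1,c}^{\pi^k}(s_1^k) - b\right) = & \sum_{k=1}^K \left(V_{1,c}^{\pi^k}(s_1^k) - \hat{V}_{1,\underline{c}}^{\pi^k}(s_1^k) \right) + \sum_{k=1}^K\left(\hat{V}_{1,\underline{c}}^{\pi^k}(s_1^k) - b'\right) \\
    & + \sum_{k=1}^K\left(b' - b\right).
\end{align*}
We then bound the three sums separately and take the positive part at the end. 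The positive part is monotone, so an upper bound on the inner sum gives an upper bound on $\textrm{CV}(K)$, with the stated expression read inside the $\tilde{O}$.

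\textbf{Model prediction error.} The first sum does not depend on the choice of $b'$. The underestimated cost $\underline{c} = c - b_{h,c}$ is handled exactly like the overestimated reward, since the bonuses have the same Bernstein form and the doubling-batch model update is shared. Invoking \cref{lem:old-regret} (cost version) gives $\tilde{O}(\sqrt{SAH^3K})$ with probability at least $1-SAHK\delta'$. The third sum is deterministic and equals $-K\Delta$. This negative term is what later lets us cancel the remaining errors and obtain zero violation.

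\textbf{Optimization error.} The second sum is where the main work lies. As in the relaxed case, we use $\sum_k(\hat{V}_{1,\underline{c}}^{\pi^k}(s_1^k)-b') \le \sum_k(\hat{V}_{1,\underline{c}}^{\pi^k}(s_1^k)-b')_+$ and then apply the primal-dual analysis of \cref{lem:bound-dual-regret,lem:dual-regret}. For each $k$, this analysis bounds the averaged Lagrangian gap plus $U$ times the positive violation by the projected, rounded gradient-descent dual regret $2\veps_1H\sqrt{T} + UH/\sqrt{T}$, taken against the comparator $\lambda=U$.

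To convert this into a bound on the violation alone, we need \cref{lem:bounddualvar,lem:boundviolation}. These bound the empirical optimal dual variable $\hat\lambda^{k,*}$ by $H/(\text{slack})$. The slack is the amount by which some policy strictly satisfies the empirical constraint $\hat{V}_{1,\underline{c}}^\pi(s_1)\le b'$.

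In the strict setting we take $\pi_c^*$ as the Slater point. Its true cost is $b-\zeta$. Because $\pi_c^*$ is a fixed policy, optimism (\cref{lem:optimism} for costs) gives $\hat{V}_{1,\underline{c}}^{\pi_c^*}\le V_{1,c}^{\pi_c^*} = b-\zeta = b' - (\zeta-\Delta)$. The empirical slack is therefore at least $\zeta-\Delta$, and so $\hat\lambda^{k,*}\le H/(\zeta-\Delta)$.

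With this bound and $U > H/(\zeta-\Delta)$, the standard argument of \cite{jain2022towards,vaswani2022nearoptimal} gives a per-episode violation of at most $(2\veps_1H\sqrt{T}+UH/\sqrt{T})/(U-H/(\zeta-\Delta))$. That argument evaluates the Lagrangian at $\hat\pi^{k,*}$ and uses the saddle-point inequality to absorb $\hat\lambda^{k,*}$ times the violation. Summing over $k$ yields the middle term.

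\textbf{Main obstacle.} The delicate step is verifying that the dual bound holds uniformly over all episodes under the shrunken constraint. This needs $\Delta<\zeta$, and it needs the cost optimism for $\pi_c^*$ to hold on the same high-probability event as \cref{lem:old-regret}. Since $\pi_c^*$ is fixed and independent of the data, a union bound over $k$ (already built into $\delta'$) should suffice, exactly as was done for $\pi^{\Delta,*}$ in \cref{thm:reg-bound-strict}. Combining the three bounds gives the stated $\textrm{CV}(K)$.
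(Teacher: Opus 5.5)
Your proposal is correct and follows the paper's proof. It uses the same three-term decomposition and \cref{lem:old-regret} for the model-error term. For the optimization term it uses \cref{lem:dual-regret,lem:bound-dual-regret,lem:bounddualvar,lem:boundviolation} with $\hat{\lambda}^{k,*}\le H/(\zeta-\Delta)$, and for the last term $\sum_{k}(b'-b) = -K\Delta$. You also rightly make explicit the conditions $\Delta<\zeta$ and $U>H/(\zeta-\Delta)$, which the statement leaves implicit but the bound needs in order to make sense.
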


\begin{proof}
    We decompose the violation similarly as in the proof to \cref{thm:cv-bound-relax} and note that the first two terms can be bounded using the same argument. Then, recall $b' = b - \Delta$, and we immediately have the upper bound on constraint violation for strict feasibility.
\end{proof}

\subsection{Sample complexity bounds}
We present the sample complexity bounds of \cref{algo}. We derive these bounds by converting them from the regret and constraint violation upper bounds presented above.

\begin{theorem}[Sample complexity of \cref{algo} for relaxed feasibility]\label{thm:sc-bound-relax}
    For a fixed $\veps\in (0, H]$ and $\delta \in (0,1)$, with $K=\tilde{O}\left(\frac{SAH^3}{\veps^2}\right)$, $T = O\left(\frac{H^4}{\veps^4}\right)$, $U = O\left(\frac{H}{\veps}\right)$, $\veps_1 = O\left(\frac{\veps^3}{H^3}\right)$, and $\tau = \frac{\veps}{2}$, \cref{algo} returns a policy $\bar{\pi}$ that satisfies \cref{eq:pac-relax} with probability $1-\delta$.
\end{theorem}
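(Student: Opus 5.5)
The plan is an online-to-batch conversion: combine \cref{thm:reg-bound-relaxed} and \cref{thm:cv-bound-relax} with the linearity of value functions in the occupancy measure of the mixture policy $\bar{\pi} = \frac{1}{K}\sum_k \pi^k$. Since $s_1^k = s_1$ is fixed, executing $\bar\pi$ means sampling $k$ uniformly and running $\pi^k$ (interpreting the mixture at the level of occupancy measures). Hence
\begin{equation*}
V_{1,r}^*(s_1) - V_{1,r}^{\bar\pi}(s_1) = \frac{1}{K}\textrm{Regret}(K), \qquad V_{1,c}^{\bar\pi}(s_1) - b \le \frac{1}{K}\textrm{CV}(K).
\end{equation*}
It therefore suffices to show, on the probability-$(1-\delta)$ event where both theorems hold (after a union bound, rescaling $\delta$ by a factor $2$), that $\textrm{Regret}(K)\le K\veps$ and $\textrm{CV}(K)\le K\veps$ under the stated parameter choices.

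For the regret, divide the bound of \cref{thm:reg-bound-relaxed} by $K$. This gives three terms to control.
\begin{itemize}
\item The statistical term $\tilde{O}(\sqrt{SAH^3/K})$ is at most $\veps/3$ once $K = \tilde{O}(SAH^3/\veps^2)$ with a large enough constant. The logarithmic dependence of $\delta'$ on $K$ is absorbed by the usual self-referential argument: $K \ge C\,SAH^3\log^{c}(SAHK/\delta)/\veps^2$ is solvable with $K=\tilde O(SAH^3/\veps^2)$.
\item The discretization term $2\veps_1 H\sqrt{T}$ is $O(\veps^3/H^3 \cdot H \cdot H^2/\veps^2) = O(\veps)$, and the constants can be picked to make it at most $\veps/3$.
\item The optimization term $H^2/(\tau\sqrt{T})$ becomes $H^2\cdot \frac{2}{\veps}\cdot \frac{\veps^2}{H^2} = O(\veps)$ with $\tau=\veps/2$ and $T = H^4/\veps^4$. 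Again the constants are tuned to get $\le\veps/3$.
\end{itemize}

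For the constraint violation, divide \cref{thm:cv-bound-relax} by $K$. The statistical term is again at most a small constant times $\veps$. The $K\tau$ term contributes $\tau = \veps/2$. For the optimization term, choose $U = 2H/\tau = 4H/\veps$, so that $U - H/\tau = H/\tau = 2H/\veps$. The term then becomes
\begin{equation*}
\frac{2\veps_1 H\sqrt{T} + UH/\sqrt{T}}{2H/\veps} = \veps\,\veps_1\sqrt{T} + \frac{U\veps}{2\sqrt T},
\end{equation*}
which is $O(\veps^2/H)$ in each piece, and hence at most $\veps/4$ for $\veps \le H$ with suitable constants. Altogether $\textrm{CV}(K)/K \le \veps/4+\veps/2+\veps/4 = \veps$.

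The main obstacle is bookkeeping rather than a new idea. First, the constants hidden in the $O(\cdot)$ choices must be consistent across the regret and CV budgets; in particular, $U$ must exceed $H/\tau$ by a constant factor for the denominator to be well behaved. Second, the step size $\eta = U/(H\sqrt T)$ and the lemmas invoked inside the two theorems (notably \cref{lem:bounddualvar}, which presumably needs $U \ge 2H/\tau$ to bound the optimal dual variable by $H/\tau$ via Slater with slack $\tau$ for the empirical problem) must remain valid under these choices. I would check that hypothesis first. I would also confirm that equality of mixture values holds with the randomized-over-episodes interpretation of $\bar\pi$.
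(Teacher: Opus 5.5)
Your proposal is correct and is essentially the paper's proof: you write the suboptimality and violation of the mixture $\bar\pi$ as $\textrm{Regret}(K)/K$ and $\textrm{CV}(K)/K$ and substitute the parameter choices into \cref{thm:reg-bound-relaxed,thm:cv-bound-relax}, and your explicit $U = 2H/\tau$ makes concrete the ``proper coefficients'' the paper leaves implicit. One bookkeeping caveat, which only changes the constant in $T$: with $U = 2H/\tau$ the regret's optimization term should be read as $UH/\sqrt{T} = 2H^2/(\tau\sqrt{T})$, because the stated form $H^2/(\tau\sqrt{T})$ tacitly assumes $U \le H/\tau$, which is incompatible with the requirement $U > H/\tau$ coming from \cref{lem:boundviolation}. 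That requirement is also what your final paragraph should check, since \cref{lem:bounddualvar} itself imposes no condition on $U$.
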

\begin{proof}
    Note that $\bar{\pi} = \frac{1}{K}\sum_{k=1}^K\pi^k$ is a mixture policy, and we have
    \begin{equation*}
    \begin{aligned}
        V_{1,r}^*(s_1) - V_{1,r}^{\bar{\pi}}(s_1) = \frac{1}{K} \sum_{k=1}^K\left(V_{1,r}^*(s_1) - V_{1,r}^{\pi^k}(s_1)\right) = \frac{\textrm{Regret}(K)}{K},
    \end{aligned}
    \end{equation*}
    and
    \begin{equation*}
        \begin{aligned}
            V_{1,c}^{\bar{\pi}}(s_1) - b = \frac{1}{K}\sum_{k=1}^K\left(V_{1,c}^{\pi^k}(s_1) - b\right) \le \frac{\textrm{CV}(K)}{K}.
        \end{aligned}
    \end{equation*}
    By setting $K=\tilde{O}\left(\frac{SAH^3}{\veps^2}\right)$, $T = O\left(\frac{H^4}{\veps^4}\right)$, $U = O\left(\frac{H}{\veps}\right)$, $\veps_1 = O\left(\frac{\veps^3}{H^3}\right)$, and $\tau = \frac{\veps}{2}$, with proper coefficients we have
    $V_{1,r}^*(s_1) - V_{1,r}^{\bar{\pi}}(s_1) \le \veps$, and $V_{1,c}^{\bar{\pi}}(s_1) - b \le \veps$.
\end{proof}
\Cref{thm:sc-bound-relax} shows the sample complexity of \cref{algo} is $\tilde{O}\left(\frac{SAH^3}{\veps^2}\right)$ in the relaxed feasibility setting. For unconstrained MDPs with online access, \cite{pmlr-v132-domingues21a} provided a lower bound of $\Omega\left(\frac{SAH^3}{\veps^2}\right)$ for $\veps$-optimal policy identification. We thus conclude that learning online CMDPs in the relaxed feasibility setting is as easy as learning unconstrained MDPs.

\begin{theorem}[Sample complexity of \cref{algo} for strict feasibility]\label{thm:sc-bound-strict}
    For a fixed $\veps\in (0, H-\zeta]$ and $\delta \in (0,1)$, with $K=\tilde{O}\left(\frac{SAH^5}{\veps^2\zeta^2}\right)$, $T = O\left(\frac{H^6}{\zeta^4\veps^2}\right)$, $U = O\left(\frac{H^2}{\zeta(H - \veps)}\right)$, $\veps_1 = O\left(\frac{\veps^2\zeta^2}{H^4}\right)$, and $\Delta = \frac{\zeta\veps}{2H}$, \cref{algo} returns a policy $\bar{\pi}$ that satisfies \cref{eq:pac-strict} with probability $1-\delta$.
\end{theorem}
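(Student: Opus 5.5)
We follow the template of the proof of \cref{thm:sc-bound-relax}, now using \cref{thm:reg-bound-strict} and \cref{thm:cv-bound-strict}. Because $\bar{\pi}=\frac1K\sum_k\pi^k$ is a mixture, value functions are linear in the mixture, so
\begin{equation*}
V_{1,r}^*(s_1)-V_{1,r}^{\bar\pi}(s_1)=\frac{\textrm{Regret}(K)}{K},\qquad V_{1,c}^{\bar\pi}(s_1)-b\le \frac{\textrm{CV}(K)}{K}.
\end{equation*}
It therefore suffices to choose the parameters so that $\textrm{Regret}(K)/K\le\veps$ and so that the bracket inside $\textrm{CV}(K)$ is nonpositive. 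The second condition gives $V_{1,c}^{\bar\pi}(s_1)\le b$. Both theorems hold with probability $1-\delta$; we take a union bound, rescaling $\delta$ by a constant.

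\textbf{Reward side.} With $\Delta=\zeta\veps/(2H)$, the bias term $\frac{\Delta}{\zeta}H$ per episode equals $\veps/2$. The remaining per-episode terms are to be made at most $\veps/6$ each.
\begin{itemize}
\item $\sqrt{SAH^3/K}\le\veps/6$ already holds for $K=\tilde O(SAH^3/\veps^2)$, so the stated $K$ is more than enough here.
\item $2\veps_1H\sqrt T$: with $\veps_1=O(\veps^2\zeta^2/H^4)$ and $\sqrt T=O(H^3/(\zeta^2\veps))$, this is $O(\veps)$ with a small constant.
\item $\frac{H^2}{(\zeta-\Delta)\sqrt T}$: since $\veps\le H$, we have $\Delta\le\zeta/2$, so $\zeta-\Delta\ge\zeta/2$. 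The term is then $O(H^2/(\zeta\sqrt T))=O(\veps\zeta/H)\le O(\veps)$.
\end{itemize}

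\textbf{Cost side (the main obstacle).} We need the positive terms in $\textrm{CV}(K)$ to be absorbed by $-K\Delta=-K\zeta\veps/(2H)$. Dividing by $K$, the requirement is
\begin{equation*}
\tilde O\Big(\sqrt{SAH^3/K}\Big)+\frac{2\veps_1H\sqrt T+UH/\sqrt T}{U-H/(\zeta-\Delta)}\le \frac{\zeta\veps}{2H}.
\end{equation*}
\begin{itemize}
\item The first term forces $K=\tilde O(SAH^5/(\zeta^2\veps^2))$. This is exactly where the extra $H^2/\zeta^2$ in the sample complexity comes from.
\item For the second term we need $U$ strictly larger than $H/(\zeta-\Delta)\le 2H/\zeta$. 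We take $U$ to be a constant multiple of $H/\zeta$ so that the denominator is $\Omega(H/\zeta)$; note the stated $U=O(H^2/(\zeta(H-\veps)))$ is of order $H/\zeta$ for $\veps\le H-\zeta$.
\item The numerator is then $2\veps_1H\sqrt T+O(H^2/(\zeta\sqrt T))$. Dividing by $H/\zeta$ gives $O(\veps_1\zeta\sqrt T)+O(H/\sqrt T)$.
\item With $\sqrt T=\Theta(H^3/(\zeta^2\veps))$, the term $H/\sqrt T$ is $O(\zeta^2\veps/H^2)\le O(\zeta\veps/H)$, using $\zeta\le H$.
\item With $\veps_1=\Theta(\veps^2\zeta^2/H^4)$, the term $\veps_1\zeta\sqrt T$ is $O(\veps\zeta/H)$.
\end{itemize}
Choosing the hidden constants small makes each term at most $\zeta\veps/(8H)$, so the bracket is nonpositive and $\textrm{CV}(K)=0$.

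The delicate points are verifying that the requirements from the two sides are consistent for one parameter choice, and tracking the log factors hidden in $\tilde O$ (they depend on $K$, $T$ and $1/\veps_1$ through $\delta'$ and the size of $\Lambda$). We handle the logs by taking $K$ larger by a polylog factor, using that $\log K$ grows slower than $K$. Once the cost side holds, the reward side is automatic, since its conditions are weaker. This yields both conditions of \cref{eq:pac-strict} with $K=\tilde O(SAH^5/(\veps^2\zeta^2))$.
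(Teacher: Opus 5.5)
Your proposal is correct and follows the same route as the paper. Linearity of values over the mixture $\bar\pi$ reduces the claim to $\textrm{Regret}(K)/K\le\veps$ plus making the bracket in \cref{thm:cv-bound-strict} nonpositive through the $-K\Delta$ term, and your term-by-term check (using $\Delta\le\zeta/2$, $\zeta\le H$, $U=\Theta(H/\zeta)$) spells out what the paper leaves to ``proper coefficients''; the only nits are harmless. With $U>H/(\zeta-\Delta)$ the regret's optimization term is really $UHK/\sqrt{T}$ rather than $H^2K/((\zeta-\Delta)\sqrt{T})$, since the paper's substitution tacitly assumed $U\le H/(\zeta-\Delta)$, but it has the same order; and $H^2/(\zeta(H-\veps))$ is $\Theta(H/\zeta)$ only for $\veps$ bounded away from $H$, though your choice of $U$ still satisfies the stated $O(\cdot)$.
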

\begin{proof}
    The proof is mostly the same as the proof to \cref{thm:sc-bound-relax}:
    \begin{equation*}
    \begin{aligned}
        V_{1,r}^*(s_1) - V_{1,r}^{\bar{\pi}}(s_1) = \frac{1}{K} \sum_{k=1}^K\left(V_{1,r}^*(s_1) - V_{1,r}^{\pi^k}(s_1)\right) = \frac{\textrm{Regret}(K)}{K},
    \end{aligned}
    \end{equation*}
    and
    \begin{equation*}
        \begin{aligned}
            V_{1,c}^{\bar{\pi}}(s_1) - b = \frac{1}{K}\sum_{k=1}^K\left(V_{1,c}^{\pi^k}(s_1) - b\right) \le \frac{\textrm{CV}(K)}{K}.
        \end{aligned}
    \end{equation*}
    By setting $K=\tilde{O}\left(\frac{SAH^5}{\veps^2\zeta^2}\right)$, $T = O\left(\frac{H^6}{\zeta^4\veps^2}\right)$, $U = O\left(\frac{H^2}{\zeta(H - \veps)}\right)$, $\veps_1 = O\left(\frac{\veps^2\zeta^2}{H^4}\right)$, and $\Delta = \frac{\zeta\veps}{2H}$, with proper coefficients, we have
    $V_{1,r}^*(s_1) - V_{1,r}^{\bar{\pi}}(s_1) \le \veps$, and $V_{1,c}^{\bar{\pi}}(s_1) - b \le 0$.
\end{proof}
\Cref{thm:sc-bound-strict} shows the sample complexity of \cref{algo} is $\tilde{O}\left(\frac{SAH^5}{\veps^2\zeta^2}\right)$ in the strict feasibility setting. With access to a generative model, \cite{vaswani2022nearoptimal} provided a lower bound of $\Omega\left(\frac{SA}{(1-\gamma)^5\veps^2\zeta^2}\right)$ for infinite-horizon discounted CMDPs, which translates to $\Omega\left(\frac{SAH^5}{\veps^2\zeta^2}\right)$ for episodic CMDPs. Note that this lower bound can be readily applied to learning with online access: Suppose there exists an online algorithm $\mathscr{A}$ achieving better sample complexity than the lower bound for the random access setting. Since trajectories generated in the online setting can be simulated with a generative model, we can easily construct a sampling scheme with the generative model using $\mathscr{A}$ as a subroutine. This contradicts the lower bound. We thus conclude that in the strict feasibility setting, learning CMDPs with online access is as easy as learning CMDPs with random access.

\section*{Acknowledgement}
Chang Liu, Yunfan Li, and Lin F. Yang are supported in part by NSF Grant 2221871 and an Amazon Faculty Award.


\bibliography{references}


\newpage
\section*{NeurIPS Paper Checklist}

\begin{enumerate}

\item {\bf Claims}
    \item[] Question: Do the main claims made in the abstract and introduction accurately reflect the paper's contributions and scope?
    \item[] Answer: \answerYes{} 
    \item[] Justification: The contributions and scope of this paper are included in both the abstract and the introduction.
    \item[] Guidelines:
    \begin{itemize}
        \item The answer NA means that the abstract and introduction do not include the claims made in the paper.
        \item The abstract and/or introduction should clearly state the claims made, including the contributions made in the paper and important assumptions and limitations. A No or NA answer to this question will not be perceived well by the reviewers. 
        \item The claims made should match theoretical and experimental results, and reflect how much the results can be expected to generalize to other settings. 
        \item It is fine to include aspirational goals as motivation as long as it is clear that these goals are not attained by the paper. 
    \end{itemize}

\item {\bf Limitations}
    \item[] Question: Does the paper discuss the limitations of the work performed by the authors?
    \item[] Answer: \answerYes{} 
    \item[] Justification: The limitations of this work are discussed in the related work.
    \item[] Guidelines:
    \begin{itemize}
        \item The answer NA means that the paper has no limitation while the answer No means that the paper has limitations, but those are not discussed in the paper. 
        \item The authors are encouraged to create a separate "Limitations" section in their paper.
        \item The paper should point out any strong assumptions and how robust the results are to violations of these assumptions (e.g., independence assumptions, noiseless settings, model well-specification, asymptotic approximations only holding locally). The authors should reflect on how these assumptions might be violated in practice and what the implications would be.
        \item The authors should reflect on the scope of the claims made, e.g., if the approach was only tested on a few datasets or with a few runs. In general, empirical results often depend on implicit assumptions, which should be articulated.
        \item The authors should reflect on the factors that influence the performance of the approach. For example, a facial recognition algorithm may perform poorly when image resolution is low or images are taken in low lighting. Or a speech-to-text system might not be used reliably to provide closed captions for online lectures because it fails to handle technical jargon.
        \item The authors should discuss the computational efficiency of the proposed algorithms and how they scale with dataset size.
        \item If applicable, the authors should discuss possible limitations of their approach to address problems of privacy and fairness.
        \item While the authors might fear that complete honesty about limitations might be used by reviewers as grounds for rejection, a worse outcome might be that reviewers discover limitations that aren't acknowledged in the paper. The authors should use their best judgment and recognize that individual actions in favor of transparency play an important role in developing norms that preserve the integrity of the community. Reviewers will be specifically instructed to not penalize honesty concerning limitations.
    \end{itemize}

\item {\bf Theory assumptions and proofs}
    \item[] Question: For each theoretical result, does the paper provide the full set of assumptions and a complete (and correct) proof?
    \item[] Answer: \answerYes{} 
    \item[] Justification: For all main results of this paper, the proof sketch is provided in the main paper, and the complete proof and auxiliary lemmas are included in the appendix, numbered and referenced.
    \item[] Guidelines:
    \begin{itemize}
        \item The answer NA means that the paper does not include theoretical results. 
        \item All the theorems, formulas, and proofs in the paper should be numbered and cross-referenced.
        \item All assumptions should be clearly stated or referenced in the statement of any theorems.
        \item The proofs can either appear in the main paper or the supplemental material, but if they appear in the supplemental material, the authors are encouraged to provide a short proof sketch to provide intuition. 
        \item Inversely, any informal proof provided in the core of the paper should be complemented by formal proofs provided in appendix or supplemental material.
        \item Theorems and Lemmas that the proof relies upon should be properly referenced. 
    \end{itemize}

    \item {\bf Experimental result reproducibility}
    \item[] Question: Does the paper fully disclose all the information needed to reproduce the main experimental results of the paper to the extent that it affects the main claims and/or conclusions of the paper (regardless of whether the code and data are provided or not)?
    \item[] Answer: \answerNA{} 
    \item[] Justification: This paper does not include experiments.
    \item[] Guidelines:
    \begin{itemize}
        \item The answer NA means that the paper does not include experiments.
        \item If the paper includes experiments, a No answer to this question will not be perceived well by the reviewers: Making the paper reproducible is important, regardless of whether the code and data are provided or not.
        \item If the contribution is a dataset and/or model, the authors should describe the steps taken to make their results reproducible or verifiable. 
        \item Depending on the contribution, reproducibility can be accomplished in various ways. For example, if the contribution is a novel architecture, describing the architecture fully might suffice, or if the contribution is a specific model and empirical evaluation, it may be necessary to either make it possible for others to replicate the model with the same dataset, or provide access to the model. In general. releasing code and data is often one good way to accomplish this, but reproducibility can also be provided via detailed instructions for how to replicate the results, access to a hosted model (e.g., in the case of a large language model), releasing of a model checkpoint, or other means that are appropriate to the research performed.
        \item While NeurIPS does not require releasing code, the conference does require all submissions to provide some reasonable avenue for reproducibility, which may depend on the nature of the contribution. For example
        \begin{enumerate}
            \item If the contribution is primarily a new algorithm, the paper should make it clear how to reproduce that algorithm.
            \item If the contribution is primarily a new model architecture, the paper should describe the architecture clearly and fully.
            \item If the contribution is a new model (e.g., a large language model), then there should either be a way to access this model for reproducing the results or a way to reproduce the model (e.g., with an open-source dataset or instructions for how to construct the dataset).
            \item We recognize that reproducibility may be tricky in some cases, in which case authors are welcome to describe the particular way they provide for reproducibility. In the case of closed-source models, it may be that access to the model is limited in some way (e.g., to registered users), but it should be possible for other researchers to have some path to reproducing or verifying the results.
        \end{enumerate}
    \end{itemize}

\item {\bf Open access to data and code}
    \item[] Question: Does the paper provide open access to the data and code, with sufficient instructions to faithfully reproduce the main experimental results, as described in supplemental material?
    \item[] Answer: \answerNA{} 
    \item[] Justification: This paper does not include experiments.
    \item[] Guidelines:
    \begin{itemize}
        \item The answer NA means that paper does not include experiments requiring code.
        \item Please see the NeurIPS code and data submission guidelines (\url{https://nips.cc/public/guides/CodeSubmissionPolicy}) for more details.
        \item While we encourage the release of code and data, we understand that this might not be possible, so “No” is an acceptable answer. Papers cannot be rejected simply for not including code, unless this is central to the contribution (e.g., for a new open-source benchmark).
        \item The instructions should contain the exact command and environment needed to run to reproduce the results. See the NeurIPS code and data submission guidelines (\url{https://nips.cc/public/guides/CodeSubmissionPolicy}) for more details.
        \item The authors should provide instructions on data access and preparation, including how to access the raw data, preprocessed data, intermediate data, and generated data, etc.
        \item The authors should provide scripts to reproduce all experimental results for the new proposed method and baselines. If only a subset of experiments are reproducible, they should state which ones are omitted from the script and why.
        \item At submission time, to preserve anonymity, the authors should release anonymized versions (if applicable).
        \item Providing as much information as possible in supplemental material (appended to the paper) is recommended, but including URLs to data and code is permitted.
    \end{itemize}

\item {\bf Experimental setting/details}
    \item[] Question: Does the paper specify all the training and test details (e.g., data splits, hyperparameters, how they were chosen, type of optimizer, etc.) necessary to understand the results?
    \item[] Answer: \answerNA{} 
    \item[] Justification: This paper does not include experiments.
    \item[] Guidelines:
    \begin{itemize}
        \item The answer NA means that the paper does not include experiments.
        \item The experimental setting should be presented in the core of the paper to a level of detail that is necessary to appreciate the results and make sense of them.
        \item The full details can be provided either with the code, in appendix, or as supplemental material.
    \end{itemize}

\item {\bf Experiment statistical significance}
    \item[] Question: Does the paper report error bars suitably and correctly defined or other appropriate information about the statistical significance of the experiments?
    \item[] Answer: \answerNA{} 
    \item[] Justification: This paper does not include experiments.
    \item[] Guidelines:
    \begin{itemize}
        \item The answer NA means that the paper does not include experiments.
        \item The authors should answer "Yes" if the results are accompanied by error bars, confidence intervals, or statistical significance tests, at least for the experiments that support the main claims of the paper.
        \item The factors of variability that the error bars are capturing should be clearly stated (for example, train/test split, initialization, random drawing of some parameter, or overall run with given experimental conditions).
        \item The method for calculating the error bars should be explained (closed form formula, call to a library function, bootstrap, etc.)
        \item The assumptions made should be given (e.g., Normally distributed errors).
        \item It should be clear whether the error bar is the standard deviation or the standard error of the mean.
        \item It is OK to report 1-sigma error bars, but one should state it. The authors should preferably report a 2-sigma error bar than state that they have a 96\% CI, if the hypothesis of Normality of errors is not verified.
        \item For asymmetric distributions, the authors should be careful not to show in tables or figures symmetric error bars that would yield results that are out of range (e.g. negative error rates).
        \item If error bars are reported in tables or plots, The authors should explain in the text how they were calculated and reference the corresponding figures or tables in the text.
    \end{itemize}

\item {\bf Experiments compute resources}
    \item[] Question: For each experiment, does the paper provide sufficient information on the computer resources (type of compute workers, memory, time of execution) needed to reproduce the experiments?
    \item[] Answer: \answerNA{} 
    \item[] Justification: This paper does not include experiments.
    \item[] Guidelines:
    \begin{itemize}
        \item The answer NA means that the paper does not include experiments.
        \item The paper should indicate the type of compute workers CPU or GPU, internal cluster, or cloud provider, including relevant memory and storage.
        \item The paper should provide the amount of compute required for each of the individual experimental runs as well as estimate the total compute. 
        \item The paper should disclose whether the full research project required more compute than the experiments reported in the paper (e.g., preliminary or failed experiments that didn't make it into the paper). 
    \end{itemize}
    
\item {\bf Code of ethics}
    \item[] Question: Does the research conducted in the paper conform, in every respect, with the NeurIPS Code of Ethics \url{https://neurips.cc/public/EthicsGuidelines}?
    \item[] Answer: \answerYes{} 
    \item[] Justification: The research conducted in this paper conform in every respect with the NeurIPS Code of Ethics.
    \item[] Guidelines:
    \begin{itemize}
        \item The answer NA means that the authors have not reviewed the NeurIPS Code of Ethics.
        \item If the authors answer No, they should explain the special circumstances that require a deviation from the Code of Ethics.
        \item The authors should make sure to preserve anonymity (e.g., if there is a special consideration due to laws or regulations in their jurisdiction).
    \end{itemize}

\item {\bf Broader impacts}
    \item[] Question: Does the paper discuss both potential positive societal impacts and negative societal impacts of the work performed?
    \item[] Answer: \answerNA{} 
    \item[] Justification: There is no societal impact of the work performed.
    \item[] Guidelines:
    \begin{itemize}
        \item The answer NA means that there is no societal impact of the work performed.
        \item If the authors answer NA or No, they should explain why their work has no societal impact or why the paper does not address societal impact.
        \item Examples of negative societal impacts include potential malicious or unintended uses (e.g., disinformation, generating fake profiles, surveillance), fairness considerations (e.g., deployment of technologies that could make decisions that unfairly impact specific groups), privacy considerations, and security considerations.
        \item The conference expects that many papers will be foundational research and not tied to particular applications, let alone deployments. However, if there is a direct path to any negative applications, the authors should point it out. For example, it is legitimate to point out that an improvement in the quality of generative models could be used to generate deepfakes for disinformation. On the other hand, it is not needed to point out that a generic algorithm for optimizing neural networks could enable people to train models that generate Deepfakes faster.
        \item The authors should consider possible harms that could arise when the technology is being used as intended and functioning correctly, harms that could arise when the technology is being used as intended but gives incorrect results, and harms following from (intentional or unintentional) misuse of the technology.
        \item If there are negative societal impacts, the authors could also discuss possible mitigation strategies (e.g., gated release of models, providing defenses in addition to attacks, mechanisms for monitoring misuse, mechanisms to monitor how a system learns from feedback over time, improving the efficiency and accessibility of ML).
    \end{itemize}
    
\item {\bf Safeguards}
    \item[] Question: Does the paper describe safeguards that have been put in place for responsible release of data or models that have a high risk for misuse (e.g., pretrained language models, image generators, or scraped datasets)?
    \item[] Answer: \answerNA{} 
    \item[] Justification: This paper poses no such risks.
    \item[] Guidelines:
    \begin{itemize}
        \item The answer NA means that the paper poses no such risks.
        \item Released models that have a high risk for misuse or dual-use should be released with necessary safeguards to allow for controlled use of the model, for example by requiring that users adhere to usage guidelines or restrictions to access the model or implementing safety filters. 
        \item Datasets that have been scraped from the Internet could pose safety risks. The authors should describe how they avoided releasing unsafe images.
        \item We recognize that providing effective safeguards is challenging, and many papers do not require this, but we encourage authors to take this into account and make a best faith effort.
    \end{itemize}

\item {\bf Licenses for existing assets}
    \item[] Question: Are the creators or original owners of assets (e.g., code, data, models), used in the paper, properly credited and are the license and terms of use explicitly mentioned and properly respected?
    \item[] Answer: \answerNA{} 
    \item[] Justification: This paper does not use existing assets.
    \item[] Guidelines:
    \begin{itemize}
        \item The answer NA means that the paper does not use existing assets.
        \item The authors should cite the original paper that produced the code package or dataset.
        \item The authors should state which version of the asset is used and, if possible, include a URL.
        \item The name of the license (e.g., CC-BY 4.0) should be included for each asset.
        \item For scraped data from a particular source (e.g., website), the copyright and terms of service of that source should be provided.
        \item If assets are released, the license, copyright information, and terms of use in the package should be provided. For popular datasets, \url{paperswithcode.com/datasets} has curated licenses for some datasets. Their licensing guide can help determine the license of a dataset.
        \item For existing datasets that are re-packaged, both the original license and the license of the derived asset (if it has changed) should be provided.
        \item If this information is not available online, the authors are encouraged to reach out to the asset's creators.
    \end{itemize}

\item {\bf New assets}
    \item[] Question: Are new assets introduced in the paper well documented and is the documentation provided alongside the assets?
    \item[] Answer: \answerNA{} 
    \item[] Justification: This paper does not release new assets.
    \item[] Guidelines:
    \begin{itemize}
        \item The answer NA means that the paper does not release new assets.
        \item Researchers should communicate the details of the dataset/code/model as part of their submissions via structured templates. This includes details about training, license, limitations, etc. 
        \item The paper should discuss whether and how consent was obtained from people whose asset is used.
        \item At submission time, remember to anonymize your assets (if applicable). You can either create an anonymized URL or include an anonymized zip file.
    \end{itemize}

\item {\bf Crowdsourcing and research with human subjects}
    \item[] Question: For crowdsourcing experiments and research with human subjects, does the paper include the full text of instructions given to participants and screenshots, if applicable, as well as details about compensation (if any)? 
    \item[] Answer: \answerNA{} 
    \item[] Justification: This paper does not involve crowdsourcing nor research with human subjects.
    \item[] Guidelines:
    \begin{itemize}
        \item The answer NA means that the paper does not involve crowdsourcing nor research with human subjects.
        \item Including this information in the supplemental material is fine, but if the main contribution of the paper involves human subjects, then as much detail as possible should be included in the main paper. 
        \item According to the NeurIPS Code of Ethics, workers involved in data collection, curation, or other labor should be paid at least the minimum wage in the country of the data collector. 
    \end{itemize}

\item {\bf Institutional review board (IRB) approvals or equivalent for research with human subjects}
    \item[] Question: Does the paper describe potential risks incurred by study participants, whether such risks were disclosed to the subjects, and whether Institutional Review Board (IRB) approvals (or an equivalent approval/review based on the requirements of your country or institution) were obtained?
    \item[] Answer: \answerNA{} 
    \item[] Justification: This paper does not involve crowdsourcing nor research with human subjects.
    \item[] Guidelines:
    \begin{itemize}
        \item The answer NA means that the paper does not involve crowdsourcing nor research with human subjects.
        \item Depending on the country in which research is conducted, IRB approval (or equivalent) may be required for any human subjects research. If you obtained IRB approval, you should clearly state this in the paper. 
        \item We recognize that the procedures for this may vary significantly between institutions and locations, and we expect authors to adhere to the NeurIPS Code of Ethics and the guidelines for their institution. 
        \item For initial submissions, do not include any information that would break anonymity (if applicable), such as the institution conducting the review.
    \end{itemize}

\item {\bf Declaration of LLM usage}
    \item[] Question: Does the paper describe the usage of LLMs if it is an important, original, or non-standard component of the core methods in this research? Note that if the LLM is used only for writing, editing, or formatting purposes and does not impact the core methodology, scientific rigorousness, or originality of the research, declaration is not required.
    \item[] Answer: \answerNA{} 
    \item[] Justification: The core method development in this research does not involve LLMs as any important, original, or non-standard components.
    \item[] Guidelines:
    \begin{itemize}
        \item The answer NA means that the core method development in this research does not involve LLMs as any important, original, or non-standard components.
        \item Please refer to our LLM policy (\url{https://neurips.cc/Conferences/2025/LLM}) for what should or should not be described.
    \end{itemize}

\end{enumerate}

\newpage
\appendix
\section{Regret Analysis}


\begin{lemma}\label{lem:reg-t1}
    Let $\pi^{\Delta,*}$ be defined as in \cref{eq:defproxypol}, then 
    \begin{equation*}
    \sum_{k=1}^K\left(V_{1,r}^*(s_1^k) - V_{1,r}^{\intpol}(s_1^k)\right) \le \frac{\Delta}{\zeta}HK.
    \end{equation*}
\end{lemma}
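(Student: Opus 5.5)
The plan is to prove the bound episode by episode, using a mixture of policies: it suffices to show $V_{1,r}^*(s_1) - V_{1,r}^{\intpol}(s_1) \le \frac{\Delta}{\zeta}H$ for the fixed initial state, and then sum over the $K$ episodes. I will assume $\Delta \le \zeta$; otherwise the right-hand side is at least $HK$ and the claim is trivial, since rewards lie in $[0,1]$ and so every value lies in $[0,H]$.

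The key construction is to mix $\pi^*$ with the minimal-cost policy $\pi_c^*$, which satisfies $V_{1,c}^{\pi_c^*}(s_1) = b - \zeta$. Set
\[
\alpha = \frac{\Delta}{\zeta} \in [0,1],
\]
and let $\tilde{\pi}$ be the policy that, at the start of the episode, follows $\pi_c^*$ with probability $\alpha$ and $\pi^*$ with probability $1-\alpha$. Value functions from $s_1$ are linear under such trajectory-level mixing. The same holds if one works with occupancy measures, since the set of achievable occupancy measures is convex and values are linear in them, so a Markov policy achieving the mixture exists. This linearity is the only structural fact needed.

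For feasibility, linearity and $V_{1,c}^{*}(s_1)\le b$ give
\[
V_{1,c}^{\tilde{\pi}}(s_1) \le (1-\alpha)b + \alpha(b-\zeta) = b - \alpha\zeta = b - \Delta .
\]
So $\tilde{\pi}$ is feasible for the problem defining $\pi^{\Delta,*}$ in \cref{eq:defproxypol}. By optimality of $\pi^{\Delta,*}$ we get $V_{1,r}^{\intpol}(s_1) \ge V_{1,r}^{\tilde{\pi}}(s_1)$. For the reward side, linearity again gives
\[
V_{1,r}^{\tilde{\pi}}(s_1) = (1-\alpha)V_{1,r}^*(s_1) + \alpha V_{1,r}^{\pi_c^*}(s_1) \ge (1-\alpha)V_{1,r}^*(s_1),
\]
using $V_{1,r}^{\pi_c^*}(s_1) \ge 0$. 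Combining the two inequalities yields
\[
V_{1,r}^*(s_1) - V_{1,r}^{\intpol}(s_1) \le \alpha V_{1,r}^*(s_1) \le \frac{\Delta}{\zeta}H .
\]
Summing this over $k$ completes the proof.

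The only delicate point is justifying that the mixture corresponds to an admissible policy whose values are the convex combinations. I would handle this with the occupancy-measure argument, or simply by allowing history-independent randomization at the episode start, since the paper's returned policies are already mixtures. The initial state $s_1^k$ is treated as the fixed $s_1$, consistent with the definitions of $\zeta$ and $\pi_c^*$.
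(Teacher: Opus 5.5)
Your proposal is correct and follows the paper's proof essentially step for step. The paper also forms the mixture $(1-\frac{\Delta}{\zeta})\pi^* + \frac{\Delta}{\zeta}\pi_c^*$, checks that its cost is at most $b-\Delta$, invokes optimality of $\pi^{\Delta,*}$, and bounds the remaining gap $\frac{\Delta}{\zeta}\bigl(V_{1,r}^*(s_1) - V_{1,r}^{\pi_c^*}(s_1)\bigr)$ by $\frac{\Delta}{\zeta}H$. Beyond that, you explicitly justify value linearity via trajectory-level or occupancy-measure mixing and handle the case $\Delta > \zeta$, both of which the paper leaves implicit.
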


\begin{proof}
    Recall the definition
    \begin{equation*}
        \pi_c^* = \argmax_\pi \, b - V_{1,c}^\pi(s_1),
    \end{equation*}
    and $\zeta = b - V_{1,c}^{\pi_c^*}(s_1)$. For each episode $k$, we define a fixed policy $\dpol = (1-\frac{\Delta}{\zeta})\pi^* + \frac{\Delta}{\zeta}\pi_c^*$, and its value function satisfies
    \begin{equation*}
    V_{1,c}^{\dpol}(s_1) = (1-\frac{\Delta}{\zeta})V_{1,c}^*(s_1) + \frac{\Delta}{\zeta}V_{1,c}^{\pi_c^*}(s_1) \le (1-\frac{\Delta}{\zeta})b + \frac{\Delta}{\zeta}(b - \zeta) = b - \Delta.
    \end{equation*}
    Then,
    \begin{align*}
        & \sum_{k=1}^K V_{1,r}^*(s_1^k) - V_{1,r}^{\intpol}(s_1^k) \\
        \le \; & \sum_{k=1}^K V_{1,r}^*(s_1^k) - V_{1,r}^{\dpol}(s_1^k) \\
        = \; & \sum_{k=1}^K V_{1,r}^*(s_1^k) - \left((1-\frac{\Delta}{\zeta})V_{1,r}^*(s_1^k) + \frac{\Delta}{\zeta}V_{1,r}^{\pi_c^*}(s_1^k)\right) \\
        = \; & \sum_{k=1}^K \frac{\Delta}{\zeta}(V_{1,r}^*(s_1^k) - V_{1,r}^{\pi_c^*}(s_1^k)) \\
        \le \; & \frac{\Delta}{\zeta}HK,
    \end{align*}
    where the first inequality is due to the definition of $\intpol$, i.e., for any policy $\pi$, s.t. $V_{1,c}^\pi(s_1) \le b' = b - \Delta$, $V_{1,r}^{\intpol}(s_1) \ge V_{1,r}^\pi(s_1)$, and the second inequality follows from $V_{1,r}^*(s_1) \le H$.
\end{proof}

\begin{lemma}\label{lem:opt-error}
    Let $\pi^k$ be the mixture policy in \cref{algo}, $\pi^{\tau,*}$ and $\pi^{\Delta,*}$ be defined as in \cref{eq:defproxypol-relax,eq:defproxypol}, then
    \begin{equation*}
    \sum_{k=1}^K \left( \hat{V}_{1,\tilde{r}}^{\pi^{\tau,*}}(s_1^k) - \hat{V}_{1,\tilde{r}}^{\pi^k}(s_1^k) \right) \le 2\veps_1 H\sqrt{T} + \frac{UH}{\sqrt{T}},
    \end{equation*}
    and
    \begin{equation*}
    \sum_{k=1}^K \left( \hat{V}_{1,\tilde{r}}^{\intpol}(s_1^k) - \hat{V}_{1,\tilde{r}}^{\pi^k}(s_1^k) \right) \le 2\veps_1 H\sqrt{T} + \frac{UH}{\sqrt{T}}.
    \end{equation*}
\end{lemma}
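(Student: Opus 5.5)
The plan is to reduce each summand to the regret of the inner primal--dual loop (lines \ref{line:pd-start}--\ref{line:pd-end} of \cref{algo}) on the Lagrangian of $\hat{\cP}^k$, and then aggregate over $k$. I treat $\pi^{\tau,*}$ and $\intpol$ together, writing $\pi^\circ$ for either. By \cref{eq:defproxypol-relax,eq:defproxypol}, $\pi^\circ$ is a fixed policy with true cost $V_{1,c}^{\pi^\circ}(s_1)\le b'$. I will use \cref{lem:optimism}, applied to the cost as in the proof of \cref{thm:cv-bound-relax}, to transfer this to the empirical side, $\hat{V}_{1,\underline{c}}^{\pi^\circ}(s_1^k)\le b'$.

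Fix $k$. Since $\pi^k=\frac1T\sum_t\hat{\pi}_t^k$ and the empirical values are averaged accordingly, I write
\begin{equation*}
\hat{V}_{1,\tilde r}^{\pi^\circ}-\hat{V}_{1,\tilde r}^{\pi^k}
=\frac1T\sum_{t=1}^T\Big[\big(\hat{V}_{1,\tilde r}^{\pi^\circ}-\pdcoeff(\hat{V}_{1,\underline c}^{\pi^\circ}-b')\big)-\big(\hat{V}_{1,\tilde r}^{\hat{\pi}_t^k}-\pdcoeff(\hat{V}_{1,\underline c}^{\hat{\pi}_t^k}-b')\big)\Big]
+\frac1T\sum_{t=1}^T\pdcoeff\big(\hat{V}_{1,\underline c}^{\pi^\circ}-b'\big)
-\frac1T\sum_{t=1}^T\pdcoeff\big(\hat{V}_{1,\underline c}^{\hat{\pi}_t^k}-b'\big).
\end{equation*}
The first bracket is $\le 0$ by the primal update \cref{eq:primal-update}, which is exact via \cref{lem:primal-update}. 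The second term is $\le 0$ because $\pdcoeff\ge 0$ and $\hat{V}_{1,\underline{c}}^{\pi^\circ}(s_1^k)\le b'$.

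The third term is the dual regret of projected gradient descent against the comparator $\lambda=0$. I bound it by the standard online gradient descent argument with the rounding $\cR_\Lambda$ treated as a perturbation of size at most $\veps_1$ per step. Expanding $(\hat{\lambda}_{t+1}^k-\lambda)^2$, using gradients bounded by $H$ and $\lambda,\pdcoeff\in[0,U]$, the regret is bounded as follows. The distance term contributes $U^2/(2\eta T)$ and the variance term $\eta H^2/2$; with $\eta=U/(H\sqrt T)$ these total $O(UH/\sqrt T)$. The rounding error contributes at most $O(\veps_1 U/(\eta))\le 2\veps_1H\sqrt T$. This yields the per-episode bound $2\veps_1H\sqrt T+UH/\sqrt T$.

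The main obstacle is the aggregation over $k$. Because $\hat{\lambda}_1^k$ is reset to $0$ every episode, summing the per-episode bounds directly only gives $K(2\veps_1H\sqrt T+UH/\sqrt T)$, which is exactly the form invoked in the proof of \cref{thm:reg-bound-relaxed}. To reach the $K$-free right-hand side as worded, I would first try to reorganize the telescoping of $(\hat{\lambda}_t^k-\lambda)^2$ across all $KT$ dual steps, treating the episodes as one long online-learning sequence. The idea is to charge the reset at each episode boundary to the nonpositive second term above. If this charging fails, the $K$-free statement cannot follow from this line of argument, and I would settle for the $K$-scaled bound. That bound is consistent with \cref{thm:reg-bound-relaxed} and suffices for \cref{thm:sc-bound-relax,thm:sc-bound-strict} under the stated choices of $T,U,\veps_1$.
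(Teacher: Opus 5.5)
Your argument follows the paper's proof step for step. It uses primal optimality of $\hat{\pi}_t^k$ against the fixed comparator, $\hat{V}_{1,\underline{c}}^{\pi^\circ}(s_1^k)\le V_{1,c}^{\pi^\circ}(s_1^k)\le b'$ from \cref{lem:optimism}, and \cref{lem:bound-dual-regret} with $\lambda=0$. Like you, the paper's proof only gets the per-episode bound $2\veps_1H\sqrt{T}+UH/\sqrt{T}$; the missing factor $K$ in the statement is a typo, since the proof of \cref{thm:reg-bound-relaxed} uses $2\veps_1HK\sqrt{T}+UHK/\sqrt{T}$. You can drop the cross-episode charging idea: with comparator $\lambda=0$ and $\hat{\lambda}_1^k=0$ the reset already costs nothing, and the $K$ factor comes from the per-step terms $\frac{\veps_1^2+2\veps_1U}{2\eta}+\frac{\eta H^2}{2}$ that every episode pays, so the $K$-scaled bound is the right target.
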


\begin{proof}
    For any primal-dual iteration $t\in[T]$, 
    \begin{equation*}
    \hat{V}_{1,\tilde{r}}^{\intpol}(s_1^k) - \pdcoeff\hat{V}_{1,\underline{c}}^{\intpol}(s_1^k) \le \hat{V}_{1,\tilde{r}}^{\hat{\pi}_t^k}(s_1^k) - \pdcoeff\hat{V}_{1,\underline{c}}^{\hat{\pi}_t^k}(s_1^k).
    \end{equation*}
    Taking average over $T$ iterations,
    \begin{equation*}
    \frac{1}{T}\sum_{t=1}^T\left(\hat{V}_{1,\tilde{r}}^{\intpol}(s_1^k) - \pdcoeff\hat{V}_{1,\underline{c}}^{\intpol}(s_1^k)\right) \le \frac{1}{T}\sum_{t=1}^T\left(\hat{V}_{1,\tilde{r}}^{\hat{\pi}_t^k}(s_1^k) - \pdcoeff\hat{V}_{1,\underline{c}}^{\hat{\pi}_t^k}(s_1^k)\right).
    \end{equation*}
    Note that the mixture policy $\pi^k$ is the average policies of $\hat{\pi}_t^k$, we have
    \begin{equation}\label{eq:optimality-pd}
        \hat{V}_{1,\tilde{r}}^{\intpol}(s_1^k) - \frac{1}{T}\sum_{t=1}^T\pdcoeff\hat{V}_{1,\underline{c}}^{\intpol}(s_1^k) \le \hat{V}_{1,\tilde{r}}^{\pi^k}(s_1^k) - \frac{1}{T}\sum_{t=1}^T\pdcoeff\hat{V}_{1,\underline{c}}^{\hat{\pi}_t^k}(s_1^k).
    \end{equation}
    Further, we notice that 
    \begin{equation}\label{eq:pd-bound-estV}
        \hat{V}_{1,\underline{c}}^{\intpol}\le V_{1,c}^{\intpol}\le b'.
    \end{equation}
    Thus, for any episode $k$,
    \begin{align*}
        & \hat{V}_{1,\tilde{r}}^{\intpol}(s_1^k) - \hat{V}_{1,\tilde{r}}^{\pi^k}(s_1^k) \\
        = & \left(\hat{V}_{1,\tilde{r}}^{\intpol}(s_1^k) - \frac{1}{T}\sum_{t=1}^T\pdcoeff\hat{V}_{1,\underline{c}}^{\intpol}(s_1^k)\right) - \left(\hat{V}_{1,\tilde{r}}^{\pi^k}(s_1^k) - \frac{1}{T}\sum_{t=1}^T\pdcoeff\hat{V}_{1,\underline{c}}^{\hat{\pi}_t^k}(s_1^k)\right) \\
        & \quad + \frac{1}{T}\sum_{t=1}^T\pdcoeff\left(\hat{V}_{1,\underline{c}}^{\intpol}(s_1^k) - \hat{V}_{1,\underline{c}}^{\hat{\pi}_t^k}(s_1^k)\right) \\
        \le & \frac{1}{T}\sum_{t=1}^T\pdcoeff(\hat{V}_{1,\underline{c}}^{\intpol}(s_1^k) - \hat{V}_{1,\underline{c}}^{\hat{\pi}_t^k}(s_1^k)) \\
        \le & \frac{1}{T}\sum_{t=1}^T\pdcoeff(b' - \hat{V}_{1,\underline{c}}^{\hat{\pi}_t^k}(s_1^k)) \\
        \le & 2\veps_1 H\sqrt{T} + \frac{UH}{\sqrt{T}},
    \end{align*}
    where the first inequality follows from \cref{eq:optimality-pd}, the second inequality follows from \cref{eq:pd-bound-estV}, and the last inequality follows from \cref{lem:bound-dual-regret} by letting $\lambda = 0$ The proof for $\pi^{\tau,*}$ is analogous and is thus omitted.
\end{proof}

\begin{lemma}\label{lem:old-regret}
    $$
        \sum_{k=1}^K\left( \hat{V}_{1,\tilde{r}}^{\pi^k}(s_1^k) - V_{1,r}^{\pi^k}(s_1^k)\right) = \tilde{O}(\sqrt{SAH^3K}).
    $$
\end{lemma}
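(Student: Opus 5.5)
We would follow the regret analysis of \cite{zhang2024settlingsamplecomplexityonline}, adapted to the fact that $\pi^k$ is a mixture of the primal iterates $\hat{\pi}_t^k$. The starting point is the value difference (simulation) lemma applied to the truncated optimistic Bellman recursion. For each episode $k$ and step $h$,
\begin{equation*}
\hat{V}_{h,\tilde{r}}^{\pi^k}(s_h^k) - V_{h,r}^{\pi^k}(s_h^k) \le \EE_{a\sim\pi_h^k}\left[b_{h,r}^{k,\pi^k}(s_h^k,a) + (\hat{P}_{s_h^k,a,h} - P_{s_h^k,a,h})\hat{V}_{h+1,\tilde{r}}^{\pi^k} + P_{s_h^k,a,h}(\hat{V}_{h+1,\tilde{r}}^{\pi^k} - V_{h+1,r}^{\pi^k})\right].
\end{equation*}
The truncation at $H$ only helps here. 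We unroll this along the realized trajectory and collect two martingale difference sequences: one from replacing the expectation over $a$ by $a_h^k$, and one from replacing $P_{s_h^k,a_h^k,h}(\cdot)$ by the realized next state $s_{h+1}^k$. Both are bounded by Freedman's inequality with variance proxies $\VV(P_{s,a,h},V_{h+1}^{\pi^k})$, which gives $\tilde{O}(\sqrt{H\sum_{k,h}\VV})$. This reduces the whole lemma to bounding $\sum_{k,h}\bigl[b_{h,r}(s_h^k,a_h^k) + (\hat{P}-P)_{s_h^k,a_h^k,h}\hat{V}_{h+1,\tilde{r}}^{\pi^k}\bigr]$.

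The key step is controlling $(\hat{P}-P)\hat{V}_{h+1}^{\pi^k}$ with a Bernstein bound that does not lose $\sqrt{S}$. We would use the profile argument described in the methodology. $\hat{P}_{s,a,h}$ is built only from samples collected since the previous doubling trigger. Conditional on the visitation profile, meaning the counts at every trigger epoch for all $(s,a,h)$, these samples are i.i.d.\ and independent of everything used to form $\hat{V}_{h+1}$. The new complication relative to the unconstrained case is that $\hat{V}_{h+1,\tilde{r}}^{\pi^k}$ depends on $\pi^k=\frac1T\sum_t\hat{\pi}_t^k$. Each $\hat{\pi}_t^k$ is determined by backward induction from $\hat{P}_{h'}$, $h'>h$, together with the dual iterate $\hat{\lambda}_t^k\in\Lambda$. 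Because of the rounding $\cR_\Lambda$ and the fact that $\hat{\lambda}_1^k=0$, the whole dual sequence is a deterministic function of the empirical models, which change only at trigger times.

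So we would take a union bound over the profile, the trigger epoch, and a value of the reward-and-cost combination drawn from the finite set $\Lambda$, each contributing polylogarithmically. This yields, with probability $1-\delta$,
\begin{equation*}
(\hat{P}-P)_{s,a,h}\hat{V}_{h+1} \lesssim \sqrt{\VV(\hat{P}_{s,a,h},\hat{V}_{h+1})\log(1/\delta')/N_h(s,a)} + H\log(1/\delta')/N_h(s,a),
\end{equation*}
which is dominated by the bonus in \cref{eq:bonus}. We expect this decoupling to be the main obstacle. There is a real risk that the $h$-level value depends on $\hat{P}_h$ through the dual trajectory, since $\hat{V}_{1}$ uses all layers. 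If so, we would fall back on the approach of \cite{zhang2024settlingsamplecomplexityonline}: fix the dual iterates by covering $\Lambda^T$ through the deterministic recursion, conditionally on the profile. Failing that, we would use an $\ell_\infty$-net over $\hat{V}_{h+1}$ and accept extra logarithmic factors.

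It remains to sum the bonuses and variances. By Cauchy--Schwarz,
\begin{equation*}
\sum_{k,h}\sqrt{\VV/N_h}\le\sqrt{\sum_{k,h}1/N_h}\sqrt{\sum_{k,h}\VV}.
\end{equation*}
Doubling batches make $\sum 1/N_h=\tilde{O}(SAH)$, since each batch's count is comparable to the visits in the next one. We convert $\VV(\hat{P},\hat{V})$ to $\VV(P,V^{\pi^k})$ at lower-order cost, recursively as in the standard analysis. The total variance lemma then gives $\sum_{k,h}\VV(P,V^{\pi^k}_{h+1})=\tilde{O}(H^2K+H\cdot\text{Regret-like terms})$. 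The $H/N$ terms contribute $\tilde{O}(SAH^2)$. Solving the resulting self-bounding inequality $X\lesssim\sqrt{SAH\cdot(H^2K+HX)}+SAH^2$ gives $X=\tilde{O}(\sqrt{SAH^3K})$ plus lower-order burn-in terms, as claimed.
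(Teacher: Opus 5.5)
Your outline follows the paper's proof closely: value-difference unrolling with Freedman for the martingale terms, a profile/doubling decoupling with a union bound over $\lambda\in\Lambda$, Cauchy--Schwarz with $\sum 1/N_h=\tilde O(SAH)$, and a total-variance bound. The key step, however, fails as you state it.

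The union bound over total profiles is \emph{not} polylogarithmic. A total profile specifies the epoch index $I^k_{s,a,h}$ for every $(s,a,h)$ and $k$, so the number of profiles is exponential in $SAH$; the paper, following Lemma 5 of \cite{zhang2024settlingsamplecomplexityonline}, counts at most $(4SAHK)^{2SAH}\log_2 K$. The union bound therefore costs a factor of order $SAH\log^2 K$ inside the logarithm, which is exactly the $6SAH\log_2^2K$ in \cref{lem:keylemma}. A per-$(s,a,h)$ Bernstein bound that holds uniformly over profiles carries $\iota\approx SAH\log^2 K$ rather than $\log(1/\delta')$. It is then not dominated by the bonus in \cref{eq:bonus}, which only affords $\log(1/\delta')$ with $\delta'=\delta/(200SAH^2K^2)$. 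Feeding it into your Cauchy--Schwarz step gives $\sqrt{\iota\cdot SAH\log K\cdot\sum_{k,h}\VV}=\tilde O(\sqrt{S^2A^2H^4K})$, a factor $\sqrt{SAH}$ above the target.

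The missing idea is the aggregation in \cref{lem:keylemma} (Lemma 6 of \cite{zhang2024settlingsamplecomplexityonline}). For each epoch index $l$, do not bound pairs separately. Instead treat $\sum_{s,a,h}\langle\hat P^{(l)}_{s,a,h}-P_{s,a,h},X_{h+1,s,a}\rangle$ as a single martingale run backward in $h$, and apply Freedman once. Then union bound over profiles, dual values and feasible sequences $X_{h+1,s,a}\in\mathcal{X}_{h+1,\mathcal{I}}$, using $0\in\mathcal{X}_{h+1,\mathcal{I}}$ to pass to positive parts. This way the $SAH$ cost enters only once under the square root. Take $X^{j}_{h+1,s,a}$ to be the value estimate at the $j$-th visit of $(s,a,h)$ in epoch $l$ (zero if there is none). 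Cauchy--Schwarz over the at most $2^{l-1}$ visits cancels the $1/2^{l-2}$, and summing over $l$ yields $\tilde O\bigl(\sqrt{SAH\sum_{k,h}\VV(P_{s_h^k,a_h^k,h},\hat V_{h+1,\tilde r}^{\pi^k})}\bigr)+\tilde O(SAH^2)$. The $(\hat P-P)\hat V$ term is bounded directly this way, not by comparison with the bonus.

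Two smaller points. First, a single $\lambda\in\Lambda$ fixes each component $\hat V^{\hat\pi_t^k}_{h+1}$ but not $\hat V^{\pi^k}_{h+1}$. That is why the paper first writes $\hat V^{\pi^k}=\frac1T\sum_t\hat V^{\hat\pi_t^k}$ and applies the bound component-wise. Second, neither of your fallbacks rescues the argument. Covering $\Lambda^T$ costs $T\log\abs{\Lambda}$ in the logarithm, with $T$ polynomially large. An $\ell_\infty$-net over $[0,H]^S$ costs $S\log(\cdot)$, i.e.\ an extra $\sqrt S$, not merely logarithmic factors.
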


\begin{proof}
    By definition, we write
    \begin{align*}
        \hat{V}_{h,\tilde{r}}^{\pi^k}(s_h^k) & = \sum_{a\in\cA} \pi^k(a|s_h^k) \hat{Q}_{h,\tilde{r}}^{\pi^k}(s_h^k, a) \\
        & = \hat{Q}_{h,\tilde{r}}^{\pi^k}(s_h^k, a_h^k) + \left( \sum_{a\in\cA} \pi^k(a|s_h^k) \hat{Q}_{h,\tilde{r}}^{\pi^k}(s_h^k, a) - \hat{Q}_{h,\tilde{r}}^{\pi^k}(s_h^k, a_h^k)\right) \\
        & \le r_h(s_h^k, a_h^k) + b_h^k(s_h^k, a_h^k) + \hat{P}_{s_h^k, a_h^k, h}^k\hat{V}_{h+1,\tilde{r}}^{\pi^k} + \phi_h^k \\
        & \le r_h(s_h^k, a_h^k) + b_h^k(s_h^k, a_h^k) + (\hat{P}_{s_h^k,a_h^k, h}^k - P_{s,a,h})\hat{V}_{h+1,\tilde{r}}^{\pi^k} + (P_{s_h^k,a_h^k,h} - \ind{s_{h+1}^k})\hat{V}_{h+1,\tilde{r}}^{\pi^k} \\
        & \quad + \hat{V}_{h+1,\tilde{r}}^{\pi^k}(s_{h+1}^k) + \phi_h^k,
    \end{align*}
    where 
    \begin{equation*}
    \phi_h^k = \left( \sum_{a\in\cA} \pi^k(a|s_h^k) \hat{Q}_{h,\tilde{r}}^{\pi^k}(s_h^k, a) - \hat{Q}_{h,\tilde{r}}^{\pi^k}(s_h^k, a_h^k)\right)
    \end{equation*}
    is a zero-mean random variable conditional on $\pi^k$. Then by summing over $H$ time steps and telescoping, we have
    \begin{align*}
        \hat{V}_{1,\tilde{r}}^{\pi^k}(s_1^k) & \le \sum_{h=1}^H r_h(s_h^k, a_h^k) + b_h^k(s_h^k, a_h^k) + (\hat{P}_{s_h^k,a_h^k, h}^k - P_{s,a,h})\hat{V}_{h+1,\tilde{r}}^{\pi^k} \\
        & \quad  + (P_{s_h^k,a_h^k,h} - \ind{s_{h+1}^k})\hat{V}_{h+1,\tilde{r}}^{\pi^k} + \sum_{h=1}^H\phi_h^k.
    \end{align*}
    The term we want to bound is now decomposed as
    \begin{align*}
        \sum_{k=1}^K\left( \hat{V}_{1,\tilde{r}}^{\pi^k}(s_1^k) - V_{1,r}^{\pi^k}(s_1^k) \right) & \le \sum_{k=1}^K\sum_{h=1}^H b_h^k(s_h^k, a_h^k) + \sum_{k=1}^K\sum_{h=1}^H(\hat{P}_{s_h^k,a_h^k, h}^k - P_{s,a,h})\hat{V}_{h+1,\tilde{r}}^{\pi^k} + \sum_{k=1}^K\sum_{h=1}^H\phi_h^k \\
        & \quad  + \sum_{k=1}^K\sum_{h=1}^H(P_{s_h^k,a_h^k,h} - \ind{s_{h+1}^k})\hat{V}_{h+1,\tilde{r}}^{\pi^k} + \sum_{k=1}^K\left( \sum_{h=1}^H r_h(s_h^k,a_h^k) - V_{1,r}^{\pi^k}(s_1^k)\right).
    \end{align*}
    We apply \cref{lem:bonus-bound,lem:PhatVhatbound,lem:martbound,lem:P-1bound,lem:r-Vbound}, and conlude that with probability $1-\delta$,
    \begin{equation*}
        \sum_{k=1}^K\left( \hat{V}_{1,\tilde{r}}^{\pi^k}(s_1^k) - V_{1,r}^{\pi^k}(s_1^k)\right) = O\left(\sqrt{SAH^3K\log^5 \frac{SAHK}{\delta}}\right).
    \end{equation*}
\end{proof}

\begin{lemma}\label{lem:bonus-bound}
    With probability at least $1 - 3SAHK\delta'$, 
    \[\sum_{k=1}^K\sum_{h=1}^Hb_{h,r}^{k,\pi^k}(s_h^k,a_h^k) \le \tilde{O}(\sqrt{SAH^3K}).\]
\end{lemma}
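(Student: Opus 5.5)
We split the bonus \cref{eq:bonus} into its two parts and bound each summed along the trajectories. The lower-order part is $c_2 H\log(1/\delta')\sum_{k,h}1/N_h^k(s_h^k,a_h^k)$. Under the doubling batch scheme, when $\bar{N}_h(s,a)\in[2^{j},2^{j+1})$ the count used in $\hat{P}$ is $N_h(s,a)\ge 2^{j-1}$, so $N_h\gtrsim \bar{N}_h/4$. (The first visits before any data are covered by the truncation $\hat{Q}\le H$; these contribute at most $SAH\cdot H$.) Summing $1/\bar{N}$ per $(s,a,h)$ gives $O(\log K)$, so this part is $\tilde{O}(SAH^2)$, which is lower order.

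The main part is $c_1\sum_{k,h}\sqrt{\VV(\hat{P}_{s_h^k,a_h^k,h},\hat{V}^{\pi^k}_{h+1,\tilde r})\log(1/\delta')/N_h^k}$. By Cauchy--Schwarz it is at most
\[
\sqrt{\log(1/\delta')\sum_{k,h}\frac{1}{N_h^k(s_h^k,a_h^k)}}\cdot\sqrt{\sum_{k,h}\VV\bigl(\hat{P}_{s_h^k,a_h^k,h},\hat{V}^{\pi^k}_{h+1,\tilde r}\bigr)}.
\]
As above, the first factor is $\tilde{O}(\sqrt{SAH})$. The task is to show that the total empirical variance is $\tilde{O}(H^2K)$ plus lower-order terms; this gives $\tilde{O}(\sqrt{SAH^3K})$.

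For the variance sum, we first replace $\hat{P}$ by $P$. We use $\VV(\hat P,V)\le \VV(P,V)+|(\hat P-P)V^2|+|(\hat P-P)V|\cdot 2H$ and a Bernstein concentration of $\hat{P}$ against $V$. Decoupling is legitimate because, for a fixed profile and fixed $\hat\lambda$ in the finite net $\Lambda$, $\hat V_{h+1}$ is independent of the batch defining $\hat P_h$; a union bound over profiles, $\Lambda$ and $(s,a,h,k)$ is what produces the $3SAHK\delta'$ failure probability. The resulting correction is $\tilde{O}(H\sqrt{SAH\cdot HK}\cdot H)$ at worst, and ultimately lower order.

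Next we write $\VV(P,V)=P V^2-(PV)^2$ and use the standard law-of-total-variance telescoping. Replace $P_{s_h^k,a_h^k,h}V^2$ by $V^2(s_{h+1}^k)$; the difference is a martingale term of size $\tilde O(H^2\sqrt{HK})$ by Azuma/Freedman. Then bound $V_{h+1}^2(s_{h+1}^k)-(P V_{h+1})^2$. Since $\hat V_h(s_h^k)\approx \tilde r+b+\hat P\hat V_{h+1}$ up to the zero-mean $\phi_h^k$ term, summing over $h$ gives
\[
\sum_h\VV\le \sum_h\bigl(\hat V_h^2(s_h^k)-(P\hat V_{h+1})^2\bigr)-\hat V_1^2+\dots
\le 2H\sum_h\bigl(r_h+b_h+(\hat P-P)\hat V_{h+1}+\phi_h\bigr)+\text{mart}.
\]
Hence $\sum_{k,h}\VV\lesssim H^2K+H\sum_{k,h}b_h^k+H\cdot(\text{model error}+\text{martingale terms})$.

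This yields a self-bounding inequality $X\le \sqrt{SAH\,\iota}\sqrt{H^2K+HX+\tilde O(\cdot)}$, where $X$ is the bonus sum. Solving gives $X=\tilde{O}(\sqrt{SAH^3K}+SAH^2)$.

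The main obstacle is the variance-sum step. The recursion must be run for the policy-evaluation values $\hat V^{\pi^k}$ of a mixture policy with the truncation $\min\{\cdot,H\}$, where the truncation only helps since it keeps $\hat V\in[0,H]$. Control of $(\hat P-P)\hat V$ inside the recursion must also be obtained without monotonicity, relying on the profile/$\Lambda$-net union bound rather than optimism. We would follow the corresponding argument of \cite{zhang2024settlingsamplecomplexityonline} verbatim once this decoupling is in place.
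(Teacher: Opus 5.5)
Your proposal is correct and follows the same route as the paper: Cauchy--Schwarz on the variance part of the bonus, the doubling-batch count bound $\sum_{k,h}1/N_h^k(s_h^k,a_h^k)\le 2SAH\log_2 K$ (\cref{lem:bound-1/n}) for both the first factor and the $c_2$ term, and the law-of-total-variance bound on $\sum_{k,h}\VV(\hat P^k_{s_h^k,a_h^k,h},\hat V^{\pi^k}_{h+1,\tilde r})$ adapted from \cite{zhang2024settlingsamplecomplexityonline} (\cref{lem:variancebound}). The only real difference is that you state the self-bounding step $X\lesssim\sqrt{SAH\iota\,(H^2K+HX+\cdots)}$ explicitly, whereas the paper leaves it implicit in its appeal to Lemmas 10--11 of \cite{zhang2024settlingsamplecomplexityonline}, even though \cref{lem:variancebound} itself contains the term $2H\sum_{k,h}b_{h,r}^{k,\pi^k}(s_h^k,a_h^k)$.
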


\begin{proof}
    By definition of bonus $b_{h,r}^{k,\pi^k}(s_h^k, a_h^k)$, we have
    \[
        \sum_{k=1}^K \sum_{h=1}^H ,b_{h,r}^{k,\pi^k}\left(s_h^k, a_h^k\right) = \frac{460}{9} \sum_{k, h} \sqrt{\frac{\mathbb{V}\left(\widehat{P}_{s_h^k, a_h^k, h}^k, \hat{V}_{h+1,\tilde{r}}^{\pi^k}\right) \log \frac{1}{\delta^{\prime}}}{N_h^k\left(s_h^k, a_h^k\right)}} + \frac{544}{9} \sum_{k, h} \frac{H \log \frac{1}{\delta^{\prime}}}{N_h^k\left(s_h^k, a_h^k\right)} .
    \]
    Applying the Cauchy-Schwarz inequality and \cref{lem:bound-1/n}, we obtain
    \[
    \begin{aligned}
    \sum_{k=1}^K \sum_{h=1}^H b_{h,r}^{k,\pi^k}\left(s_h^k, a_h^k\right) \leq & \frac{460}{9} \sqrt{\sum_{k, h} \frac{\log \frac{1}{\delta^{\prime}}}{N_h^k\left(s_h^k, a_h^k\right)}} \sqrt{\sum_{k, h} \mathbb{V}\left(\widehat{P}_{s_h^k, a_h^k, h}^k, \hat{V}_{h+1,\tilde{r}}^{\pi^k}\right)} \\
    & + \frac{544 H \log \frac{1}{\delta^{\prime}}}{9} \sum_{k, h} \frac{1}{N_h^k\left(s_h^k, a_h^k\right)} \\
    \leq & \frac{460}{9} \sqrt{2 S A H\left(\log _2 K\right)\left(\log \frac{1}{\delta^{\prime}}\right) \sum_{k, h} \mathbb{V}\left(\widehat{P}_{s_h^k, a_h^k, h}^k, \hat{V}_{h+1,\tilde{r}}^{\pi^k}\right)} \\
    & + \frac{1088}{9} S A H^2\left(\log _2 K\right) \log \frac{1}{\delta^{\prime}}.
    \end{aligned}
    \]
    Then by \cref{lem:variancebound}, we have the desired result.
\end{proof}

\begin{lemma}\label{lem:PhatVhatbound}
    \[\sum_{k=1}^K\sum_{h=1}^H(\hat{P}_{s_h^k,a_h^k, h}^k - P_{s,a,h})\hat{V}_{h+1,\tilde{r}}^{\pi^k} \le \tilde{O}(\sqrt{SAH^3K}).\]
\end{lemma}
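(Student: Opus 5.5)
We will follow the profile-based decoupling argument of \cite{zhang2024settlingsamplecomplexityonline}, adapted to the primal-dual structure of \cref{algo}. The obstacle is that $\hat{V}_{h+1,\tilde{r}}^{\pi^k}$ depends on $\hat{P}^k_{s,a,h}$, so Bernstein cannot be applied directly to $(\hat{P}^k_{s,a,h}-P_{s,a,h})\hat{V}^{\pi^k}_{h+1,\tilde r}$. The first step is to observe that the whole state of the algorithm at episode $k$ is determined by two things. One is the sequence of batches used to form each $\hat P_{s,a,h}$; the other is a \emph{profile} recording, for every $(s,a,h)$, the episode indices at which the doubling trigger fired. Because of the doubling rule, each $(s,a,h)$ has at most $\log_2 K+1$ trigger times, so there are at most $K^{O(SAH\log K)}$ profiles. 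Given a profile, the data in the batch defining $\hat{P}_{s,a,h}$ are i.i.d.\ draws from $P_{s,a,h}$, since they are the transitions between two fixed trigger times.

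The second step is to decouple step $h$ from steps $h'>h$. Fix a profile, a dual sequence $(\hat\lambda_t^k)_{t\le T}\in\Lambda^T$, and the empirical models $\hat P_{h'}$ for $h'>h$. Then the primal iterates restricted to steps $h+1,\dots,H$, computed by backward induction in \cref{eq:primal-update}, are deterministic, and so is $\hat{V}_{h+1,\tilde r}^{\pi^k}$; this is where the backward-update form of the primal step matters. A single concern remains: $\hat\lambda_t^k$ is produced from $\hat V_{1,\underline c}$, which involves $\hat P_h$. We handle this by a union bound over the discrete net, i.e.\ over all $\hat\lambda\in\Lambda$ at each of the $T$ iterations. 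Since $|\Lambda|=U/\veps_1+1$, this costs only a $\log(TU/\veps_1)$ factor inside $\log(1/\delta')$, which is polylogarithmic for the parameter choices in \cref{thm:sc-bound-relax,thm:sc-bound-strict}. Conditionally, Bernstein's inequality gives
\[
(\hat P^k_{s,a,h}-P_{s,a,h})\hat V^{\pi^k}_{h+1,\tilde r}\lesssim \sqrt{\frac{\VV(P_{s,a,h},\hat V^{\pi^k}_{h+1,\tilde r})\log(1/\delta')}{N^k_h(s,a)}}+\frac{H\log(1/\delta')}{N^k_h(s,a)} .
\]
Here $\log(1/\delta')$ absorbs the profile and net union bounds.

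The third step is to sum over $(k,h)$ exactly as in \cref{lem:bonus-bound}. Cauchy--Schwarz together with \cref{lem:bound-1/n} bounds the sum by
\[
\sqrt{SAH\log K\log(1/\delta')\textstyle\sum_{k,h}\VV(P_{s_h^k,a_h^k,h},\hat V^{\pi^k}_{h+1,\tilde r})}+SAH^2\log K\log(1/\delta').
\]
We then swap $P$ for $\hat P$ in the variance, at a lower-order cost via the standard inequality between empirical and true variances. After that, \cref{lem:variancebound} bounds the total variance by $\tilde O(H^2K)$, up to terms that are lower order or recursively controlled by the quantity itself, using the law of total variance along trajectories of $\pi^k$. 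This yields $\tilde O(\sqrt{SAH^3K})$, with the $SAH^2$ burn-in absorbed since $\sqrt{K}$ dominates in the regime of interest.

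The main obstacle is the second step: showing rigorously that $\hat V_{h+1}^{\pi^k}$ is measurable with respect to quantities independent of the batch behind $\hat P_h$. The mixture $\pi^k$ couples all steps through the dual sequence, which itself depends on the full model. The plan is to make the conditioning explicit by indexing the Bernstein event by the profile and by $(\hat\lambda_t)_{t\le T}\in\Lambda^T$. If $|\Lambda|^T$ turns out to be too large, so that $T\log|\Lambda|$ is not polylogarithmic, the fallback is to note that each $\hat\pi_t^k$ depends only on the single scalar $\hat\lambda_t^k$. The statement can then be proved for each fixed $\lambda\in\Lambda$ separately, with $\hat V^{\pi^k}=\frac1T\sum_t\hat V^{\hat\pi_t^k}$ handled term by term. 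This costs only $\log|\Lambda|$.
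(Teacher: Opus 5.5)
Your decoupling step (fix a profile and the dual values, use backward induction to make $\hat V^{\hat\pi_t^k}_{h+1,\tilde r}$ a deterministic function of later-step models, then union bound) matches the paper. Your per-$t$ fallback for the dual variables also matches it. The gap is in \emph{how the union bound is paid for}.

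You apply Bernstein to each term $(\hat P^k_{s,a,h}-P_{s,a,h})\hat V^{\pi^k}_{h+1,\tilde r}$ separately and let $\log(1/\delta')$ absorb the union over profiles. But there are $K^{O(SAH\log K)}$ profiles; the paper counts $(4SAHK)^{2SAH}\log_2K$. After that union bound, $\log(1/\delta')=\Theta(SAH\log^2K)$, which is not polylogarithmic. You cannot restrict to the profile of a single $(s,a,h)$ either. Which samples are consumed at steps $h'>h$, and hence $\hat V_{h+1}$, depends on the step-$h$ transitions.

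Your Cauchy--Schwarz step then multiplies this by the $2SAH\log_2K$ from \cref{lem:bound-1/n}. The leading term is therefore $\sqrt{SAH\cdot SAH\cdot H^2K}$ up to logs, i.e.\ $\tilde O(SAH^2\sqrt K)$, a factor $\sqrt{SAH}$ too large. Covering $\hat V_{h+1}\in[0,H]^S$ instead would only give an $\tilde O(\sqrt{S^2AH^3K})$-type bound. No per-pair Bernstein argument reaches $\sqrt{SAH^3K}$ here.

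The missing idea is the aggregation in \cref{lem:keylemma}, which is Lemma 6 of \cite{zhang2024settlingsamplecomplexityonline}. For each batch index $l$, apply Freedman \emph{once} to $\sum_{s,a,h}\langle\hat P^{(l)}_{s,a,h}-P_{s,a,h},X_{h+1,s,a}\rangle$, viewed as a martingale running backward in $h$. Each $X_{h+1,s,a}$ ranges over a finite candidate set containing $0$; here that set is the vectors $\hat V^{\hat\pi}_{h+1,\tilde r}$ indexed by $(k,\lambda)\in[K]\times\Lambda$.

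The union over profiles, dual values and $X$-sequences then enters only once, as a $\tilde O(SAH)$ factor inside $\sqrt{\frac{8}{2^{l-2}}\sum_{s,a,h}\VV(P_{s,a,h},X_{h+1,s,a})\cdot\tilde O(SAH)}$. It replaces, rather than multiplies, the $SAH$ that your Cauchy--Schwarz over pairs produces.

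To pass to $\sum_{k,h}$, proceed as follows.
\begin{enumerate}
\item Split the at most $2^{l-1}$ visits to each $(s,a,h)$, in the epoch that uses $\hat P^{(l)}$, into layers: layer $j$ is the $j$-th visit, with $X=0$ when that visit is absent. This is why the lemma carries $\max\{\cdot,0\}$.
\item Apply the aggregate bound to each layer.
\item Use Cauchy--Schwarz over layers, so that $2^{l-1}\cdot 8/2^{l-2}=16$.
\item Sum over the $\log_2K$ epochs. This gives $\tilde O\bigl(\sqrt{SAH\sum_{k,h}\VV(P_{s_h^k,a_h^k,h},\hat V^{\pi^k}_{h+1,\tilde r})}\bigr)$ plus an $\tilde O(SAH^2)$ term.
\end{enumerate}
The second bound of \cref{lem:variancebound} then finishes the proof; no swap to $\hat P$ in the variance is needed.
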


\begin{proof}
    First we notice that $\pi^k$ is a mixture policy of $\pi^k = \frac{1}{T}\sum_{t=1}^T\hat{\pi}_t^k$, then we have 
    \[
    \begin{aligned}
        \sum_{k=1}^K\sum_{h=1}^H(\hat{P}_{s_h^k,a_h^k, h}^k - P_{s,a,h})\hat{V}_{h+1,\tilde{r}}^{\pi^k} & = \sum_{k=1}^K\sum_{h=1}^H(\hat{P}_{s_h^k,a_h^k, h}^k - P_{s,a,h})\frac{1}{T}\sum_{t=1}^T\hat{V}_{h+1,\tilde{r}}^{\hat{\pi}_t^k} \\
        & = \frac{1}{T}\sum_{t=1}^T\sum_{k=1}^K\sum_{h=1}^H(\hat{P}_{s_h^k,a_h^k, h}^k - P_{s,a,h})\hat{V}_{h+1,\tilde{r}}^{\hat{\pi}_t^k}.
    \end{aligned}
    \]
    Note that given a total profile $\cI\in\cC$ and a dual variable sequence $\lambda\in\Lambda$, $\hat{V}_{h+1,\tilde{r}}^{\hat{\pi}_t^k}$ is determined by 
    \[
        \left\{\widehat{P}_{s, a, h'}^{\left(I_{s, a, h^{\prime}}^k\right)}, r_{h^{\prime}}^{\left(I_{s, a, h^{\prime}}^k\right)}(s, a), c_{h^{\prime}}^{\left(I_{s, a, h^{\prime}}^k\right)}(s, a)\right\}_{h<h^{\prime} \leq H,(s, a, k) \in \mathcal{S} \times \mathcal{A} \times[K]},
    \]
    and $\norm{\hat{V}_{h+1,\tilde{r}}^{\pi^k}}_\infty \le H$. Thus we can invoke \cref{lem:keylemma} and also by \cref{lem:variancebound}, we have
    \[\sum_{k=1}^K\sum_{h=1}^H(\hat{P}_{s_h^k,a_h^k, h}^k - P_{s,a,h})\hat{V}_{h+1,\tilde{r}}^{\pi^k} \le \tilde{O}(\sqrt{SAH^3K}).\]
\end{proof}


\begin{lemma}\label{lem:keylemma}
    Let us first specify the types of vectors $\left\{X_{h, s, a}\right\}$. For each total profile $\mathcal{I} \in \mathcal{C}$, consider any set $\left\{\mathcal{X}_{h, \mathcal{I}}\right\}_{1 \leq h \leq H}$ obeying: for each $1 \leq h \leq H$,
    \begin{itemize}
        \item $\mathcal{X}_{h+1, \mathcal{I}}$ is given by a deterministic function of $\mathcal{I}$, a dual variable $\lambda\in\Lambda$, and
        \[
            \left\{\widehat{P}_{s, a, h'}^{\left(I_{s, a, h^{\prime}}^k\right)}, r_{h^{\prime}}^{\left(I_{s, a, h^{\prime}}^k\right)}(s, a), c_{h^{\prime}}^{\left(I_{s, a, h^{\prime}}^k\right)}(s, a)\right\}_{h<h^{\prime} \leq H,(s, a, k) \in \mathcal{S} \times \mathcal{A} \times[K]} ;
        \]
        \item $\|X\|_{\infty} \leq H$ for each vector $X \in \mathcal{X}_{h, \mathcal{I}}$;
        \item $\mathcal{X}_{h, \mathcal{I}}$ is a set of no more than $K+1$ non-negative vectors in $\mathbb{R}^S$, and contains the all-zero vector 0.
    \end{itemize}
    Suppose that $K \geq S A H \log _2 K$, and construct a set $\left\{\mathcal{X}_{h, \mathcal{I}}\right\}_{1 \leq h \leq H}$ for each $\mathcal{I} \in \mathcal{C}$ satisfying the above properties. Then with probability at least $1-\delta^{\prime}$,
    \[
    \begin{gathered}
    \sum_{s, a, h \in \mathcal{S} \times \mathcal{A} \times[H]}\left\langle\widehat{P}_{s, a, h}^{(l)}-P_{s, a, h}, X_{h+1, s, a}\right\rangle \leq \sum_{s, a, h \in \mathcal{S} \times \mathcal{A} \times[H]} \max \left\{\left\langle\widehat{P}_{s, a, h}^{(l)}-P_{s, a, h}, X_{h+1, s, a}\right\rangle, 0\right\} \\
    \leq \sqrt{\frac{8}{2^{l-2}} \sum_{s, a, h} \mathbb{V}\left(P_{s, a, h}, X_{h+1, s, a}\right)\left(6 S A H \log _2^2 K \right)} + \frac{4 H}{2^{l-2}}\left(6 S A H \log_2^2 K\right)
    \end{gathered}
    \]
    holds simultaneously for all $\mathcal{I} \in \mathcal{C}$, all dual variable sequences, all $2 \leq l \leq \log _2 K+1$, and all sequences $\left\{X_{h, s, a}\right\}_{(s, a, h) \in \mathcal{S} \times \mathcal{A} \times[H]}$ obeying $X_{h, s, a} \in \mathcal{X}_{h+1, \mathcal{I}}, \forall(s, a, h) \in \mathcal{S} \times \mathcal{A} \times[H]$.
\end{lemma}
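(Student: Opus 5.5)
The plan is to adapt the key decoupling lemma of \cite{zhang2024settlingsamplecomplexityonline} (their Lemma on ``total profiles''). The only new ingredient is the dual variable $\lambda\in\Lambda$, which ranges over a finite net and only enlarges the union bound by a polynomial factor absorbed into $\log(1/\delta')$.

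\textbf{Step 1: independent sample pools.} For each $(s,a,h)$, I would pre-generate i.i.d.\ next-state samples from $P_{s,a,h}$ and organize them into doubling batches. The $l$-th batch has size $2^{l-2}$, and $\hat{P}^{(l)}_{s,a,h}$ is its empirical distribution. Because Algorithm~\ref{algo} uses only data collected since the last update, the model used at a trigger of level $l$ is exactly this batch. Fix a total profile $\mathcal{I}\in\mathcal{C}$, which records which batch index $I^k_{s,a,h'}$ is active at each $(s,a,h',k)$, and fix a dual variable $\lambda\in\Lambda$. Then every $X\in\mathcal{X}_{h+1,\mathcal{I}}$ is a deterministic function of batches at steps $h'>h$ only. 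These are independent of the batches $\hat{P}^{(l)}_{s,a,h}$ at step $h$, so we can condition on all later steps.

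\textbf{Step 2: Bernstein for a fixed choice.} Conditional on the later steps, the vector $X_{h+1,s,a}$ is fixed. Bernstein's inequality over the $2^{l-2}$ samples then gives
\[
\max\{\langle\hat{P}^{(l)}_{s,a,h}-P_{s,a,h},X_{h+1,s,a}\rangle,0\}\le \sqrt{\tfrac{2\VV(P_{s,a,h},X_{h+1,s,a})\log(1/\delta'')}{2^{l-2}}}+\tfrac{2H\log(1/\delta'')}{3\cdot 2^{l-2}}.
\]
I would avoid summing these per-term bounds, since that loses a $\sqrt{SAH}$ factor. Instead, I would handle the whole sum at once via a martingale/Freedman argument. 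Process $h=H,\dots,1$ backward, and within each $h$ process the $(s,a)$ terms. Each summand is then a bounded increment whose conditional mean is zero given the previous steps and the already-chosen $X$'s. Freedman's inequality yields a single bound $\sqrt{\frac{8}{2^{l-2}}\sum\VV\cdot L}+\frac{4H}{2^{l-2}}L$, where $L$ is the log of the number of events we union over. The truncation at $0$ can be handled by running the same argument on the indicator-selected subset of terms, which is just another admissible choice of $X$ (the zero vector lies in each $\mathcal{X}$).

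\textbf{Step 3: union bound.} We must union over:
\begin{itemize}
\item all choices of $X_{h+1,s,a}$ from sets of size at most $K+1$, giving $(K+1)^{SAH}$ choices;
\item all profiles, giving $|\mathcal{C}|\le (\log_2K+1)^{\cdots}$; the doubling structure gives $\log|\mathcal{C}|=O(SAH\log_2^2K)$, as in the cited work, using $K\ge SAH\log_2K$;
\item the dual sequences, which contribute $|\Lambda|^{T}$;
\item the levels $l$.
\end{itemize}
The $X$-choices contribute $SAH\log(K+1)$ and the profiles contribute $O(SAH\log_2^2K)$. Together these give the factor $6SAH\log_2^2K$, with the $\log(1/\delta')$ contribution folded in.

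\textbf{Main obstacle.} The main obstacle is the dual sequence. Naively, $|\Lambda|^T$ would contribute $T\log(U/\veps_1)$, which is not $O(SAH\log^2K)$. I would try to argue that for a fixed $\mathcal{I}$, the entire primal--dual trajectory is deterministic given the data. The dual sequence is then itself a function of $\mathcal{I}$ and the batches, so it only needs to be absorbed into the sets $\mathcal{X}$. Those sets already allow up to $K+1$ vectors per step and can be enlarged to include each value $\lambda\in\Lambda$, at the cost of only $\log|\Lambda|$ in the union. The difficulty is that $\lambda$ depends on step-$h$ data through $\hat{V}_{1,\underline{c}}$, which breaks the backward independence. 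Taking the union over each single $\lambda\in\Lambda$ per vector, rather than over sequences, resolves this. The price is a $\log(|\Lambda|)=\tilde{O}(1)$ factor, which is absorbed in $\log(1/\delta')$.
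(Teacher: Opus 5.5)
Your proposal is correct and follows the paper's route. For a fixed profile $\mathcal{I}$ and level $l$, you apply Freedman's inequality to the backward-in-$h$ martingale. You then take a union bound over the $(K+1)^{SAH}$ admissible sequences, the profiles, the dual variables and the levels, and use the zero-vector trick for the truncation at $0$.

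The one real difference is how you pay for $\lambda$. The paper divides $\delta'$ by an extra factor $|\Lambda|KT$, which is effectively a union over one dual value at a time. You instead fold all of $\Lambda$ into each candidate set, $\bigcup_{\lambda\in\Lambda}\mathcal{X}_{h+1,\mathcal{I}}(\lambda)$, which stays a deterministic function of $\mathcal{I}$ and the later-step batches. This is the more robust choice. In \cref{lem:PhatVhatbound}, the vectors $\hat{V}^{\hat{\pi}_t^k}_{h+1,\tilde{r}}$ of different episodes carry different iterates $\hat{\lambda}_t^k$, and each iterate depends on step-$h$ data through $\hat{V}_{1,\underline{c}}$. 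A per-entry choice of $\lambda$, which your enlargement provides, covers exactly this.

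Two small corrections:
\begin{enumerate}
\item The enlargement costs $SAH\log|\Lambda|$, not $\log|\Lambda|$. It is absorbed into the $SAH\log_2^2K$ term (with a constant larger than $6$) only because $|\Lambda|$ is polynomial in $K$ under the paper's parameter choices. It is not absorbed into $\log(1/\delta')$. The paper's own accounting is equally loose here.
\item To get the lower-order term $\frac{4H}{2^{l-2}}(\cdot)$, the martingale increments must be the individual samples $\frac{1}{2^{l-2}}\bigl(X_{h+1,s,a}(s'_i)-\langle P_{s,a,h},X_{h+1,s,a}\rangle\bigr)$, each bounded by $H/2^{l-2}$. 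Using the per-$(s,a)$ inner products as increments would only give a term of order $H(\cdot)$.
\end{enumerate}
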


\begin{proof}
    This proof is adapted from the proof to lemma 6 in \cite{zhang2024settlingsamplecomplexityonline}.
    Let us begin by considering any fixed total profile $\mathcal{I} \in \mathcal{C}$, any fixed integer $l$ obeying $2 \leq l \leq \log _2 K+1$, and any given feasible sequence $\left\{X_{h, s, a}\right\}_{(s, a, h) \in \mathcal{S} \times \mathcal{A} \times[H]}$. Recall that (i) $\widehat{P}_{s, a, h}^{(l)}$ is computed based on the $l$-th batch of data comprising $2^{l-2}$ independent samples; and (ii) each $X_{h+1, s, a}$ is given by a deterministic function of $\mathcal{I}$ and the empirical models for steps $h^{\prime} \in[h+1, H]$. Consequently, \cref{lem:freedman} tells us that: with probability at least $1-\delta^{\prime}$, one has
    \begin{align*}
        & \sum_{s, a, h}\left\langle\widehat{P}_{s, a, h}^{(l)}-P_{s, a, h}, X_{h+1, s, a}\right\rangle \\
        & \quad \leq \sqrt{\frac{8}{2^{l-2}} \sum_{s, a, h} \mathbb{V}\left(P_{s, a, h}, X_{h+1, s, a}\right) \log \frac{3 \log _2(S A H K)}{\delta^{\prime}}}+\frac{4 H}{2^{l-2}} \log \frac{3 \log _2(S A H K)}{\delta^{\prime}}
    \end{align*}
    where we view the left-hand side as a martingale sequence from $h=H$ back to $h=1$.
    Moreover, given that each $X_{h, s, a}$ has at most $K+1$ different choices (since we assume $\left|\mathcal{X}_{h, \mathcal{I}}\right| \leq K+1$ ), there are no more than $(K+1)^{S A H} \leq(2 K)^{S A H}$ possible choices of the feasible sequence $\left\{X_{h, s, a}\right\}_{(s, a, h) \in \mathcal{S} \times \mathcal{A} \times[H]}$. In addition, it has been shown in Lemma 5 of \cite{zhang2024settlingsamplecomplexityonline} that there are no more than $(4 S A H K)^{2 S A H} \log _2 K$ possibilities of the total profile $\mathcal{I}$. Taking the union bound over all these choices and replacing $\delta^{\prime}$ with $\delta^{\prime} /\left((4 S A H K)^{2 S A H} \log _2 K(2 K)^{S A H} \log _2 K \abs{\Lambda}KT\right)$, we can demonstrate that with probability at least $1-\delta^{\prime}$,
    \begin{align*}
        & \sum_{s, a, h}\left\langle\widehat{P}_{s, a, h}^{(l)}-P_{s, a, h}, X_{h+1, s, a}\right\rangle \\
        \leq \,  & \sqrt{\frac{8}{2^{l-2}} \sum_{s, a, h} \mathbb{V}\left(P_{s, a, h}, X_{h+1, s, a}\right)\left(6 S A H \log _2^2 K \right)} + \frac{4 H}{2^{l-2}}\left(6 S A H \log_2^2 K\right)
    \end{align*}
    holds simultaneously for all $\mathcal{I} \in \mathcal{C}$, all dual variable sequences, all $2 \leq l \leq \log _2 K+1$, and all feasible sequences $\left\{X_{h, s, a}\right\}_{(s, a, h) \in \mathcal{S} \times \mathcal{A} \times[H]}$.
    Finally, recalling our assumption $0 \in \mathcal{X}_{h+1, \mathcal{I}}$, we see that for every total profile $\mathcal{I}$ and its associated feasible sequence $\left\{X_{h, s, a}\right\}$
    \[
    \sum_{s, a, h} \max \left\{\left\langle\widehat{P}_{s, a, h}^{(l)}-P_{s, a, h}, X_{h+1, s, a}\right\rangle, 0\right\} \in\left\{\sum_{s, a, h}\left\langle\widehat{P}_{s, a, h}^{(l)}-P_{s, a, h}, \widetilde{X}_{h+1, s, a}\right\rangle \mid \widetilde{X}_{h+1, s, a} \in \mathcal{X}_{h+1, \mathcal{I}}, \forall(s, a, h)\right\}
    \]
    holds true. Consequently, the uniform upper bound on the right-hand side continues to be a valid upper bound on $\sum_{s, a, h} \max \left\{\left\langle\widehat{P}_{s, a, h}^{(l)}-P_{s, a, h}, X_{h+1, s, a}\right\rangle, 0\right\}$. This concludes the proof.
\end{proof}

\begin{lemma}\label{lem:martbound}
    With probability at least $1-4\delta'\log(KH)$,
    \[\sum_{k=1}^K\sum_{h=1}^H\phi_h^k \le \tilde{O}(\sqrt{H^3K}).\]
\end{lemma}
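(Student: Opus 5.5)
We treat $\sum_{k,h}\phi_h^k$ as a martingale difference sequence and apply Freedman's inequality (\cref{lem:freedman}), with a variance term that is controlled by a total-variance argument. Order the pairs $(k,h)$ lexicographically and let $\mathcal{F}_{k,h}$ be the $\sigma$-field generated by all trajectories up to episode $k-1$ together with $(s_1^k,a_1^k,\dots,s_h^k)$.

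Because of the doubling batch updates, $\pi^k$ and $\hat{Q}_{h,\tilde{r}}^{\pi^k}$ depend only on data from episodes before $k$, so both are $\mathcal{F}_{k,h}$-measurable. Since $a_h^k\sim\pi^k_h(\cdot\mid s_h^k)$, we get $\EE[\phi_h^k\mid\mathcal{F}_{k,h}]=0$. The truncation at $H$ in \cref{algo} gives $0\le\hat{Q}_{h,\tilde{r}}^{\pi^k}\le H$, hence $|\phi_h^k|\le H$. The conditional variance satisfies
\[
\EE[(\phi_h^k)^2\mid\mathcal{F}_{k,h}] = \VV\bigl(\pi^k_h(\cdot\mid s_h^k), \hat{Q}_{h,\tilde{r}}^{\pi^k}(s_h^k,\cdot)\bigr) =: W_h^k .
\]
Freedman's inequality, applied with a doubling (peeling) grid over the unknown total variance $W=\sum_{k,h}W_h^k\in[0,H^3K]$, yields with probability $1-4\delta'\log(KH)$ (the $\log(KH)$ factor comes from the number of grid levels)
\[
\sum_{k,h}\phi_h^k \lesssim \sqrt{W\log(1/\delta')} + H\log(1/\delta').
\]

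A crude bound $W_h^k\le H^2$ gives only $W\le H^3K$, so $\sqrt{W\log(1/\delta')}=\tilde{O}(H^{3/2}\sqrt{K})$, which is already the claimed order $\tilde{O}(\sqrt{H^3K})$. This holds because the statement carries no $SA$ factor and $\sqrt{H^3K}$ is exactly $\sqrt{(H\cdot K)\cdot H^2}$. The lemma therefore follows from Azuma/Freedman with range $H$ over $HK$ terms, and no total-variance refinement is needed. I would still present the Freedman form, since it matches the probability budget $4\delta'\log(KH)$ in the statement.

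The main obstacle is justifying the measurability of $\pi^k$ with respect to $\mathcal{F}_{k,1}$. $\pi^k$ is the average of the primal-dual iterates $\hat{\pi}_t^k$, which are computed from $\hat{P}$, the bonuses, and the rounded dual variables before episode $k$ starts, so no within-episode information leaks in. I would state this explicitly: since the mixture $\pi^k$ is fixed before episode $k$, the conditional mean of $\hat{Q}_{h,\tilde{r}}^{\pi^k}(s_h^k,a_h^k)$ is exactly $\hat{V}_{h,\tilde{r}}^{\pi^k}(s_h^k)=\sum_a\pi^k(a\mid s_h^k)\hat{Q}_{h,\tilde{r}}^{\pi^k}(s_h^k,a)$. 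After that, a union over the peeling levels completes the proof.
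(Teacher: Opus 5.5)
Your proposal is correct and matches the paper's proof. Both apply Freedman's inequality to the martingale differences $\phi_h^k$, which are bounded by $H$ and zero-mean given $\pi^k$ and $s_h^k$, and use the crude per-step variance bound $H^2$; the total variance over the $KH$ terms is then at most $H^3K$, which yields $2\sqrt{2KH^3\log(1/\delta')}+3H\log(1/\delta')$. As you observe, no total-variance refinement is needed, and $\pi^k$ is measurable simply because it is computed at the start of episode $k$ from past data, not because of the doubling schedule.
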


\begin{proof}
    Note that $\phi_h^k = \left( \sum_{a\in\cA} \pi^k(a|s_h^k) \hat{Q}_{h,\tilde{r}}^{\pi^k}(s_h^k, a) - \hat{Q}_{h,\tilde{r}}^{\pi^k}(s_h^k, a_h^k)\right)$ is a zero-mean random variable conditional on $\pi^k$ and is upper bounded by constant $H$. By \cref{lem:freedman}, we have
    \begin{align*}
        \sum_{k=1}^K\sum_{h=1}^H\phi_h^k & \le 2\sqrt{2}\sqrt{\sum_{k=1}^K\sum_{h=1}^H\textsf{Var}(\phi_h^k)\log\frac{1}{\delta'}} + 3H\log\frac{1}{\delta'} \\
        & \le 2\sqrt{2KH^3\log\frac{1}{\delta'}} + 3H\log\frac{1}{\delta'}
    \end{align*}
    with probability at least $1 - 4\delta'\log(KH)$.
\end{proof}

\begin{lemma}\label{lem:P-1bound}
    With probability at least $1 - SAH^2K^2\delta'$,
    $$\sum_{k=1}^K\sum_{h=1}^H(P_{s_h^k,a_h^k,h} - \bos{1}_{s_{h+1}^k})\hat{V}_{h+1,\tilde{r}}^{\pi^k} \le \tilde{O}(\sqrt{H^2K}).$$
\end{lemma}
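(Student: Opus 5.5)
The plan is to treat the sum as a martingale difference sequence over the natural filtration and apply a Freedman/Azuma-type bound, in the form of \cref{lem:freedman}.

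First I will order all pairs $(k,h)$ lexicographically. Let $\mathcal{F}_{k,h}$ be the $\sigma$-field generated by all observations up to and including $(s_h^k,a_h^k)$. The key structural point is measurability. The function $\hat{V}_{h+1,\tilde{r}}^{\pi^k}$ is computed from $\hat{P}$, the bonuses, and the dual iterates $\hat{\lambda}_t^k$. Because of the doubling batch updates, these are only refreshed at the end of an episode, so they are determined by data from episodes $1,\dots,k-1$. Hence $\hat{V}_{h+1,\tilde{r}}^{\pi^k}$ is $\mathcal{F}_{k,1}$-measurable, and in particular $\mathcal{F}_{k,h}$-measurable.

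Conditionally on $\mathcal{F}_{k,h}$, the next state $s_{h+1}^k$ is distributed as $P_{s_h^k,a_h^k,h}$. Therefore
\[
\xi_h^k \defeq (P_{s_h^k,a_h^k,h}-\bos{1}_{s_{h+1}^k})\hat{V}_{h+1,\tilde{r}}^{\pi^k}
\]
has zero conditional mean. Since $0\le \hat{V}\le H$ (the $Q$-values are truncated at $H$), we also have $\abs{\xi_h^k}\le H$.

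There is a subtlety here, and I expect it to be the only real obstacle. Unlike \cref{lem:PhatVhatbound}, no decoupling of $\hat{P}$ from $\hat{V}$ is needed, because the randomness being averaged is the fresh transition $s_{h+1}^k$, which is independent of everything used to build $\hat{V}$. What must be checked is that the dual rounding and the mixture over $T$ iterates do not introduce dependence on the current episode's transitions. This follows from \cref{algo}, where lines \ref{line:pd-start}--\ref{line:pd-end} run before the episode is executed. The failure probability $SAH^2K^2\delta'$ in the statement is generous. If one wishes to be safe, one can apply the concentration bound separately for each $(k,h)$ prefix length (or a union bound over $K H$ stopping indices and the $\log$ variance levels) and take a union bound; this accounts for the stated probability.

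Next, applying Azuma--Hoeffding directly to the $KH$ increments bounded by $H$ gives $O(\sqrt{KH\cdot H^2\log(1/\delta')})=\tilde{O}(\sqrt{H^3K})$. That is weaker than claimed, so Freedman is needed.

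I will apply \cref{lem:freedman}, which gives
\[
\sum_{k,h}\xi_h^k\lesssim \sqrt{\textstyle\sum_{k,h}\VV(P_{s_h^k,a_h^k,h},\hat{V}_{h+1,\tilde{r}}^{\pi^k})\log(1/\delta')}+H\log(1/\delta').
\]
The remaining step is to bound the predictable variance sum. For this I will invoke \cref{lem:variancebound}, the same total-variance bound used in \cref{lem:bonus-bound}. It controls $\sum_{k,h}\VV(P,\hat{V}^{\pi^k}_{h+1})$ by $\tilde{O}(KH^2)$ plus lower-order terms, via the law of total variance: per episode, the variance of a return bounded by $H$ is at most $H^2$, plus the optimism gap. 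Substituting gives $\tilde{O}(\sqrt{H^2K})$, up to lower-order terms. Those lower-order terms are absorbed into the final $\tilde{O}(\sqrt{SAH^3K})$ of \cref{lem:old-regret} when $K\ge SAH$.
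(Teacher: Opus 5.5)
Your proposal is correct and follows essentially the same route as the paper: treat each $(P_{s_h^k,a_h^k,h}-\bos{1}_{s_{h+1}^k})\hat{V}_{h+1,\tilde{r}}^{\pi^k}$ as a zero-mean martingale increment bounded by $H$, apply Freedman's inequality (\cref{lem:freedman}), and control the predictable variance $\sum_{k,h}\VV(P_{s_h^k,a_h^k,h},\hat{V}_{h+1,\tilde{r}}^{\pi^k})$ via \cref{lem:variancebound}. Your filtration argument is a more explicit version of the paper's one-line independence claim: $\hat{P}$, the bonuses, the dual iterates and the mixture $\pi^k$ are all fixed before episode $k$'s transitions. You are also right that the lower-order terms only disappear from the literal $\tilde{O}(\sqrt{H^2K})$ once $K$ is large enough.
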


\begin{proof}
    We note that conditional on state-action pair $(s_h^k, a_h^k)$, the vectors $P_{s_h^k,a_h^k,h}$ and $\bos{1}_{s_{h+1}^k}$ are both independent of the value function estimate $\hat{V}_{h+1,\tilde{r}}^{\pi^k}$. Also, the vector $\bos{1}_{s_{h+1}^k}$ has the mean of $P_{s_h^k,a_h^k,h}$. Hence, $(P_{s_h^k,a_h^k,h} - \bos{1}_{s_{h+1}^k})\hat{V}_{h+1,\tilde{r}}^{\pi^k}$ is a zero-mean random variable bounded by $H$ from above, and we thus apply \cref{lem:freedman} and have
    \begin{align*}
        \sum_{k=1}^K\sum_{h=1}^H(P_{s_h^k,a_h^k,h} - \bos{1}_{s_{h+1}^k})\hat{V}_{h+1,\tilde{r}}^{\pi^k} \le 2\sqrt{2}\sqrt{\sum_{k=1}^K\sum_{h=1}^H\VV\left(P_{s_h^k,a_h^k,h}, \hat{V}_{h+1,\tilde{r}}^{\pi^k}\right)\log\frac{1}{\delta'}} + 3H\log\frac{1}{\delta'}
    \end{align*}
    with probability at least $1 - SAH^2K^2\delta'$. By \cref{lem:variancebound}, we obtain our lemma.
\end{proof}

\begin{lemma}\label{lem:r-Vbound}
    With probability at least $1 - 4\delta'\log(KH)$, 
    $$\sum_{k=1}^K\left( \sum_{h=1}^H r_h(s_h^k,a_h^k) - V_{1,r}^{\pi^k}(s_1^k)\right) \le \tilde{O}(\sqrt{H^2K}).$$
\end{lemma}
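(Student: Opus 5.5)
We write the summand as a martingale difference sequence and apply \cref{lem:freedman}, exactly as in \cref{lem:martbound,lem:P-1bound}.

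For each episode $k$, define
\[
\xi^k \coloneqq \sum_{h=1}^H r_h(s_h^k,a_h^k) - V_{1,r}^{\pi^k}(s_1^k).
\]
Let $\mathcal{F}_{k-1}$ be the sigma-field generated by all data collected before episode $k$. The policy $\pi^k$ is built from the primal-dual iterations on the empirical model, and that model uses only past data. Hence $\pi^k$ is $\mathcal{F}_{k-1}$-measurable. Given $\mathcal{F}_{k-1}$ and $s_1^k$, the trajectory in episode $k$ is generated by running $\pi^k$ in the true MDP $P$. By the definition of $V_{1,r}^{\pi^k}$, this gives $\EE[\xi^k\mid\mathcal{F}_{k-1}]=0$. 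Moreover, $\abs{\xi^k}\le H$, since both terms lie in $[0,H]$. So $\{\xi^k\}_k$ is a bounded martingale difference sequence.

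A plain Azuma bound would give only $O(H\sqrt{K\log(1/\delta')})=\tilde O(\sqrt{H^2K})$, which is already the claimed rate. To stay consistent with the rest of the appendix, we instead apply \cref{lem:freedman} and obtain
\[
\sum_k \xi^k \le 2\sqrt{2}\sqrt{\sum_k \mathrm{Var}(\xi^k\mid\mathcal{F}_{k-1})\log\tfrac1{\delta'}} + 3H\log\tfrac1{\delta'}.
\]
The variance term is bounded by $\mathrm{Var}(\xi^k\mid\mathcal{F}_{k-1})\le \EE[(\xi^k)^2\mid\mathcal{F}_{k-1}]\le H^2$, so the sum of conditional variances is at most $KH^2$. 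The result is a bound of $\tilde O(\sqrt{H^2K})$.

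The failure probability $4\delta'\log(KH)$ comes from the peeling/doubling argument over the unknown variance scale inside Freedman's inequality, the same as in \cref{lem:martbound}.

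The only delicate step is the measurability claim, namely that $\pi^k$ is fixed before episode $k$ begins. This holds because the dual iterates $\hat\lambda_t^k$ and the empirical quantities $\hat P$ and $N_h$ used in episode $k$ are updated only at the end of earlier episodes, after their trajectories have been collected.

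If the initial states $s_1^k$ are random, we include $s_1^k$ in the conditioning. The argument above is unchanged.
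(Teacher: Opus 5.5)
Your proposal is correct and is essentially the paper's proof. Both treat $\sum_h r_h(s_h^k,a_h^k) - V_{1,r}^{\pi^k}(s_1^k)$ as a zero-mean increment bounded by $H$ given the past, apply Freedman's inequality (\cref{lem:freedman}) with each conditional variance at most $H^2$, and get $2\sqrt{2KH^2\log(1/\delta')}+3H\log(1/\delta')$. Your filtration (past data together with $s_1^k$) makes the paper's informal ``conditional on $\pi^k$'' precise. One small correction: the dual iterates $\hat\lambda_t^k$ are computed at the start of episode $k$ from the current empirical model and $s_1^k$, not updated at the end of earlier episodes, but your measurability conclusion still holds.
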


\begin{proof}
    Note that conditional on $\pi^k$, $E_k\defeq\sum_{h=1}^Hr_h(s_h^k, a_h^k) - V_{1,r}^{\pi^k}(s_1^k)$ is a zero-mean random variable upper bounded by constant $H$. By \cref{lem:freedman}, we have
    \begin{align*}
        \ABS{\sum_{k=1}^KE_k} & \le 2\sqrt{2}\sqrt{\sum_{k=1}^K\textsf{Var}(E_k)\log\frac{1}{\delta'}} + 3H\log\frac{1}{\delta'} \\
        & \le 2\sqrt{2KH^2\log\frac{1}{\delta'}} + 3H\log\frac{1}{\delta'},
    \end{align*}
    with probability at least $1 - 4\delta'\log(KH)$, where the last inequality holds because $\abs{E_k}\le H$.
\end{proof}

\begin{lemma}\label{lem:variancebound}
    With probability at least $1 - 6SAHK\delta'$, 
    $$
        \sum_{k=1}^{K} \sum_{h=1}^{H} \mathbb{V} \left( \hat{P}_{s_h^k, a_h^k, h}^k , \hat{V}_{h+1,\tilde{r}}^{\pi^k} \right) \le \tilde{O}(H^2K + \sqrt{H^5K} + SAH^3),
    $$
    $$
        \sum_{k=1}^K \sum_{h=1}^H \mathbb{V}\left(P_{s_h^k, a_h^k, h}, \hat{V}_{h+1,\tilde{r}}^{\pi^k}\right) \le \tilde{O}(H^2K + \sqrt{H^5K} + SAH^3).
    $$
\end{lemma}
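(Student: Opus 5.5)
We follow the law-of-total-variance argument used for variance terms in unconstrained MDPs, as in Zhang et al.\ 2024. We treat the true-model sum first and then transfer the bound to the empirical model.

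\emph{Step 1 (true model).} Write $X^k_{h+1}=\hat V^{\pi^k}_{h+1,\tilde r}$, so $\norm{X^k_{h+1}}_\infty\le H$. Use
\[
\VV(P_{s_h^k,a_h^k,h},X^k_{h+1}) = P_{s_h^k,a_h^k,h}(X^k_{h+1})^2-\bigl(P_{s_h^k,a_h^k,h}X^k_{h+1}\bigr)^2,
\]
and insert $(X^k_{h+1}(s^k_{h+1}))^2$ and $(X^k_h(s^k_h))^2$ to telescope over $h$. This yields three pieces:
\begin{itemize}
\item a martingale term $\sum_{k,h}(P_{s_h^k,a_h^k,h}-\bos{1}_{s^k_{h+1}})(X^k_{h+1})^2$;
\item a telescoping term bounded by $H^2K$;
\item the term $\sum_{k,h}\bigl[(X^k_h(s^k_h))^2-(P_{s_h^k,a_h^k,h}X^k_{h+1})^2\bigr]$.
\end{itemize}
For the martingale term, Freedman (\cref{lem:freedman}) gives $\tilde O\bigl(\sqrt{H^2\sum\VV(P,(X)^2)}\bigr)\le\tilde O\bigl(\sqrt{H^4\sum\VV(P,X)}\bigr)$, using $\VV(P,X^2)\le 4H^2\VV(P,X)$.

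For the third term, factor $a^2-b^2\le 2H(a-b)_+$ and expand $X^k_h(s^k_h)$ by the Bellman recursion defining $\hat Q^{\pi}_{h,\tilde r}$. Then $X^k_h(s^k_h)-P_{s_h^k,a_h^k,h}X^k_{h+1}$ splits into four parts: $r_h\le 1$, the bonus $b^{k,\pi^k}_{h,r}$, the term $(\hat P-P)X^k_{h+1}$, and the zero-mean action-sampling noise $\phi^k_h$. Summed, this gives
\[
2H\Bigl[HK+\sum b + \sum(\hat P-P)X + \sum\phi\Bigr].
\]
The bonus sum is controlled by the Cauchy--Schwarz bound in \cref{lem:bonus-bound}, and the $(\hat P-P)X$ sum by \cref{lem:keylemma}. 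Both are of order $\sqrt{SAH\,\sum\VV}$ plus $SAH^2$ lower-order terms, up to logarithms. The $\phi$ sum is bounded by \cref{lem:martbound}.

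\emph{Step 2 (self-bounding inequality).} Let $Y=\sum_{k,h}\VV(P,X^k_{h+1})$. Collecting terms gives
\[
Y\le \tilde O\bigl(H^2K+\sqrt{H^5K}+H\sqrt{SAH\,Y}+\sqrt{H^4Y}+SAH^3\bigr).
\]
Solving this quadratic in $\sqrt Y$ gives the second claim, $Y=\tilde O(H^2K+\sqrt{H^5K}+SAH^3)$.

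One subtlety: applying \cref{lem:bonus-bound} here is circular, since that lemma itself invokes \cref{lem:variancebound}. So we keep the bonus bound in its unresolved form $\sqrt{SAH\,\hat Y}+SAH^2$, where $\hat Y$ is the empirical-model sum, and solve the two inequalities jointly.

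\emph{Step 3 (empirical model).} Relate $\hat Y$ to $Y$ through
\[
\VV(\hat P,X)-\VV(P,X)\le(\hat P-P)X^2 + \bigl|(PX)^2-(\hat PX)^2\bigr|.
\]
\begin{itemize}
\item The term $(\hat P-P)X^2$ is handled by \cref{lem:keylemma} applied to the vectors $X^2/H$. These are still deterministic functions of the profile and the dual sequence, bounded by $H$, and nonnegative. This gives $\tilde O\bigl(\sqrt{SAH\cdot H^2Y}+SAH^3\bigr)$.
\item The second term is at most $2H\abs{(\hat P-P)X}$, bounded the same way by $\tilde O\bigl(H\sqrt{SAHY}+SAH^3\bigr)$.
\end{itemize}
Hence $\hat Y\le Y+\tilde O(\sqrt{SAH^3Y}+SAH^3)\le 2Y+\tilde O(SAH^3)$ by AM--GM, which yields the first claim.

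\emph{Main obstacle.} The delicate point is that $X^k_{h+1}$ is a $T$-mixture value, so it is not obviously among the at most $K+1$ vectors required by \cref{lem:keylemma}. We would handle this as in \cref{lem:PhatVhatbound}: decompose into the per-iteration values $\hat V^{\hat\pi^k_t}$ and include $T$ and $\abs{\Lambda}$ in the union bound. Since the variance is convex in $X$, the mixture's variance is at most the average of the per-iteration variances. The remaining care is the circular coupling with the bonus lemma noted in Step 2.
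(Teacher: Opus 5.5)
Your overall plan is the paper's: a law-of-total-variance telescoping, bonus and \cref{lem:keylemma} terms fed into a self-bounding inequality, solved jointly with the empirical sum. The paper differs mainly in telescoping the empirical sum directly rather than transferring it as in your Step~3, and that difference is harmless. The genuine gap is in Step~1, at the action-sampling noise. From $a^2-b^2\le 2H(a-b)_+$ and the Bellman split you only get
\[
\bigl(X^k_h(s^k_h)-P_{s_h^k,a_h^k,h}X^k_{h+1}\bigr)_+\le r_h+b^{k,\pi^k}_{h,r}+\max\bigl\{(\hat P^k_{s_h^k,a_h^k,h}-P_{s_h^k,a_h^k,h})X^k_{h+1},0\bigr\}+\max\{\phi^k_h,0\},
\]
and you cannot drop the positive parts. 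For the $(\hat P-P)X$ piece this is fine, because \cref{lem:keylemma} controls exactly the sum of positive parts. But $\max\{\phi^k_h,0\}$ is not a martingale difference, so \cref{lem:martbound} does not apply. Its conditional mean is half the mean absolute deviation of $\hat Q^{\pi^k}_{h,\tilde r}(s^k_h,\cdot)$ under $\pi^k(\cdot\mid s^k_h)$. That is of order $H$ whenever the mixture randomizes between actions whose reward $Q$-values differ by order $H$, as happens when the iterates $\hat\pi^k_t$ trade reward against cost. So $2H\sum_{k,h}\max\{\phi^k_h,0\}$ is in general only $O(H^3K)$. The paper keeps this term as $2H\sum\max\{\xi^k_h,0\}$ and then cites Zhang et al., whose greedy policies have no such term, so it does not supply this step either.

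The fix is to remove the action randomness at the level of squares, before taking positive parts. By Jensen, $(X^k_h(s^k_h))^2\le \hat Q^{\pi^k}_{h,\tilde r}(s^k_h,a^k_h)^2+\psi^k_h$, where $\psi^k_h:=\sum_a\pi^k(a\mid s^k_h)\hat Q^{\pi^k}_{h,\tilde r}(s^k_h,a)^2-\hat Q^{\pi^k}_{h,\tilde r}(s^k_h,a^k_h)^2$ is a martingale difference bounded by $H^2$, whose sum is $\tilde O(\sqrt{H^5K})$. Then $\hat Q^{\pi^k}_{h,\tilde r}(s^k_h,a^k_h)^2-(P_{s_h^k,a_h^k,h}X^k_{h+1})^2\le 2H\bigl(r_h+b^{k,\pi^k}_{h,r}+\max\{(\hat P-P)X^k_{h+1},0\}\bigr)$, with no noise term. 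A smaller point: Freedman gives $\tilde O(\sqrt{\sum\VV(P,X^2)}+H^2)\le\tilde O(H\sqrt{Y}+H^2)$ for your martingale term. Your extra factor $H$ would leave an additive $H^4$ after solving for $Y$.

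Your resolution of the ``main obstacle'' also points the wrong way. Applying \cref{lem:keylemma} to the per-iteration vectors $\hat V^{\hat\pi^k_t}_{h+1,\tilde r}$ gives bounds in terms of $\frac1T\sum_t\sum_{k,h}\VV(P_{s_h^k,a_h^k,h},\hat V^{\hat\pi^k_t}_{h+1,\tilde r})$. Convexity only says $Y$ is at most this quantity, whereas closing your self-bounding inequality in $Y$ needs the reverse. Your Step~1 telescoping does not control these per-iteration sums either, because the trajectories are generated by $\pi^k$, not by $\hat\pi^k_t$. In Step~3 the decomposition does not even apply, since $(\hat P-P)X^2$ is not linear in the per-iteration values. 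The paper's proof does not resolve this point: it simply invokes Zhang et al., where the value is a deterministic function of the profile. So this step needs an additional idea. One repair is to run the whole variance argument on the value of the policy actually executed, realizing the mixture by a single uniform draw of $t$ per episode. That value is then a deterministic function of the profile and one $\lambda\in\Lambda$, and the executed policy is greedy, so the action noise above also disappears. The price is restating the lemma for that value.
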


\begin{proof}
    This proof is modified from the proof to lemma 11 in \cite{zhang2024settlingsamplecomplexityonline}, and we show here the parts where the proofs differ. First we write by direct calculation
    \begin{align*}
        & \sum_{k=1}^{K} \sum_{h=1}^{H} \mathbb{V} \left( \hat{P}_{s_h^k, a_h^k, h}^k , \hat{V}_{h+1,\tilde{r}}^{\pi^k} \right) 
        = \sum_{k=1}^{K} \sum_{h=1}^{H} \left(\left\langle \hat{P}_{s_h^k, a_h^k, h}^k , (\hat{V}_{h+1,\tilde{r}}^{\pi^k})^2 \right\rangle - \left\langle \hat{P}_{s_h^k, a_h^k, h}^k , \hat{V}_{h+1,\tilde{r}}^{\pi^k} \right\rangle^2 \right) \\
        = \, & \sum_{k=1}^{K} \sum_{h=1}^{H} \left\langle\hat{P}_{s_h^k, a_h^k, h}^k - P_{s_h^k, a_h^k, h}^k, ( \hat{V}_{h+1,\tilde{r}}^{\pi^k} )^2\right\rangle 
        + \sum_{k=1}^{K} \sum_{h=1}^{H} \left\langle P_{s_h^k, a_h^k, h}^k - \bos{1}_{s_{h+1}^k}, (\hat{V}_{h+1,\tilde{r}}^{\pi^k})^2 \right\rangle \\
        & \quad + \sum_{k=1}^K\sum_{h=2}^H(\hat{V}_{h,\tilde{r}}^{\pi^k}(s_{h}^k))^2 - \sum_{k=1}^{K} \sum_{h=1}^{H} \left\langle \hat{P}_{s_h^k, a_h^k, h}^k , \hat{V}_{h+1,\tilde{r}}^{\pi^k} \right\rangle^2 \\
        \le \, & \sum_{k=1}^{K} \sum_{h=1}^{H} \left\langle\hat{P}_{s_h^k, a_h^k, h}^k - P_{s_h^k, a_h^k, h}^k, ( \hat{V}_{h+1,\tilde{r}}^{\pi^k} )^2\right\rangle 
        + \sum_{k=1}^{K} \sum_{h=1}^{H} \left\langle P_{s_h^k, a_h^k, h}^k - \bos{1}_{s_{h+1}^k}, (\hat{V}_{h+1,\tilde{r}}^{\pi^k})^2 \right\rangle \\
        & \quad + \sum_{k=1}^K\sum_{h=1}^H\left( \hat{V}_{h,\tilde{r}}^{\pi^k}(s_{h}^k) + \ip{\hat{P}_{s_h^k,a_h^k,h}^k}{\hat{V}_{h+1,\tilde{r}}^{\pi^k}}\right) \left(\hat{V}_{h,\tilde{r}}^{\pi^k}(s_{h}^k) - \ip{\hat{P}_{s_h^k,a_h^k,h}^k}{\hat{V}_{h+1,\tilde{r}}^{\pi^k}}\right),\\
        \intertext{and since the value function estimates are bounded by $H$,}
        \le \, & \sum_{k=1}^{K} \sum_{h=1}^{H} \left\langle\hat{P}_{s_h^k, a_h^k, h}^k - P_{s_h^k, a_h^k, h}^k, ( \hat{V}_{h+1,\tilde{r}}^{\pi^k} )^2\right\rangle 
        + \sum_{k=1}^{K} \sum_{h=1}^{H} \left\langle P_{s_h^k, a_h^k, h}^k - \bos{1}_{s_{h+1}^k}, (\hat{V}_{h+1,\tilde{r}}^{\pi^k})^2 \right\rangle \\
        & \quad + 2H\sum_{k=1}^K\sum_{h=1}^H\max\left\{\hat{V}_{h,\tilde{r}}^{\pi^k}(s_h^k) - \langle \hat{P}_{s_h^k,a_h^k,h}, \hat{V}_{h+1,\tilde{r}}^{\pi^k}\rangle, \, 0\right\} \\
        \le \, & \sum_{k=1}^{K} \sum_{h=1}^{H} \left\langle\hat{P}_{s_h^k, a_h^k, h}^k - P_{s_h^k, a_h^k, h}^k, ( \hat{V}_{h+1,\tilde{r}}^{\pi^k} )^2\right\rangle 
        + \sum_{k=1}^{K} \sum_{h=1}^{H} \left\langle P_{s_h^k, a_h^k, h}^k - \bos{1}_{s_{h+1}^k}, (\hat{V}_{h+1,\tilde{r}}^{\pi^k})^2 \right\rangle \\
        & \quad + 2H\sum_{k=1}^K\sum_{h=1}^H\max\left\{ \hat{V}_{h,\tilde{r}}^{\pi^k}(s_h^k) - \hat{Q}_{h,\tilde{r}}^{\pi^k}(s_h^k,a_h^k) + \hat{Q}_{h,\tilde{r}}^{\pi^k}(s_h^k,a_h^k) - \langle \hat{P}_{s_h^k,a_h^k,h}, \hat{V}_{h+1,\tilde{r}}^{\pi^k}\rangle, \, 0\right\}. \\
        \intertext{By definition of update rule of $\hat{Q}$ functions, we have}
        \le \, & \sum_{k=1}^{K} \sum_{h=1}^{H} \left\langle\hat{P}_{s_h^k, a_h^k, h}^k - P_{s_h^k, a_h^k, h}^k, ( \hat{V}_{h+1,\tilde{r}}^{\pi^k} )^2\right\rangle 
        + \sum_{k=1}^{K} \sum_{h=1}^{H} \left\langle P_{s_h^k, a_h^k, h}^k - \bos{1}_{s_{h+1}^k}, (\hat{V}_{h+1,\tilde{r}}^{\pi^k})^2 \right\rangle \\
        & \quad + 2H\sum_{k=1}^K\sum_{h=1}^Hr_h(s_h^k,a_h^k) + 2H\sum_{k=1}^K\sum_{h=1}^Hb_{h,r}^{k,\pi^k}(s_h^k,a_h^k) + 2H\sum_{k=1}^K\sum_{h=1}^H\max\{\xi_h^k,0\},
    \end{align*}
    where $\xi_h^k \defeq \hat{V}_{h,\tilde{r}}^{\pi^k}(s_h^k) - \hat{Q}_{h,\tilde{r}}^{\pi^k}(s_h^k,a_h^k) = \sum_{a\in\cA}\pi^k(a|s_h^k)\hat{Q}_{h,\tilde{r}}^{\pi^k}(s_h^k,a) - \hat{Q}_{h,\tilde{r}}^{\pi^k}(s_h^k,a_h^k)$ is a zero-mean random variable conditional on $\pi^k$ bounded by $H$. By the results of lemma 10 and 11 in \cite{zhang2024settlingsamplecomplexityonline}, we finally bound 
    $$
        \sum_{k=1}^{K} \sum_{h=1}^{H} \mathbb{V} \left( \hat{P}_{s_h^k, a_h^k, h}^k , \hat{V}_{h+1,\tilde{r}}^{\pi^k} \right) \le \tilde{O}(H^2K + \sqrt{H^5K} + SAH^3).
    $$
    Similarly we can show that with probability at least \( 1 - 3SAHK\delta' \),
    \begin{align*}
        & \sum_{k=1}^K \sum_{h=1}^H \mathbb{V}\left(P_{s_h^k, a_h^k, h}, \hat{V}_{h+1,\tilde{r}}^{\pi^k}\right) = \sum_{k=1}^K \sum_{h=1}^H\left\langle P_{s_h^k, a_h^k, h},\left(V_{h+1}^k\right)^2\right\rangle-\sum_{k=1}^K \sum_{h=1}^H\left(\left\langle P_{s_h^k, a_h^k, h}, V_{h+1}^k\right\rangle\right)^2 \\
        = \, & \sum_{k=1}^K \sum_{h=1}^H\left\langle P_{s_h^k, a_h^k, h}-\bos{1}_{s_{h+1}^k},\left(V_{h+1}^k\right)^2\right\rangle + \sum_{k=1}^K \sum_{h=2}^H\left(V_h^k\left(s_h^k\right)\right)^2-\sum_{k=1}^K \sum_{h=1}^H\left(\left\langle P_{s_h^k, a_h^k, h}, V_{h+1}^k\right\rangle\right)^2, \\
        \intertext{and we invoke the similar argument as above,}
        \leq \, & \sum_{k=1}^K \sum_{h=1}^H\left\langle P_{s_h^k, a_h^k, h}-\bos{1}_{s_{h+1}^k},\left(V_{h+1}^k\right)^2\right\rangle + 2 H \sum_{k=1}^K \sum_{h=1}^H \max \left\{V_h^k\left(s_h^k\right)-\left\langle P_{s_h^k, a_h^k, h}, V_{h+1}^k\right\rangle, 0\right\} \\
        \leq \, & \sum_{k=1}^K \sum_{h=1}^H\left\langle P_{s_h^k, a_h^k, h}-\bos{1}_{s_{h+1}^k},\left(V_{h+1}^k\right)^2\right\rangle + 2 H \sum_{k=1}^K \sum_{h=1}^H \max \left\{V_h^k\left(s_h^k\right)-\left\langle\widehat{P}_{s_h^k, a_h^k, h}, V_{h+1}^k\right\rangle, 0\right\} \\ 
        & \quad + 2 H \sum_{k=1}^K \sum_{h=1}^H \max \left\{\left\langle\widehat{P}_{s_h^k, a_h^k, h}^k-P_{s_h^k, a_h^k, h}, V_{h+1}^k\right\rangle, 0\right\} \\
        \leq \, & \sum_{k=1}^K \sum_{h=1}^H\left\langle P_{s_h^k, a_h^k, h}-\bos{1}_{s_{h+1}^k},\left(V_{h+1}^k\right)^2\right\rangle + 2H\sum_{k=1}^K\sum_{h=1}^Hr_h(s_h^k,a_h^k) + 2H\sum_{k=1}^K\sum_{h=1}^Hb_{h,r}^{k,\pi^k}(s_h^k,a_h^k) \\
        & \quad + 2H\sum_{k=1}^K\sum_{h=1}^H\max\{\xi_h^k,0\} + 2 H \sum_{k=1}^K \sum_{h=1}^H \max \left\{\left\langle\widehat{P}_{s_h^k, a_h^k, h}^k-P_{s_h^k, a_h^k, h}, V_{h+1}^k\right\rangle, 0\right\} \\
    \end{align*}
    By the results of lemma 10 and 11 in \cite{zhang2024settlingsamplecomplexityonline}, we finally bound 
    $$
        \sum_{k=1}^K \sum_{h=1}^H \mathbb{V}\left(P_{s_h^k, a_h^k, h}, \hat{V}_{h+1,\tilde{r}}^{\pi^k}\right) \le \tilde{O}(H^2K + \sqrt{H^5K} + SAH^3).
    $$
\end{proof}

\section{Primal-dual Optimization Analysis}
\begin{lemma}\label{lem:bounddualvar}
    For the relaxed feasibility setting, let $b' = b + \tau$, for some $\tau > 0$, we have 
    \[\hat{\lambda}^{k,*}\le \frac{H}{\tau}.\]
    For the strict feasibility setting, let $b' = b - \Delta$, for some $\Delta\in (0,\zeta)$, we have
    \[\hat{\lambda}^{k,*}\le \frac{H}{\zeta - \Delta}.\]
\end{lemma}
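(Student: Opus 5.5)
The plan is the standard Slater-point argument for bounding optimal dual variables, applied to the empirical problem $\hat{\cP}^k$ in \cref{eq:emp_CMDP}. Write the dual function $D(\lambda) = \max_\pi \hat{V}_{1,\tilde{r}}^\pi(s_1^k) - \lambda(\hat{V}_{1,\underline{c}}^\pi(s_1^k) - b')$. Since $\hat{\lambda}^{k,*}$ minimizes $D$ over $\lambda\ge 0$, we have $D(\hat{\lambda}^{k,*}) \le D(0) = \max_\pi \hat{V}_{1,\tilde{r}}^\pi(s_1^k) \le H$, using the clipping $\hat{Q}_{h,\tilde{r}}\le H$ in \cref{algo}. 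On the other hand, for any fixed policy $\bar\pi$,
\begin{equation*}
D(\hat{\lambda}^{k,*}) \ge \hat{V}_{1,\tilde{r}}^{\bar\pi}(s_1^k) + \hat{\lambda}^{k,*}\bigl(b' - \hat{V}_{1,\underline{c}}^{\bar\pi}(s_1^k)\bigr) \ge \hat{\lambda}^{k,*}\bigl(b' - \hat{V}_{1,\underline{c}}^{\bar\pi}(s_1^k)\bigr),
\end{equation*}
because $\hat{V}_{1,\tilde{r}}^{\bar\pi}\ge 0$ (rewards and bonuses are nonnegative). Hence $\hat{\lambda}^{k,*} \le H / (b' - \hat{V}_{1,\underline{c}}^{\bar\pi}(s_1^k))$ whenever the denominator is positive, and it remains to exhibit a policy with a large empirical slack.

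Choose $\bar\pi = \pi_c^*$, the minimal-cost policy, which is a fixed policy independent of the data. The key step is the optimism of the cost estimate for a fixed policy: $\hat{V}_{1,\underline{c}}^{\pi_c^*}(s_1^k) \le V_{1,c}^{\pi_c^*}(s_1^k) = b - \zeta$. This is the cost analogue of \cref{lem:optimism} (the same inequality already used for $\intpol$ in \cref{eq:pd-bound-estV}), proved by backward induction using the Bernstein bonus $b_{h,c}$ to dominate $(\hat{P}-P)V_{h+1,c}^{\pi_c^*}$, together with the clipping at $0$ in $\hat{Q}_{h,\underline{c}}$, which only helps since true cost values are nonnegative. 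I would simply invoke that lemma, on the same high-probability event.

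Plugging in the two cases: for strict feasibility $b' - \hat{V}_{1,\underline{c}}^{\pi_c^*}(s_1^k) \ge (b-\Delta) - (b-\zeta) = \zeta-\Delta > 0$ for $\Delta\in(0,\zeta)$, giving $\hat{\lambda}^{k,*}\le H/(\zeta-\Delta)$. For relaxed feasibility, $b' - \hat{V}_{1,\underline{c}}^{\pi_c^*}(s_1^k) \ge b+\tau - (b-\zeta) = \zeta+\tau \ge \tau$, since $\zeta\ge 0$ whenever the original problem is feasible (the existence of $\pi^*$ requires this). This gives $\hat{\lambda}^{k,*}\le H/\tau$. Equivalently, one could take $\bar\pi=\pi^*$ there and use $\hat{V}_{1,\underline{c}}^{\pi^*}\le V_{1,c}^{\pi^*}\le b$.

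The main obstacle is justifying the optimism step for $\pi_c^*$: the bonus in \cref{eq:bonus} uses the empirical variance of $\hat{V}_{h+1,\underline{c}}^{\pi}$ rather than of the true value, so the induction needs the usual Bernstein-with-empirical-variance argument. Since $\pi_c^*$ is data-independent, I expect the argument of \cref{lem:optimism} to transfer verbatim with the inequalities reversed for costs. A minor subtlety is that the saddle point must exist; this follows from the finite-dimensional occupancy-measure LP formulation and strong duality under the strictly feasible point $\pi_c^*$.
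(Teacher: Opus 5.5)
Your proposal is correct and follows the paper's proof: you evaluate the Lagrangian at $\hat{\lambda}^{k,*}$ with the data-independent policy $\pi_c^*$, use \cref{lem:optimism} to get empirical slack $b' - \hat{V}_{1,\underline{c}}^{\pi_c^*}(s_1^k) \ge (b'-b)+\zeta$, and bound the reward values by $H$. The only difference is that you bound the dual value by $D(0)\le H$ using minimality of $\hat{\lambda}^{k,*}$, whereas the paper invokes strong duality to identify it with $\hat{V}_{1,\tilde{r}}^{\hat{\pi}^{k,*}}(s_1^k)$. Your version is slightly cleaner because this lemma does not need strong duality, only that $D$, a convex function that is coercive given positive slack, attains its minimum; that matters because the $\pi$-dependent bonuses and clipping make the paper's LP justification delicate.
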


\begin{proof}
    Writing the empirical CMDP in \cref{eq:emp_CMDP} in its Lagrangian form,
    \[
        \hat{V}_{1,\tilde{r}}^{\hat{\pi}^{k,*}}(s_1^k) = \max_\pi \min_{\lambda \geq 0} \hat{V}_{1,\tilde{r}}^{\pi}(s_1^k) - \lambda \left( \hat{V}_{1,\underline{c}}^{\pi}(s_1^k) - b' \right)
    \]
    Using the linear programming formulation of CMDPs in terms of the state-occupancy measures \(\mu\), we know that both the objective and the constraint are linear functions of \(\mu\), and strong duality holds w.r.t. \(\mu\). Since \(\mu\) and \(\pi\) have a one-to-one mapping, we can switch the min and the max, implying,
    \[
        \hat{V}_{1,\tilde{r}}^{\hat{\pi}^{k,*}}(s_1^k) = \min_{\lambda \geq 0} \max_\pi \hat{V}_{1,\tilde{r}}^{\pi}(s_1^k) - \lambda \left( \hat{V}_{1,\underline{c}}^{\pi}(s_1^k) - b' \right)
    \]
    Since \(\hat{\lambda}^{k,*}\) is the optimal dual variable for the empirical CMDP in \cref{eq:emp_CMDP} and recall $\pi_c^*\in\argmax_\pi b - V_{1,r}^\pi(s_1)$, we have
    \begin{align*}
        \hat{V}_{1,\tilde{r}}^{\hat{\pi}^{k,*}}(s_1^k) & = \max_\pi \hat{V}_{1,\tilde{r}}^{\pi}(s_1^k) - \hat{\lambda}^{k,*} \left( \hat{V}_{1,\underline{c}}^{\pi}(s_1^k) - b' \right) \\
        & \ge \hat{V}_{1,\tilde{r}}^{\pi_c^*}(s_1^k) - \hat{\lambda}^{k,*}\left( \hat{V}_{1,\underline{c}}^{\pi_c^*}(s_1^k) - b' \right) \\
        & = \hat{V}_{1,\tilde{r}}^{\pi_c^*}(s_1^k) + \hat{\lambda}^{k,*}\left( (b' - b) + (b - V_{1,c}^{\pi_c^*}(s_1^k)) + (V_{1,c}^{\pi_c^*}(s_1^k) - \hat{V}_{1,\underline{c}}^{\pi_c^*}(s_1^k)) \right) \\
        \intertext{Note that $\pi_c^*$ is a fixed policy and by \cref{lem:optimism} we have $V_{1,c}^{\pi_c^*}(s_1^k) - \hat{V}_{1,\underline{c}}^{\pi_c^*}(s_1^k) \ge 0$. Recall that $\zeta = b - V_{1,c}^{\pi_c^*}(s_1)$, and we have}
        & \ge \hat{V}_{1,\tilde{r}}^{\pi^*_c}(s_1^k) + \hat{\lambda}^{k,*}\left( (b' - b) + \zeta \right). \\
    \end{align*}
    For the relaxed feasibility setting, we let $b' = b + \tau$ for some $\tau > 0$. Then we have 
    \[
        \hat{\lambda}^{k,*} \le \frac{\hat{V}_{1,\tilde{r}}^{\hat{\pi}^{k,*}}(s_1^k) - \hat{V}_{1,\tilde{r}}^{\pi_c^*}(s_1^k)}{\tau + \zeta} \le \frac{H}{\tau}.
    \]
    Note that the shift $\tau > 0$ is introduced to get rid of $\zeta$ in the results of the relaxed feasibility setting.
    
    For the strict feasibility setting, let $b' = b - \Delta$ for some $\Delta \in (0,\zeta)$, we have
    \[
        \hat{\lambda}^{k,*} \le \frac{\hat{V}_{1,\tilde{r}}^{\hat{\pi}^{k,*}}(s_1^k) - \hat{V}_{1,\tilde{r}}^{\pi_c^*}(s_1^k)}{-\Delta + \zeta} \le \frac{H}{\zeta - \Delta}.
    \]
\end{proof}

\begin{lemma}\label{lem:boundviolation}
    Let $U > \hat{\lambda}^{k,*}$ and for any $\tilde{\pi}$ s.t.
    \[
    \hat{V}_{1,\tilde{r}}^{\hat{\pi}^{k,*}}(s_1^k) - \hat{V}_{1,\tilde{r}}^{\tilde{\pi}}(s_1^k) +U\left(\hat{V}_{1,\underline{c}}^{\tilde{\pi}}(s_1^k) - b'\right)_+ \leq B,
    \]
    we have
    \begin{equation*}
        \left(\hat{V}_{1,\underline{c}}^{\tilde{\pi}}(s_1^k) - b'\right)_+ \leq \frac{B}{U - \hat{\lambda}^{k,*}}.
    \end{equation*}
\end{lemma}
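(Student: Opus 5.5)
The argument is the standard one from constrained convex optimization (as in \cite{jain2022towards,vaswani2022nearoptimal}): it combines strong duality for the empirical CMDP \cref{eq:emp_CMDP} with the saddle-point property of $(\hat{\pi}^{k,*},\hat{\lambda}^{k,*})$. Write $\hat{V}^\star \defeq \hat{V}_{1,\tilde{r}}^{\hat{\pi}^{k,*}}(s_1^k)$ and $\nu \defeq (\hat{V}_{1,\underline{c}}^{\tilde{\pi}}(s_1^k) - b')_+$.

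\textbf{Step 1 (lower bound on the reward of $\tilde{\pi}$).} As in the proof of \cref{lem:bounddualvar}, strong duality holds via the occupancy-measure LP, so
\[
\hat{V}^\star = \max_\pi \hat{V}_{1,\tilde{r}}^{\pi}(s_1^k) - \hat{\lambda}^{k,*}\bigl(\hat{V}_{1,\underline{c}}^{\pi}(s_1^k) - b'\bigr).
\]
Plugging $\pi=\tilde{\pi}$ into the maximum gives
\[
\hat{V}^\star \ge \hat{V}_{1,\tilde{r}}^{\tilde{\pi}}(s_1^k) - \hat{\lambda}^{k,*}\bigl(\hat{V}_{1,\underline{c}}^{\tilde{\pi}}(s_1^k) - b'\bigr) \ge \hat{V}_{1,\tilde{r}}^{\tilde{\pi}}(s_1^k) - \hat{\lambda}^{k,*}\nu .
\]
The second inequality uses $\hat{\lambda}^{k,*}\ge 0$ and $x\le x_+$. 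Equivalently, $\hat{V}^\star - \hat{V}_{1,\tilde{r}}^{\tilde{\pi}}(s_1^k) \ge -\hat{\lambda}^{k,*}\nu$.

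\textbf{Step 2 (combine with the hypothesis).} Substituting this into the assumed inequality yields
\[
-\hat{\lambda}^{k,*}\nu + U\nu \le \hat{V}^\star - \hat{V}_{1,\tilde{r}}^{\tilde{\pi}}(s_1^k) + U\nu \le B,
\]
that is, $(U-\hat{\lambda}^{k,*})\nu\le B$. Since $U>\hat{\lambda}^{k,*}$, dividing by $U-\hat{\lambda}^{k,*}$ gives the claim.

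\textbf{Main obstacle.} The only delicate point is justifying the dual representation of $\hat{V}^\star$, namely that $\hat{\lambda}^{k,*}$ is an optimal dual variable with zero duality gap for the empirical problem. This is handled as in \cref{lem:bounddualvar}: the objective and the constraint are linear in the occupancy measure, and the optimistic cost estimates satisfy Slater's condition, witnessed by $\pi_c^*$. Everything else is algebra.
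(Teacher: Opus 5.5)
Your proof is correct. It is a more direct version of the paper's argument rather than a different one.

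The paper introduces the perturbation function $\nu(\gamma)=\max_\pi\{\hat{V}_{1,\tilde{r}}^{\pi}(s_1^k) : \hat{V}_{1,\underline{c}}^{\pi}(s_1^k)\le b'-\gamma\}$ (unrelated to your $\nu$). It then proceeds in two steps:
\begin{itemize}
\item It uses the same identity you use, $\nu(0)=\max_\pi[\hat{V}_{1,\tilde{r}}^{\pi}(s_1^k)-\hat{\lambda}^{k,*}(\hat{V}_{1,\underline{c}}^{\pi}(s_1^k)-b')]$, to prove the sensitivity bound $\hat{\lambda}^{k,*}\gamma\le\nu(0)-\nu(\gamma)$ for every $\gamma$.
\item It evaluates this at $\tilde{\gamma}=-(\hat{V}_{1,\underline{c}}^{\tilde{\pi}}(s_1^k)-b')_+$, where $\nu(\tilde{\gamma})\ge\hat{V}_{1,\tilde{r}}^{\tilde{\pi}}(s_1^k)$ because $\tilde{\pi}$ satisfies the loosened constraint.
\end{itemize}
Chaining these two steps gives exactly your Step~1 inequality $\hat{V}^\star-\hat{V}_{1,\tilde{r}}^{\tilde{\pi}}(s_1^k)\ge-\hat{\lambda}^{k,*}\nu$. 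You get it in one line by evaluating the Lagrangian at $\tilde{\pi}$, and the concluding algebra is the same in both.

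The paper's detour is the standard sensitivity-analysis presentation from the primal-dual works it cites. It proves something for all $\gamma$, but only one value of $\gamma$ is ever used, so your route is shorter and loses nothing. It also makes clear that the only input is the saddle-point (zero duality gap) property of $(\hat{\pi}^{k,*},\hat{\lambda}^{k,*})$. The paper's proof invokes this in the same way, as ``strong duality.''

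You should treat that property as a premise of the lemma, not something your proof establishes. The occupancy-measure linearity that you and \cref{lem:bounddualvar} cite does not literally hold for these estimates. The bonuses $b_{h,g}^{k,\pi}$ depend on $\pi$ through $\VV(\hat{P}_{s,a,h},\hat{V}_{h+1,g}^{\pi})$, and the $\hat{Q}$-iterates are clipped.
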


\begin{proof}
    Define \(\nu(\gamma) = \max_\pi \{\hat{V}_{1,\tilde{r}}^{\pi}(s_1^k) \mid \hat{V}_{1,\underline{c}}^{\pi}(s_1^k) \le b' - \gamma\}\) and note that by definition, $\nu(0) = \hat{V}_{1,\tilde{r}}^{\hat{\pi}^{k,*}}(s_1^k)$, and that $\nu$ is a decreasing function for its argument.
    Then, for any policy $\pi$ s.t. $\hat{V}_{1,\underline{c}}^{\pi}(s_1^k) \le b' - \gamma$, we have
    \begin{align*}
        \hat{V}_{1,\tilde{r}}^{\pi}(s_1^k) - \hat{\lambda}^{k,*}(\hat{V}_{1,\underline{c}}^{\pi}(s_1^k) - b') & \leq \max_\pi \hat{V}_{1,\tilde{r}}^{\pi}(s_1^k) - \hat{\lambda}^{k,*}(\hat{V}_{1,\underline{c}}^{\pi}(s_1^k) - b') \\
        & = \hat{V}_{1,\tilde{r}}^{\hat{\pi}^{k,*}}(s_1^k) - \hat{\lambda}^{k,*}(\hat{V}_{1,\underline{c}}^{\hat{\pi}^{k,*}}(s_1^k) - b') \\
        & = \hat{V}_{1,\tilde{r}}^{\hat{\pi}^{k,*}}(s_1^k) = \nu(0) \quad (\text{by strong duality})
    \end{align*}
    This further implies
    \begin{align*}
        \nu(0) - \hat{\lambda}^{k,*}\gamma & \geq \hat{V}_{1,\tilde{r}}^{\pi}(s_1^k) - \hat{\lambda}^{k,*}(\hat{V}_{1,\underline{c}}^\pi(s_1^k) - b') - \hat{\lambda}^{k,*}\gamma \\
        & = \hat{V}_{1,\tilde{r}}^{\pi}(s_1^k) - \hat{\lambda}^{k,*}(\hat{V}_{1,\underline{c}}^\pi(s_1^k) - (b' - \gamma)) 
    \end{align*}
    Since this holds for any policy $\pi$ s.t. $\hat{V}_{1,\underline{c}}^{\pi}(s_1^k) \le b' - \gamma$, we have
    \begin{equation*}
        \nu(0) - \hat{\lambda}^{k,*}\gamma \geq \max_\pi \{\hat{V}_{1,\tilde{r}}^{\pi}(s_1^k) \mid \hat{V}_{1,\underline{c}}^{\pi}(s_1^k) \le b' - \gamma\} = \nu(\gamma),
    \end{equation*}
    and thus
    \begin{equation*}
        \hat{\lambda}^{k,*}\gamma \leq \nu(0) - \nu(\gamma).
    \end{equation*}
    Now we choose $\tilde{\gamma} = - (\hat{V}_{1,\underline{c}}^{\tilde{\pi}}(s_1^k) - b')_+ $,
    \begin{align*}
        U - \hat{\lambda}^{k,*}\abs{\tilde{\gamma}} & = \hat{\lambda}^{k,*}\tilde{\gamma} + U\abs{\tilde{\gamma}} \\
        & \leq \nu(0) - \nu(\tilde{\gamma}) + U\abs{\tilde{\gamma}} \\
        & = \hat{V}_{1,\tilde{r}}^{\hat{\pi}^{k,*}}(s_1^k) - \hat{V}_{1,\tilde{r}}^{\tilde{\pi}}(s_1^k) + U\abs{\tilde{\gamma}} + \hat{V}_{1,\tilde{r}}^{\tilde{\pi}}(s_1^k) - \nu(\tilde{\gamma}) \\
        & = \hat{V}_{1,\tilde{r}}^{\hat{\pi}^{k,*}}(s_1^k) - \hat{V}_{1,\tilde{r}}^{\tilde{\pi}}(s_1^k) + U(\hat{V}_{1,\underline{c}}^{\tilde{\pi}}(s_1^k) - b')_+ + \hat{V}_{1,\tilde{r}}^{\tilde{\pi}}(s_1^k) - \nu(\tilde{\gamma}) \\
        & \leq B + \hat{V}_{1,\tilde{r}}^{\tilde{\pi}}(s_1^k) - \nu(\tilde{\gamma}).
    \end{align*}
    Now let us bound $\nu(\tilde{\gamma})$:
    \begin{align*}
        \nu(\tilde{\gamma}) & = \max_\pi \{\hat{V}_{1,\tilde{r}}^{\pi}(s_1^k) \mid \hat{V}_{1,\underline{c}}^{\pi}(s_1^k) \le b' + (\hat{V}_{1,\underline{c}}^{\tilde{\pi}}(s_1^k) - b')_+\} \\
        & \ge \max_\pi \{\hat{V}_{1,\tilde{r}}^{\pi}(s_1^k) \mid \hat{V}_{1,\underline{c}}^{\pi}(s_1^k) \le \hat{V}_{1,\underline{c}}^{\tilde{\pi}}(s_1^k)\} \quad (\text{tightening the constraint}) \\
        & \ge \hat{V}_{1,\tilde{r}}^{\tilde{\pi}}(s_1^k).
    \end{align*}
    Finally,
    \begin{equation*}
        U - \hat{\lambda}^{k,*}\abs{\tilde{\gamma}} \leq B \implies (\hat{V}_{1,\underline{c}}^{\tilde{\pi}}(s_1^k) - b')_+ \le \frac{B}{U - \hat{\lambda}^{k,*}}.
    \end{equation*}
\end{proof}

\begin{lemma}\label{lem:dual-regret}
    \begin{equation*}
        \hat{V}_{1,\tilde{r}}^{\hat{\pi}^{k,*}}(s_1^k) - \hat{V}_{1,\tilde{r}}^{\pi^k}(s_1^k) + \lambda\left(\hat{V}_{1,\underline{c}}^{\pi^k}(s_1^k) - b'\right) \le \frac{1}{T}\sum_{t=1}^T(\hat{\lambda}_t^k - \lambda)\left(b' - \hat{V}_{1,\underline{c}}^{\hat{\pi}_t^k}(s_1^k) \right).
    \end{equation*}
\end{lemma}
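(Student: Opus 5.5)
We prove \cref{lem:dual-regret} using only two facts: the optimality of the primal iterates $\hat{\pi}_t^k$ for the Lagrangian at $\hat{\lambda}_t^k$, and linearity of the value functions in the mixture $\pi^k = \frac{1}{T}\sum_t\hat{\pi}_t^k$. The dual update rule plays no role here; it enters only later, when the right-hand side is bounded in \cref{lem:bound-dual-regret}.

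\emph{Step 1: compare each iterate with the saddle point.} By the primal update \cref{eq:primal-update} (with \cref{lem:primal-update}), $\hat{\pi}_t^k$ maximizes $\hat{V}_{1,\tilde{r}}^\pi(s_1^k) - \hat{\lambda}_t^k(\hat{V}_{1,\underline{c}}^\pi(s_1^k) - b')$ over all $\pi$. Evaluating this at $\pi = \hat{\pi}^{k,*}$ gives, for every $t$,
\begin{equation*}
\hat{V}_{1,\tilde{r}}^{\hat{\pi}^{k,*}}(s_1^k) - \hat{\lambda}_t^k\bigl(\hat{V}_{1,\underline{c}}^{\hat{\pi}^{k,*}}(s_1^k) - b'\bigr) \le \hat{V}_{1,\tilde{r}}^{\hat{\pi}_t^k}(s_1^k) - \hat{\lambda}_t^k\bigl(\hat{V}_{1,\underline{c}}^{\hat{\pi}_t^k}(s_1^k) - b'\bigr).
\end{equation*}
Since $\hat{\pi}^{k,*}$ is feasible for \cref{eq:emp_CMDP}, i.e.\ $\hat{V}_{1,\underline{c}}^{\hat{\pi}^{k,*}}(s_1^k)\le b'$, and $\hat{\lambda}_t^k\ge 0$ because $\Lambda\subset[0,U]$, the term $-\hat{\lambda}_t^k(\hat{V}_{1,\underline{c}}^{\hat{\pi}^{k,*}}(s_1^k)-b')$ on the left is nonnegative. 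Dropping it yields
\begin{equation*}
\hat{V}_{1,\tilde{r}}^{\hat{\pi}^{k,*}}(s_1^k) - \hat{V}_{1,\tilde{r}}^{\hat{\pi}_t^k}(s_1^k) \le \hat{\lambda}_t^k\bigl(b' - \hat{V}_{1,\underline{c}}^{\hat{\pi}_t^k}(s_1^k)\bigr).
\end{equation*}

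\emph{Step 2: average and add the $\lambda$ term.} Average the previous inequality over $t\in[T]$. Then add to both sides
\begin{equation*}
\lambda\bigl(\hat{V}_{1,\underline{c}}^{\pi^k}(s_1^k) - b'\bigr) = \frac{1}{T}\sum_{t=1}^T \lambda\bigl(\hat{V}_{1,\underline{c}}^{\hat{\pi}_t^k}(s_1^k) - b'\bigr).
\end{equation*}
The right-hand side becomes exactly $\frac{1}{T}\sum_t(\hat{\lambda}_t^k-\lambda)(b'-\hat{V}_{1,\underline{c}}^{\hat{\pi}_t^k}(s_1^k))$. On the left, the same linearity gives $\frac{1}{T}\sum_t \hat{V}_{1,\tilde{r}}^{\hat{\pi}_t^k}(s_1^k) = \hat{V}_{1,\tilde{r}}^{\pi^k}(s_1^k)$. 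Together these give the claimed inequality.

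\emph{Main obstacle: justifying linearity in the mixture.} Both identities above need $\hat{V}_{1,g}^{\pi^k} = \frac{1}{T}\sum_t \hat{V}_{1,g}^{\hat{\pi}_t^k}$ for $g\in\{\tilde{r},\underline{c}\}$. I will interpret $\pi^k$ as the policy that samples $t$ uniformly at the start of the episode and then follows $\hat{\pi}_t^k$; this is the same interpretation the paper already uses in \cref{lem:opt-error} and \cref{lem:PhatVhatbound}. Under it, linearity holds by definition. One caveat is that the bonuses in \cref{eq:bonus} and the truncations $\min\{\cdot,H\}$, $\max\{\cdot,0\}$ depend on $\pi$, so linearity would fail if $\hat{V}^{\pi^k}$ were instead computed by running the recursion directly on the mixed policy. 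I will therefore state explicitly that $\hat{V}^{\pi^k}$ means the average of the per-iterate estimates, which is the convention consistent with how the paper uses $\pi^k$ elsewhere.
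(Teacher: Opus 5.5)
Your proposal is correct and matches the paper's proof step for step. You use optimality of $\hat{\pi}_t^k$ against $\hat{\pi}^{k,*}$ for the Lagrangian at $\hat{\lambda}_t^k$, drop the constraint term via feasibility of $\hat{\pi}^{k,*}$, average over $t$, add the $\lambda$ term, and use linearity in the mixture $\pi^k$. You also make explicit two points the paper leaves implicit: that $\hat{\lambda}_t^k \ge 0$ is needed to drop the constraint term, and that $\hat{V}^{\pi^k}$ must be read as the average of the per-iterate estimates, since the bonuses and truncations depend on $\pi$.
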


\begin{proof}
    For any episode $k$ and any time step $t$ in the primal-dual iterations, the primal update ensures that for any policy $\pi$,
    \begin{equation*}
        \hat{V}_{1,\tilde{r}}^{\hat{\pi}_t^k}(s_1^k) - \hat{\lambda}_t^k(\hat{V}_{1,\underline{c}}^{\hat{\pi}_t^k}(s_1^k) - b') \ge \hat{V}_{1,\tilde{r}}^{\pi}(s_1^k) - \hat{\lambda}_t^k(\hat{V}_{1,\underline{c}}^{\pi}(s_1^k) - b').
    \end{equation*}
    Let $\pi$ be $\hat{\pi}^{k,*}$, and rearrange:
    \begin{equation*}
        \hat{V}_{1,\tilde{r}}^{\hat{\pi}^{k,*}}(s_1^k) - \hat{V}_{1,\tilde{r}}^{\hat{\pi}_t^k}(s_1^k) \le \hat{\lambda}_t^k(\hat{V}_{1,\underline{c}}^{\hat{\pi}^{k,*}}(s_1^k) - \hat{V}_{1,\underline{c}}^{\hat{\pi}_t^k}(s_1^k)).
    \end{equation*}
    Note that $\hat{\pi}^{k,*}$ is the solution to the empirical CMDP in \cref{eq:emp_CMDP}, thus $\hat{V}_{1,\underline{c}}^{\hat{\pi}^{k,*}}(s_1^k) \le b'$, and we have
    \begin{equation*}
        \hat{V}_{1,\tilde{r}}^{\hat{\pi}^{k,*}}(s_1^k) - \hat{V}_{1,\tilde{r}}^{\hat{\pi}_t^k}(s_1^k) \le \hat{\lambda}_t^k(b' - \hat{V}_{1,\underline{c}}^{\hat{\pi}_t^k}(s_1^k)).
    \end{equation*}
    Take average over $T$ iterations,
    \begin{equation*}        
        \frac{1}{T}\sum_{t=1}^T \left( \hat{V}_{1,\tilde{r}}^{\hat{\pi}^{k,*}}(s_1^k) - \hat{V}_{1,\tilde{r}}^{\hat{\pi}_t^k}(s_1^k) \right) \le \frac{1}{T}\sum_{t=1}^T \hat{\lambda}_t^k\left( b' - \hat{V}_{1,\underline{c}}^{\hat{\pi}_t^k}(s_1^k) \right).
    \end{equation*}
    To use \cref{lem:bound-dual-regret}, we rewrite as
    \begin{equation*}
        \frac{1}{T}\sum_{t=1}^T \left( \hat{V}_{1,\tilde{r}}^{\hat{\pi}^{k,*}}(s_1^k) - \hat{V}_{1,\tilde{r}}^{\hat{\pi}_t^k}(s_1^k) \right) + \frac{1}{T}\sum_{t=1}^T\lambda\left(\hat{V}_{1,\underline{c}}^{\hat{\pi}_t^k}(s_1^k) - b'\right) \le \frac{1}{T}\sum_{t=1}^T (\hat{\lambda}_t^k - \lambda)\left( b' - \hat{V}_{1,\underline{c}}^{\hat{\pi}_t^k}(s_1^k) \right).
    \end{equation*}
    Note that $\hat{V}_{1,\tilde{r}}^{\hat{\pi}^{k,*}}(s_1^k)$ is constant throughout $T$ primal-dual iterations, and $\pi^k$ is a mixture policy, then
    \begin{equation*}
        \hat{V}_{1,\tilde{r}}^{\hat{\pi}^{k,*}}(s_1^k) - \hat{V}_{1,\tilde{r}}^{\pi^k}(s_1^k) + \lambda\left(\hat{V}_{1,\underline{c}}^{\pi^k}(s_1^k) - b'\right) \le \frac{1}{T}\sum_{t=1}^T(\hat{\lambda}_t^k - \lambda)\left(b' - \hat{V}_{1,\underline{c}}^{\hat{\pi}_t^k}(s_1^k) \right).
    \end{equation*}
\end{proof}

\begin{lemma}\label{lem:bound-dual-regret}
    Recall we set $\eta = \frac{U}{H\sqrt{T}}$. For any episode $k$, any $\lambda\in [0,U]$, and primal and dual updates in \cref{eq:primal-update,eq:dual-update}, we have
    \[\frac{1}{T}\sum_{t=1}^T\left(\hat{\lambda}_t^k - \lambda\right)\left(b' - \hat{V}_{1,\underline{c}}^{\hat{\pi}_t^k}(s_1^k)\right) \le 2\veps_1 H\sqrt{T} + \frac{UH}{\sqrt{T}}.\]
\end{lemma}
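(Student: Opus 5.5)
This is the standard online projected (sub)gradient descent regret argument for the dual sequence. The twist is that the Euclidean projection onto $[0,U]$ is replaced by rounding onto the grid $\Lambda$, and that rounding error has to be carried along.

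Write $g_t \defeq \hat{V}_{1,\underline{c}}^{\hat{\pi}_t^k}(s_1^k) - b'$. The dual update \cref{eq:dual-update} is then $\hat{\lambda}_{t+1}^k = \cR_\Lambda[\hat{\lambda}_t^k + \eta g_t]$, and the quantity to bound is $\frac1T\sum_t(\hat{\lambda}_t^k-\lambda)(-g_t) = \frac1T\sum_t(\lambda-\hat{\lambda}_t^k)g_t$. Since $\hat{V}_{1,\underline{c}}\in[0,H]$ and $b'\in[0,H]$ (up to the shift), we have $\abs{g_t}\le H$.

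Fix $\lambda\in[0,U]$ and write $\tilde\lambda_{t+1} = \hat{\lambda}_t^k+\eta g_t$ for the unrounded step. Because $\Lambda$ is an $\veps_1$-net of $[0,U]$, the rounding satisfies $\abs{\hat{\lambda}_{t+1}^k - \Pi_{[0,U]}(\tilde\lambda_{t+1})}\le \veps_1$. Projection onto $[0,U]$ is nonexpansive and $\lambda\in[0,U]$, so
\[
\abs{\hat{\lambda}_{t+1}^k-\lambda}\le \abs{\tilde\lambda_{t+1}-\lambda}+\veps_1 .
\]
Squaring gives
\[
(\hat{\lambda}_{t+1}^k-\lambda)^2\le(\tilde\lambda_{t+1}-\lambda)^2+2\veps_1\abs{\tilde\lambda_{t+1}-\lambda}+\veps_1^2 .
\]
Expanding $(\tilde\lambda_{t+1}-\lambda)^2=(\hat{\lambda}_t^k-\lambda)^2+2\eta g_t(\hat{\lambda}_t^k-\lambda)+\eta^2g_t^2$ then yields the one-step inequality
\[
2\eta(\lambda-\hat{\lambda}_t^k)g_t\le (\hat{\lambda}_t^k-\lambda)^2-(\hat{\lambda}_{t+1}^k-\lambda)^2+\eta^2H^2+2\veps_1(U+\eta H)+\veps_1^2 .
\]

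Summing over $t=1,\dots,T$ telescopes the squared distances. With $\hat{\lambda}_1^k=0$, the first term is bounded by $\lambda^2\le U^2$. Dividing by $2\eta T$ gives
\[
\frac1T\sum_t(\lambda-\hat{\lambda}_t^k)g_t\le \frac{U^2}{2\eta T}+\frac{\eta H^2}{2}+\frac{\veps_1(U+\eta H)}{\eta}+\frac{\veps_1^2}{2\eta}.
\]
Plugging in $\eta = U/(H\sqrt T)$ makes the first two terms equal to $\frac{UH}{2\sqrt T}$ each, for a total of $\frac{UH}{\sqrt T}$. The rounding term becomes $\veps_1 H\sqrt T+\veps_1 H+\frac{\veps_1^2H\sqrt T}{2U}$. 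Since $\veps_1\le U$ and $T\ge1$, this is at most $2\veps_1H\sqrt T$ after mild absorption (if needed, one can use $\veps_1 H\le \veps_1 H\sqrt T$ and $\veps_1^2/(2U)\le\veps_1/2$, giving $2.5\,\veps_1H\sqrt T$, which is fine up to the constant stated).

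The main obstacle is handling the rounding cleanly. Rounding is not a projection, so nonexpansiveness fails, and the iterates may leave the region where the usual Pythagorean inequality holds. The plan is to compare with the projected point and pay $\veps_1$ per step, using the bound $\abs{\tilde\lambda_{t+1}-\lambda}\le U+\eta H$. The $1/\eta$ amplification of this per-step error is exactly what produces the $\veps_1 H\sqrt T$ term. If constants are tight, I would instead bound $\abs{\hat{\lambda}_{t+1}^k-\lambda}\le U$ directly, since $\hat{\lambda}_{t+1}^k$ and $\lambda$ both lie in $[0,U]$, and use the cross term $2\veps_1\cdot U$, which gives exactly $\veps_1H\sqrt T$ plus lower-order terms.
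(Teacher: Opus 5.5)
Your proposal is correct and follows the same argument as the paper: compare the rounded iterate to the projection onto $[0,U]$, pay $\veps_1$ per step, square, telescope from $\hat{\lambda}_1^k = 0$, and set $\eta = U/(H\sqrt{T})$. The one difference is the order of steps: the paper squares before invoking nonexpansiveness, so the cross term uses the projected iterate and is bounded by $2\veps_1 U$ (exactly your fallback), which gives the stated constant $2$ for every $T$, whereas your main line with the bound $U+\eta H$ only meets that constant when $T \ge 4$.
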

\begin{proof}
    In this proof, for the simplicity of notations, we will only focus on primal-dual iterations in an arbitrary episode $k\in [K]$, and thus we will drop all dependency on $k$ when the context is clear. The dual update is given by
    \[
        \hat{\lambda}_{t+1} = \cR_{\Lambda}[\hat{\lambda}_t - \eta(b' - \hat{V}_{1,\underline{c}}^{\hat{\pi}_t}(s_1))].
    \]
    Particularly, we denote
    \[\hat{\lambda}'_{t+1} = P_{[0,U]}[\hat{\lambda}_t - \eta(b' - \hat{V}_{1,\underline{c}}^{\hat{\pi}_t}(s_1))].\]
    First, we shall look at $\abs{\hat{\lambda}_{t} - \lambda}$:
    \begin{align*}
        \abs{\hat{\lambda}_{t+1} - \lambda} & = \abs{\cR_\Lambda[\hat{\lambda}'_{t+1}] - \lambda} = \abs{\cR_\Lambda[\hat{\lambda}'_{t+1}] - \hat{\lambda}'_{t+1} + \hat{\lambda}'_{t+1} - \lambda} \\
        & \le \abs{\cR_\Lambda[\hat{\lambda}'_{t+1}] - \hat{\lambda}'_{t+1}} + \abs{\hat{\lambda}'_{t+1} - \lambda} \\
        & \le \veps_1 + \abs{\hat{\lambda}'_{t+1} - \lambda}.
    \end{align*}
    Take square on both sides,
    \begin{align*}
        \abs{\hat{\lambda}_{t+1} - \lambda}^2 & \le \veps_1^2 + 2\veps_1\abs{\hat{\lambda}'_{t+1} - \lambda} + \abs{\hat{\lambda}'_{t+1} - \lambda}^2 \\
        & \le \veps_1^2 + 2\veps_1 U + \abs{\hat{\lambda}'_{t+1} - \lambda}^2 \\
        & \le \veps_1^2 + 2\veps_1 U + \abs{\hat{\lambda}_t - \eta(b' - \hat{V}_{1,\underline{c}}^{\hat{\pi}_t}(s_1)) - \lambda}^2 \\
        & = \veps_1^2 + 2\veps_1 U + \abs{\hat{\lambda}_t - \lambda}^2 - 2\eta(b' - \hat{V}_{1,\underline{c}}^{\hat{\pi}_t}(s_1))(\hat{\lambda}_t - \lambda) + \eta^2(b' - \hat{V}_{1,\underline{c}}^{\hat{\pi}_t}(s_1))^2 \\
        & \le \veps_1^2 + 2\veps_1 U + \abs{\hat{\lambda}_t - \lambda}^2 - 2\eta(b' - \hat{V}_{1,\underline{c}}^{\hat{\pi}_t}(s_1))(\hat{\lambda}_t - \lambda) + \eta^2H^2.
    \end{align*}
    Now we have
    \[
        (\hat{\lambda}_t - \lambda)(b' - \hat{V}_{1,\underline{c}}^{\hat{\pi}_t}(s_1)) \le \frac{\veps_1^2+ 2\veps_1 U + \eta^2H^2}{2\eta} + \frac{\abs{\hat{\lambda}_t - \lambda}^2 - \abs{\hat{\lambda}_{t+1} - \lambda}^2}{2\eta}.
    \]
    By taking average over $T$ iterations and telescoping, we have
    \begin{align*}
        \frac{1}{T}\sum_{t=1}^T(\hat{\lambda}_t - \lambda)(b' - \hat{V}_{1,\underline{c}}^{\hat{\pi}_t}(s_1)) & \le \frac{\veps_1^2 + 2\veps_1 U + \eta^2H^2}{2\eta} + \frac{\abs{\lambda_1 - \lambda}^2 - \abs{\lambda_{T+1} - \lambda}^2}{2\eta T} \\ 
        & \le \frac{\veps_1^2 + 2\veps_1 U + \eta^2H^2}{2\eta} + \frac{\abs{\lambda_1 - \lambda}^2}{2\eta T} \\ 
        & \le \frac{\veps_1^2 + 2\veps_1 U + \eta^2H^2}{2\eta} + \frac{U^2}{2\eta T} \\
        & \le \frac{2\veps_1 U}{\eta} + \frac{\eta H^2}{2} + \frac{U^2}{2\eta T} \\
        & = 2\veps_1 H\sqrt{T} + \frac{UH}{\sqrt{T}}.
    \end{align*}
\end{proof}

\section{Useful Lemmas}
\begin{lemma}[Primal update]\label{lem:primal-update}
    We re-state the primal update in \cref{eq:primal-update}:
    $$\hat{\pi}_t^k = \argmax_\pi \hat{V}_{1,\tilde{r}}^\pi(s_1) - \hat{\lambda}_t^k\left(\hat{V}_{1,\underline{c}}^\pi(s_1) - b'\right) = \argmax_\pi \hat{V}_{1,\tilde{r} - \hat{\lambda}_t^k\underline{c}}^\pi(s_1).$$
\end{lemma}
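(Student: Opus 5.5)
The plan is to prove two things: that the value function at the initial state is linear in the per-step reward signal, and that $b'$ contributes only an additive constant to the objective. Fix $k$ and $t$, and write $\lambda=\hat{\lambda}_t^k$. For any policy $\pi$, the empirical value $\hat{V}^\pi_{1,g}(s_1)$ under the model $\hat{P}$ admits the occupancy-measure representation
\begin{equation*}
\hat{V}^\pi_{1,g}(s_1)=\sum_{h=1}^H\sum_{s,a}\hat{\mu}^\pi_h(s,a)\,g_h(s,a),
\end{equation*}
where $\hat{\mu}^\pi_h(s,a)$ is the probability of visiting $(s,a)$ at step $h$ when $\pi$ runs from $s_1$ in the empirical model $\hat{P}$. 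This representation follows from unrolling the Bellman recursion $\hat{V}^\pi_{h,g}(s)=\sum_a\pi(a|s)\bigl(g_h(s,a)+\hat{P}_{s,a,h}\hat{V}^\pi_{h+1,g}\bigr)$ backward from $\hat{V}^\pi_{H+1,g}=0$, with an induction on $h$. The key point is that $\hat{\mu}^\pi$ depends only on $\pi$ and $\hat{P}$, not on $g$.

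Given this, linearity is immediate. For the reward signals $\tilde{r}$ and $\underline{c}$ and the scalar $\lambda$,
\begin{equation*}
\hat{V}^\pi_{1,\tilde{r}}(s_1)-\lambda\hat{V}^\pi_{1,\underline{c}}(s_1)=\sum_{h,s,a}\hat{\mu}^\pi_h(s,a)\bigl(\tilde{r}_h(s,a)-\lambda\underline{c}_h(s,a)\bigr)=\hat{V}^\pi_{1,\tilde{r}-\lambda\underline{c}}(s_1).
\end{equation*}
The term $\lambda b'$ does not depend on $\pi$, so adding it to the objective leaves the maximizer unchanged. This gives both equalities in the primal-update formula, with the $\argmax$ taken as a set.

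The main obstacle is the truncations in \cref{algo}. The rule $\hat{Q}_{h,\tilde{r}}=\min\{\cdot,H\}$ and the rule $\hat{Q}_{h,\underline{c}}=\max\{\cdot,0\}$ break exact linearity of the recursion. In addition, the bonuses $b^{k,\pi}_{h,g}$ depend on $\pi$ through the variance term. To handle this, I would interpret $\tilde{r}$ and $\underline{c}$ as the effective per-step signals produced by the clipped recursion. Concretely, I would define
\begin{equation*}
\tilde{r}_h(s,a)\defeq\hat{Q}^\pi_{h,\tilde{r}}(s,a)-\hat{P}_{s,a,h}\hat{V}^\pi_{h+1,\tilde{r}},
\end{equation*}
and define $\underline{c}_h$ the same way from $\hat{Q}^\pi_{h,\underline{c}}$. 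With these definitions the recursion is again an exact Bellman equation, so the occupancy identity applies verbatim for that $\pi$.

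With the $\pi$-dependence of the signals made explicit, the identity $\hat{V}^\pi_{1,\tilde{r}}-\lambda\hat{V}^\pi_{1,\underline{c}}=\hat{V}^\pi_{1,\tilde{r}-\lambda\underline{c}}$ holds for every $\pi$ individually, and that is all the lemma asserts. The step I expect to matter is the one the paper relies on afterwards: that this maximizer can be computed by backward induction on the combined signal. I would state that as a consequence, valid when the signals are treated as fixed during the maximization, which is the convention used in \cref{eq:primal-update}.
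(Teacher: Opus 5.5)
Your proof is correct and is essentially the paper's. The occupancy-measure identity $\hat{V}^\pi_{1,g}(s_1)=\sum_{h,s,a}\hat{\mu}^\pi_h(s,a)\,g_h(s,a)$ is the paper's linearity-of-expectation step $\EE_{\hat{P},\pi}\bigl[\sum_h (\tilde{r}_h-\hat{\lambda}_t^k\underline{c}_h)(S_h,A_h)\bigr]$ written out, and, as in the paper, you then drop the $\pi$-independent term $\hat{\lambda}_t^k b'$.

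Your discussion of the clipping and the $\pi$-dependent bonuses goes beyond the paper, which silently treats $\tilde{r}$ and $\underline{c}$ as fixed, unclipped reward functions. Your effective-signal reading does make the identity hold for every $\pi$. As you note, though, it does not by itself justify solving the primal step by backward induction.
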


\begin{proof}
    Note the estimate value functions $\hat{V}_{1,g}^\pi$ of any reward-like function $g:\cS\times\cA\rightarrow\RR$ is given by:
    $$\hat{V}_{1,g}^\pi(s) = \EE_{\hat{P},\pi}\left[\sum_{t=1}^H g(S_t, A_t)|S_1 = s\right].$$
    Then we have
    \begin{align*}
        \hat{\pi}_t^k & = \argmax_\pi \hat{V}_{1,\tilde{r}}^\pi(s_1) - \hat{\lambda}_t^k\left(\hat{V}_{1,\underline{c}}^\pi(s_1) - b'\right)\\
        & = \argmax_\pi \EE_{\hat{P},\pi}\left[\sum_{t=1}^H \tilde{r}(S_t, A_t)|S_1 = s_1\right] - \hat{\lambda}_t^k\cdot\EE_{\hat{P},\pi}\left[\sum_{t=1}^H \underline{c}(S_t, A_t)|S_1 = s_1\right] + \hat{\lambda}_t^k\cdot b'. \\
        & = \argmax_\pi \EE_{\hat{P},\pi}\left[\sum_{t=1}^H (\tilde{r} - \hat{\lambda}_t^k \underline{c})(S_t, A_t)|S_1 = s_1\right] \\
        & = \argmax_\pi \hat{V}_{1,\tilde{r} - \hat{\lambda}_t^k\underline{c}}^\pi(s_1),
    \end{align*}
    where the second-to-last equality follows from the linearity of expectation and reward-like functions, and the last equality follows from the definition of estimate value functions.
\end{proof}

\begin{lemma}[Optimism]\label{lem:optimism}
    With probability at least $1-2\delta'$, for any fixed policy $\pi$, reward function $g$ and $s\in\cS, h\in[H]$, we have
    \[\hat{V}_{h,\tilde{g}}^{\pi}(s) \ge V_{h,g}^{\pi}(s) \ge \hat{V}_{h,\underline{g}}^{\pi}(s).\]
\end{lemma}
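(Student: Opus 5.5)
The plan is a backward induction over $h$, from $h = H+1$ down to $h = 1$. The inductive step uses a Bernstein-type concentration bound that is valid because $\pi$ is fixed. At $h = H+1$ all values are zero, so both inequalities hold trivially. I will treat the optimistic side $\hat{V}_{h,\tilde{g}}^{\pi}\ge V_{h,g}^{\pi}$ in detail; the pessimistic side $V_{h,g}^{\pi}\ge \hat{V}_{h,\underline{g}}^{\pi}$ is symmetric, with the bonus subtracted and the clipping at $0$ instead of $H$. Each side fails with probability at most $\delta'$, which gives the stated $1-2\delta'$.

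For the inductive step on the optimistic side, fix $(s,a,h)$ and assume $\hat{V}_{h+1,\tilde{g}}^{\pi}\ge V_{h+1,g}^{\pi}$ pointwise. If the clipping at $H$ is active, then $\hat{Q}_{h,\tilde{g}}^{\pi}(s,a) = H \ge Q_{h,g}^{\pi}(s,a)$ directly. Otherwise, I write
\begin{equation*}
\hat{Q}_{h,\tilde{g}}^{\pi}(s,a) - Q_{h,g}^{\pi}(s,a) = b_{h,g}(s,a) + \hat{P}_{s,a,h}\bigl(\hat{V}_{h+1,\tilde{g}}^{\pi} - V_{h+1,g}^{\pi}\bigr) + \bigl(\hat{P}_{s,a,h} - P_{s,a,h}\bigr)V_{h+1,g}^{\pi}.
\end{equation*}
The middle term is nonnegative by the induction hypothesis, since $\hat{P}_{s,a,h}$ is a probability vector. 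It therefore suffices to show that $b_{h,g}(s,a)$ dominates $\bigl(P_{s,a,h}-\hat{P}_{s,a,h}\bigr)V_{h+1,g}^{\pi}$. Averaging over $\pi_h(\cdot\mid s)$ then gives the claim for $\hat{V}_{h,\tilde g}^\pi$.

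The fixed-policy assumption is exactly what makes this step work. $V_{h+1,g}^{\pi}$ is a deterministic vector with entries in $[0,H]$. Moreover, $\hat{P}_{s,a,h}$ is built from the $N_h(s,a)$ i.i.d.\ samples of the most recent doubling batch, as in \cref{lem:keylemma}. So empirical Bernstein (or Bernstein via \cref{lem:freedman}) applied to this fixed vector, with a union bound over $(s,a,h)$, over the at most $\log_2 K$ batch indices and over $k$ (absorbed into the definition of $\delta'$), gives
\begin{equation*}
\bigl|(\hat{P}_{s,a,h}-P_{s,a,h})V_{h+1,g}^{\pi}\bigr| \lesssim \sqrt{\frac{\VV(\hat{P}_{s,a,h},V_{h+1,g}^{\pi})\log(1/\delta')}{N_h(s,a)}} + \frac{H\log(1/\delta')}{N_h(s,a)}.
\end{equation*}

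The main obstacle is a mismatch between this bound and the bonus in \cref{eq:bonus}: the bonus uses $\VV(\hat{P}_{s,a,h},\hat{V}_{h+1,g}^{\pi})$, whereas Bernstein naturally produces the variance of the true value $V_{h+1,g}^{\pi}$. To bridge the gap I will follow the standard monotone-bonus argument of \cite{zhang2024settlingsamplecomplexityonline}. One can show that
\begin{equation*}
f(v) = \hat{P}_{s,a,h}v + c_1\sqrt{\frac{\VV(\hat{P}_{s,a,h},v)\log(1/\delta')}{N_h(s,a)}} + c_2\frac{H\log(1/\delta')}{N_h(s,a)}
\end{equation*}
is coordinatewise nondecreasing in $v\in[0,H]^S$, provided $c_1^2\le c_2$ up to constants; this is the reason for the specific choices $c_1 = 460/9$ and $c_2 = 544/9$. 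Monotonicity of $f$ together with $\hat{V}_{h+1,\tilde g}^\pi\ge V_{h+1,g}^\pi$ gives $f(\hat{V}_{h+1,\tilde g}^\pi)\ge f(V_{h+1,g}^\pi)\ge P_{s,a,h}V_{h+1,g}^\pi$, where the last inequality is the Bernstein event. This closes the induction. On the pessimistic side, the analogous monotonicity of $v\mapsto \hat{P}v - \text{bonus}(v)$ and the clipping at $0$ give the lower bound. If monotonicity proves awkward, I would fall back on bounding $\bigl|\sqrt{\VV(\hat P,\hat V)}-\sqrt{\VV(\hat P,V)}\bigr|\le \norm{\hat V - V}_{\hat P}$ and absorbing the difference via the lower-order $c_2$ term.
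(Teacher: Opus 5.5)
Your skeleton matches the paper's: backward induction, a fixed $\pi$ so that $V_{h+1,g}^{\pi}$ is deterministic, empirical Bernstein on the latest doubling batch, and trivial handling of the clipped case. However, the step that carries the argument is false as stated. Abbreviate $\hat P=\hat P_{s,a,h}$, $N=N_h(s,a)$, $L=\log(1/\delta')$.

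The map $f(v)=\hat P v+c_1\sqrt{\VV(\hat P,v)L/N}+c_2HL/N$ is not coordinatewise nondecreasing. No relation between $c_1$ and $c_2$ can fix this, because the $c_2$ term does not depend on $v$. For $\hat P=(1-\epsilon,\epsilon)$ and $v=(x,y)$ with $y<x$, one gets $\partial f/\partial y=\epsilon-c_1\sqrt{\epsilon(1-\epsilon)L/N}$, which is negative for small $\epsilon$. The constants are not explained by $c_1^2\le c_2$ either: $c_1^2\approx 2612$ while $c_2\approx 60$.

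The ingredient you are missing is the paper's max-form surrogate. Since $c_1\ge 20/3$ and $c_2=400/9+16$,
\begin{equation*}
c_1\sqrt{\tfrac{\VV(\hat P,v)L}{N}}+c_2\tfrac{HL}{N}\;\ge\;\max\Bigl\{\tfrac{20}{3}\sqrt{\tfrac{\VV(\hat P,v)L}{N}},\ \tfrac{400}{9}\tfrac{HL}{N}\Bigr\}+16\,\tfrac{HL}{N}.
\end{equation*}
The map $v\mapsto\ip{p}{v}+\max\{\cdots\}$ is monotone. The variance branch is active only when $\sqrt{N\VV(p,v)}\ge\frac{20}{3}H\sqrt{L}$, and this caps the negative part of its derivative in $v(s)$ at $p(s)\abs{v(s)-\ip{p}{v}}/H\le p(s)$. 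With the max in place, your chain $f(\hat V)\ge f(V)\ge PV$ goes through, because the max also dominates the empirical Bernstein deviation $2\sqrt{\VV(\hat P,V)L/N}+\frac{14}{3}HL/N$.

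Your fallback can be made into a complete and more elementary proof, but not as phrased. The reduction you stated first (drop $\hat P(\hat V-V)\ge 0$ and show that the bonus dominates $(P-\hat P)V$) cannot be established in general. When $\VV(\hat P,\hat V)\ll\VV(\hat P,V)$, for instance when $\hat V$ is constant on the support of $\hat P$, the bonus has essentially no variance part. Meanwhile $(P-\hat P)V$ fluctuates at scale $\sqrt{\VV(\hat P,V)/N}$. For the same reason, the $c_2$ term alone cannot absorb $\norm{\hat V-V}_{\hat P}\sqrt{L/N}$, which can be of order $H\sqrt{L/N}$.

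You must keep the middle term. Let $d=\hat V_{h+1,\tilde g}^{\pi}-V_{h+1,g}^{\pi}\in[0,H]^S$. The triangle inequality for the $\hat P$-standard deviation, together with $\VV(\hat P,d)\le\hat P d^2\le H\,\hat P d$, gives $\sqrt{\VV(\hat P,V)}\le\sqrt{\VV(\hat P,\hat V)}+\sqrt{H\,\hat P d}$. So in the unclipped case, on the Bernstein event,
\begin{equation*}
\hat Q_{h,\tilde g}^{\pi}(s,a)-Q_{h,g}^{\pi}(s,a)\;\ge\;(c_1-2)\sqrt{\tfrac{\VV(\hat P,\hat V)L}{N}}+\bigl(c_2-\tfrac{14}{3}\bigr)\tfrac{HL}{N}+\hat P d-2\sqrt{\tfrac{H(\hat P d)L}{N}}\;\ge\;\Bigl(\sqrt{\hat P d}-\sqrt{\tfrac{HL}{N}}\Bigr)^2\;\ge\;0
\end{equation*}
as soon as $c_1\ge 2$ and $c_2\ge 17/3$. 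The pessimistic side is identical with $d=V_{h+1,g}^{\pi}-\hat V_{h+1,\underline g}^{\pi}\in[0,H]^S$.

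Compared with the paper's route, this avoids the surrogate and the derivative computation. It also shows what is really needed: $c_2$ must exceed a constant times the square of the Bernstein coefficient. That is the correct form of your $c_1^2\lesssim c_2$ intuition.
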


\begin{proof}
    First, we define the following function
    \[
        f(p, v, n) \defeq \langle p, v\rangle+\max \left\{\frac{20}{3} \sqrt{\frac{\mathbb{V}(p, v) \log \frac{1}{\delta^{\prime}}}{n}}, \frac{400}{9} \frac{H \log \frac{1}{\delta^{\prime}}}{n}\right\}
    \]
    for any vector $p \in \Delta^S$, any non-negative vector $v \in \mathbb{R}^S$ obeying $\|v\|_{\infty} \leq H$, and any positive integer $n$. We claim that
    \begin{equation}\label{claim}
        f(p, v, n) \text{ is non-decreasing in each entry of } v.
    \end{equation}
    To justify this claim, consider any $1 \leq s \leq S$, and let us freeze $p, n$ and all but the $s$-th entries of $v$. It then suffices to observe that (i) $f$ is a continuous function, and (ii) except for at most two possible choices of $v(s)$ that obey $\frac{20}{3} \sqrt{\frac{V(p, v) \log \frac{1}{\delta'}}{n}}=\frac{400}{9} \frac{H \log \frac{1}{\delta'}}{n}$, one can use the properties of $p$ and $v$ to calculate
    \begin{align*}
        \frac{\partial f(p, v, n)}{\partial v(s)} & =p(s)+\frac{20}{3} \mathds{1}\left\{\frac{20}{3} \sqrt{\frac{\mathbb{V}(p, v) \log \frac{1}{\delta^{\prime}}}{n}} \geq \frac{400}{9} \frac{H \log \frac{1}{\delta^{\prime}}}{n}\right\} \frac{p(s)(v(s)-\langle p, v\rangle) \sqrt{\log \frac{1}{\delta^{\prime}}}}{\sqrt{n \mathbb{V}(p, v)}} \\
        & =p(s)+\mathds{1}\left\{\sqrt{n \mathbb{V}(p, v) \log \frac{1}{\delta^{\prime}}} \geq \frac{20}{3} H \log \frac{1}{\delta^{\prime}}\right\} \frac{\frac{20}{3} H \log \frac{1}{\delta^{\prime}}}{\sqrt{n \mathbb{V}(p, v) \log \frac{1}{\delta^{\prime}}}} \cdot \frac{p(s)(v(s)-\langle p, v\rangle)}{H} \\
        & \geq \min \left\{p(s)+p(s) \frac{(v(s)-\langle p, v\rangle)}{H}, p(s)\right\} \\
        & \geq p(s) \min \left\{\frac{H+v(s)-\langle p, v\rangle}{H}, 1\right\} \geq 0,
    \end{align*}
    thus establishing the claim. We now proceed to the proof of \cref{lem:optimism}. Consider any $(h, k, s, a)$, and we divide into two cases.
    
    Case 1: $N_h^k(s, a) \leq 2$. In this case, the following trivial bounds arise directly from the value function initiation:
    \[
        \hat{Q}_{h,\tilde{g}}^{\pi}(s, a) = H \geq Q_{h,g}^{\pi}(s, a) \ge 0 = \hat{Q}_{h,\underline{g}}^{\pi}(s,a),
    \]
    \[
        \hat{V}_{h,\tilde{g}}^{\pi}(s) = H \geq V_{h,g}^{\pi}(s) \ge 0 = \hat{V}_{h,\underline{g}}^{\pi}(s).
    \]

    Case 2: $N_h^k(s, a)>2$. Suppose now that $\hat{Q}_{h+1,\tilde{g}}^\pi \geq Q_{h+1,g}^{\pi} \ge \hat{Q}_{h+1,\underline{g}}^{\pi}$, which also implies that $\hat{V}_{h+1,\tilde{g}}^\pi \geq V_{h+1,g}^{\pi} \ge \hat{V}_{h+1,\underline{g}}^\pi$. If $\hat{Q}_{h,\tilde{g}}^\pi(s, a) = H$, then $\hat{Q}_{h,\tilde{g}}^\pi(s, a) \geq Q_{h,g}^{\pi}(s, a)$ holds trivially, and hence it suffices to look at the case with $\hat{Q}_{h,\tilde{g}}^\pi(s, a) < H$. According to the update rule, it holds that
    \begin{equation}\label{eq:opt1}
    \begin{aligned}
        & \hat{Q}_{h,\tilde{g}}^\pi(s, a) \\
        = \, & g_h(s, a)+\left\langle\widehat{P}_{s, a, h}, \hat{V}_{h+1,\tilde{g}}^\pi\right\rangle + c_1 \sqrt{\frac{\mathbb{V}\left(\widehat{P}_{s, a, h}^k, \hat{V}_{h+1,\tilde{g}}^\pi\right) \log \frac{1}{\delta'}}{N_h^k(s, a)}} + c_2 \frac{H \log \frac{1}{\delta'}}{N_h^k(s, a)}\\
        \geq \, & g_h(s, a) + \frac{48 H \log \frac{1}{\delta^{\prime}}}{3 N_h^k(s, a)} + f\left(\widehat{P}_{s, a, h}^k, \hat{V}_{h+1,\tilde{g}}^\pi, N_h^k(s, a)\right) \\
        \geq \, & g_h(s, a) + \frac{48 H \log \frac{1}{\delta^{\prime}}}{3 N_h^k(s, a)} + f\left(\widehat{P}_{s, a, h}^k, V_{h+1,g}^\pi, N_h^k(s, a)\right)
    \end{aligned}
    \end{equation}
    for any $(s, a)$, where the last inequality results from the claim (\cref{claim}) and the hypothesis $\hat{V}_{h+1,\tilde{g}}^\pi \geq V_{h+1,g}^{\pi}$. Moreover, applying Lemma 19, we have
    \begin{align*}
        & \mathbb{P}\left\{\left|\left\langle\widehat{P}_{s, a, h}^k-P_{s, a, h}, V_{h+1,g}^{\pi}\right\rangle\right|>2 \sqrt{\frac{\mathbb{V}\left(\widehat{P}_{s, a, h}^k, V_{h+1,g}^{\pi}\right) \log \frac{1}{\delta^{\prime}}}{N_h^k(s, a)}}+\frac{14 H \log \frac{1}{\delta^{\prime}}}{3 N_h^k(s, a)}\right\} \\
        & \leq \mathbb{P}\left\{\left|\left\langle\widehat{P}_{s, a, h}^k-P_{s, a, h}, V_{h+1,g}^{\pi}\right\rangle\right|>\sqrt{\frac{2 \mathbb{V}\left(\widehat{P}_{s, a, h}^k, V_{h+1,g}^{\pi}\right) \log \frac{1}{\delta^{\prime}}}{N_h^k(s, a)-1}}+\frac{7 H \log \frac{1}{\delta^{\prime}}}{3 N_h^k(s, a)-1}\right\} \leq 2 \delta^{\prime}.
    \end{align*}
    This implies that with probability at least $1 - 2\delta'$,
    \begin{align*}
        & f\left(\widehat{P}_{s, a, h}^k, V_{h+1,g}^{\pi}, N_h^k(s, a)\right)=\left\langle P_{s, a, h}, V_{h+1,g}^{\pi}\right\rangle+\left\langle\widehat{P}_{s, a, h}^k - P_{s, a, h}, V_{h+1,g}^{\pi}\right\rangle \\
        & +\max \left\{\frac{20}{3} \sqrt{\frac{\mathbb{V}\left(\widehat{P}_{s, a, h}^k, V_{h+1,g}^{\pi}\right) \log \frac{1}{\delta^{\prime}}}{N_h^k(s, a)}}, \frac{400}{9} \frac{H \log \frac{1}{\delta^{\prime}}}{N_h^k(s, a)}\right\} \\
        & \geq\left\langle P_{s, a, h}, V_{h+1,g}^{\pi}\right\rangle .
    \end{align*}
    Substitution into \cref{eq:opt1} gives: with probability at least $1 - 2\delta^{\prime}$,
    \[
        \hat{Q}_{h,\tilde{g}}^\pi(s, a) \geq g_h(s, a) + \left\langle P_{s, a, h}, V_{h+1,g}^{\pi}\right\rangle = Q_{h,g}^{\pi}(s, a).
    \]
    The proof for $Q_{h,g}^\pi \ge \hat{Q}_{h,\underline{g}}^\pi$ is analogous and we leave out here.
\end{proof}

\begin{lemma}\label{lem:bound-1/n}
    Recall the definition of \( N^k_h(s^k_h, a^k_h) \) in \cref{algo}. It holds that:
    \[
        \sum_{k=1}^{K} \sum_{h=1}^{H} \frac{1}{\max\{N^k_h(s^k_h, a^k_h), 1\}} \leq 2SAH \log_2 K.
    \]
\end{lemma}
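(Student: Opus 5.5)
We prove the bound of \cref{lem:bound-1/n} by grouping the sum according to the triple $(s,a,h)$ and using the doubling structure of the batch updates in \cref{algo}. Write
\[
\sum_{k=1}^{K}\sum_{h=1}^{H}\frac{1}{\max\{N^k_h(s^k_h,a^k_h),1\}}=\sum_{(s,a,h)\in\cS\times\cA\times[H]}\;\sum_{k:\,(s_h^k,a_h^k)=(s,a)}\frac{1}{\max\{N_h^k(s,a),1\}}.
\]
It then suffices to show that for every fixed $(s,a,h)$ the inner sum is at most $2\log_2 K$; summing over the $SAH$ triples gives the claim.

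Fix $(s,a,h)$ and index its visits by the running total count $n=\bar N_h(s,a)\in\{1,\dots,\bar N\}$, where $\bar N\le K$. The empirical model for $(s,a,h)$ is refreshed exactly when $\bar N_h(s,a)$ hits $1,2,4,\dots,2^{\log_2K}$. The count $N_h^k(s,a)$ in force is the size of the most recently completed batch, i.e.\ the number of samples collected between the previous two trigger points. For a visit with $n\in(2^{j},2^{j+1}]$, $j\ge1$, the last completed batch consists of the samples with indices in $(2^{j-1},2^{j}]$, which has size $2^{j-1}$. There are at most $2^{j}$ such visits, so stage $j$ contributes at most $2^{j}/2^{j-1}=2$.

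The early visits, $n\le 2$, are handled by the $\max\{\cdot,1\}$ truncation: there the count is $0$ or $1$, so these visits contribute at most a constant, which is absorbed by slightly loosening the bookkeeping. The number of stages is at most $\log_2 K$, because $\bar N\le K$. The inner sum is therefore at most $2\log_2 K$, up to the off-by-one constant at the first one or two stages, which is absorbed since the $j\ge1$ stages number at most $\log_2K-1$ when $\bar N\le K$.

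The main obstacle is precisely this indexing. We must pin down which batch's count is in force in episode $k$, that is, whether $N_h^k$ refers to the batch just closed or to the one before it. We must then verify that the ratio of stage length to batch size is at most $2$ uniformly, including the boundary stages. If the convention shifts the count by one batch, the ratio becomes $4$ on a few stages, and we would recover the constant by using $\bar N\le K$ to cap the number of stages more carefully.
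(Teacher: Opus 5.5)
Your argument is correct and follows the paper's proof: decompose by $(s,a,h)$, note that each doubling epoch contributes at most $2$, and use that there are at most $\log_2 K$ epochs. Your explicit tally (the visits $n\le 2$ contribute $2$, and each of the at most $\log_2 K-1$ stages $j\ge 1$ contributes at most $2$) makes the paper's one-line justification precise. It gives exactly $2\log_2 K$ when $K$ is a power of two, as the trigger set in \cref{algo} implicitly assumes.

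The hedging in your last paragraph is unnecessary. \cref{algo} sets $N_h(s,a)$ at each trigger to the size of the batch just closed and replans at the end of that episode, which is precisely the convention you used.
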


\begin{proof}
    In view of the doubling batch update rule, it is easily seen that: for any given \( (s, a, h) \),
    \[
        \sum_{k=1}^{K} \frac{1}{\max\{N^k_h(s^k_h, a^k_h), 1\}} \mathds{1} \left\{ (s, a) = (s^k_h, a^k_h) \right\} \leq 2 \log_2 K,
    \]
    since each \( (s, a, h) \) is associated with at most \( \log_2 K \) epochs. Summing over \( (s, a, h) \) completes the proof.
\end{proof}

\begin{lemma}[Freedman’s inequality]\label{lem:freedman}
    Let \( (M_n)_{n \geq 0} \) be a martingale such that \( M_0 = 0 \) and \( |M_n - M_{n-1}| \leq c \) (\( \forall n \geq 1 \)) hold for some quantity \( c > 0 \). Define
    \[
        \text{Var}_n := \sum_{k=1}^{n} \mathbb{E} \left[ (M_k - M_{k-1})^2 \middle| \mathcal{F}_{k-1} \right]
    \]
    for every \( n \geq 0 \), where \( \mathcal{F}_k \) is the \( \sigma \)-algebra generated by \( (M_1, \dots, M_k) \). Then for any integer \( n \geq 1 \) and any \( \epsilon, \delta > 0 \), one has
    \[
        \mathbb{P} \left[ |M_n| \geq 2 \sqrt{2} \sqrt{\text{Var}_n \log \frac{1}{\delta}} + 2 \sqrt{\epsilon \log \frac{1}{\delta}} + 2 c \log \frac{1}{\delta} \right] \leq 2 \left( \log_2 \left( \frac{n c^2}{\epsilon} \right) + 1 \right) \delta.
    \]
\end{lemma}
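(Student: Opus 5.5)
This is Freedman's inequality in the form of the Li et al./Zhang et al. variant. I would prove it by combining the classical Freedman bound for a \emph{fixed} variance level with a peeling (doubling) argument over the random predictable variance $\text{Var}_n$.

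\textbf{Fixed-level bound.} The starting point is the classical statement: for any fixed $\sigma^2>0$,
\[
\mathbb{P}\left[M_n \ge x,\ \text{Var}_n\le \sigma^2\right]\le \exp\left(-\frac{x^2/2}{\sigma^2+cx/3}\right).
\]
I would derive it by the standard exponential supermartingale argument. For $\theta\in(0,3/c)$, the increments satisfy $\mathbb{E}[e^{\theta D_k}\mid\mathcal{F}_{k-1}]\le \exp\big(\psi(\theta)\,\mathbb{E}[D_k^2\mid\mathcal{F}_{k-1}]\big)$, where $D_k = M_k - M_{k-1}$ and $\psi(\theta)=(e^{\theta c}-1-\theta c)/c^2\le \frac{\theta^2/2}{1-\theta c/3}$. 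Hence $\exp(\theta M_k-\psi(\theta)\text{Var}_k)$ is a supermartingale, and Markov's inequality on the event $\{\text{Var}_n\le\sigma^2\}$ gives the bound after optimizing $\theta$.

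Setting the right-hand side equal to $\delta$ and solving the quadratic in $x$ gives a threshold $x\le \sqrt{2\sigma^2\log(1/\delta)}+\tfrac{2c}{3}\log(1/\delta)$. Applying the same argument to $-M_n$ handles the lower tail, and each fixed level then contributes failure probability at most $2\delta$.

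\textbf{Peeling over the variance.} Since $\text{Var}_n\le nc^2$ almost surely, I would split $[0,nc^2]$ into the bottom piece $[0,\epsilon]$ and dyadic shells $(2^{j-1}\epsilon,2^j\epsilon]$ for $j=1,\dots,J$, with $J=\lceil\log_2(nc^2/\epsilon)\rceil$. On the shell with index $j$ I would apply the fixed-level bound with $\sigma^2=2^j\epsilon$. There $2^j\epsilon\le 2\text{Var}_n$, so the deviation threshold is at most $2\sqrt{\text{Var}_n\log(1/\delta)}+\tfrac{2c}{3}\log(1/\delta)$, which is dominated by the claimed $2\sqrt2\sqrt{\text{Var}_n\log(1/\delta)}+2c\log(1/\delta)$. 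On the bottom piece I would use $\sigma^2=\epsilon$, giving a threshold of $\sqrt{2\epsilon\log(1/\delta)}\le 2\sqrt{\epsilon\log(1/\delta)}$ plus the $c\log(1/\delta)$ term. In every case the claimed threshold $2\sqrt2\sqrt{\text{Var}_n\log\frac1\delta}+2\sqrt{\epsilon\log\frac1\delta}+2c\log\frac1\delta$ dominates the piece-specific one.

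\textbf{Union bound and the main obstacle.} A union bound over the $J+1\le\log_2(nc^2/\epsilon)+1$ pieces (up to ceiling slack absorbed by the generous constants), each costing $2\delta$ for the two-sided bound, gives the stated probability $2(\log_2(nc^2/\epsilon)+1)\delta$.

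The only delicate step is the bookkeeping that makes the piece count and the constants match exactly. In particular, it is simplest to take closed shells and note that when $\epsilon\ge nc^2$ only the bottom piece is needed. Since this is a standard result (e.g.\ Lemma 7 in \cite{zhang2024settlingsamplecomplexityonline}), citing it directly would also be acceptable.
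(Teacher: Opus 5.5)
The paper gives no proof of this lemma; it imports it as a standard tool from \cite{zhang2024settlingsamplecomplexityonline}. Your argument is the standard route to it, and those steps are correct. It consists of the exponential-supermartingale proof of $\mathbb{P}[M_n\ge x,\ \text{Var}_n\le\sigma^2]\le\exp\bigl(-\tfrac{x^2/2}{\sigma^2+cx/3}\bigr)$, the resulting threshold $\sqrt{2\sigma^2\log(1/\delta)}+\tfrac{2c}{3}\log(1/\delta)$, the two-sided application, and peeling over $\text{Var}_n\le nc^2$.

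The step that does not go through as written is the count. Your pieces $[0,\epsilon]$ and $(2^{j-1}\epsilon,2^j\epsilon]$, $1\le j\le\lceil\log_2(nc^2/\epsilon)\rceil$, give failure probability $2(\lceil\log_2(nc^2/\epsilon)\rceil+1)\delta$. This exceeds the claimed $2(\log_2(nc^2/\epsilon)+1)\delta$ whenever $nc^2/\epsilon$ is not a power of two; for $nc^2/\epsilon=3$ you use three pieces against $\log_2 3+1\approx 2.58$.

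The slack in the constants does absorb this, but only if you spend it on coarser pieces. Take the bottom piece $\{\text{Var}_n\le 2\epsilon\}$ with $\sigma^2=2\epsilon$, whose threshold is $2\sqrt{\epsilon\log(1/\delta)}+\tfrac{2c}{3}\log(1/\delta)$. Then take shells $\{2\cdot4^{j-1}\epsilon<\text{Var}_n\le2\cdot4^j\epsilon\}$ with $\sigma^2=2\cdot4^j\epsilon<4\,\text{Var}_n$, whose threshold is below $2\sqrt2\sqrt{\text{Var}_n\log(1/\delta)}+\tfrac{2c}{3}\log(1/\delta)$. Writing $x=nc^2/\epsilon$, this uses one piece when $x\le2$. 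When $x\ge2$ it uses $1+\lceil\log_4(x/2)\rceil\le\tfrac12\log_2x+\tfrac32\le\log_2x+1$ pieces. That is at most the stated count for every $x\ge1$, and it is exactly where the constants $2\sqrt2$ and $2\sqrt{\epsilon}$ come from.

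Separately, your remark that for $\epsilon\ge nc^2$ only the bottom piece is needed yields $2\delta$, which does not meet the stated bound there. For $\epsilon>nc^2$ the right-hand side is below $2\delta$. For $\epsilon>2nc^2$ it is negative, so the lemma cannot hold as literally stated. The statement implicitly requires $\epsilon\le nc^2$ (and $\delta<1$ for the logarithms to make sense), and your proof should make that assumption explicit rather than claim to cover the other regime.
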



\end{document}